\newtheorem{theorem}{Theorem}
\newtheorem{definition}{Definition}
\newtheorem{proposition}{Proposition}
\newtheorem{remark}{Remark}
\newtheorem{lemma}{Lemma}
\begin{document}

\title{Generalist++: A Meta-learning Framework for Mitigating Trade-off in Adversarial Training}

\author{Yisen Wang,~\IEEEmembership{Member,~IEEE,}
        Yichuan Mo, Hongjun Wang, Junyi Li,
        Zhouchen Lin,~\IEEEmembership{Fellow,~IEEE}%
\IEEEcompsocitemizethanks{\IEEEcompsocthanksitem Yisen Wang, Yichuan Mo, and Zhouchen Lin are with State Key Lab of General Artificial Intelligence, School of Intelligence Science and Technology, Peking University. Emails: yisen.wang@pku.edu.cn, mo666666@stu.pku.edu.cn, zlin@pku.edu.cn. \protect\\

\IEEEcompsocthanksitem Junyi Li is with School of Mathematical Sciences, Peking University. Email: 142857pony@stu.pku.edu.cn. \protect\\

\IEEEcompsocthanksitem Hongjun Wang is with School of Computing and Data Science, The University of Hong Kong. Email: hjwang@connect.hku.hk. His main contribution
to this work was done during his intern at Peking University. \protect\\}%
\thanks{}}

\markboth{Journal of \LaTeX\ Class Files,~Vol.~14, No.~8, August~2021}%
{Shell \MakeLowercase{\textit{et al.}}: A Sample Article Using IEEEtran.cls for IEEE Journals}

\IEEEpubid{0000--0000/00\$00.00~\copyright~2021 IEEE}

\maketitle

\begin{abstract}
Despite the rapid progress of neural networks, they remain highly vulnerable to adversarial examples, for which adversarial training (AT) is currently the most effective defense. While AT has been extensively studied, its practical applications expose two major limitations: natural accuracy tends to degrade significantly compared with standard training, and robustness does not transfer well across attacks crafted under different norm constraints. Unlike prior works that attempt to address only one issue within a single network, we propose to partition the overall generalization goal into multiple sub-tasks, each assigned to a dedicated base learner. By specializing in its designated objective, each base learner quickly becomes an expert in its field. In the later stages of training, we interpolate their parameters to form a knowledgeable global learner, while periodically redistributing the global parameters back to the base learners to prevent their optimization trajectories from drifting too far from the shared target. We term this framework Generalist and introduce three variants tailored to different application scenarios. Both theoretical analysis and extensive experiments demonstrate that Generalist achieves lower generalization error and significantly alleviates the trade-off problems compared with baseline methods. Our results suggest that Generalist provides a promising step toward developing fully robust classifiers in the future.

\end{abstract}

\begin{IEEEkeywords}
Adversarial Training, Meta Learning, Natural-robust Trade-off, Universal Robustness.
\end{IEEEkeywords}

\section{Introduction}
\label{sec:intro}

\IEEEPARstart{I}{n} recent years, deep learning has achieved remarkable progress across a wide range of domains, including image classification~\cite{DBLP:conf/cvpr/HeZRS16,huang2017densely,DBLP:journals/corr/ZagoruykoK16}, machine translation~\cite{devlin2019bert,achiam2023gpt}, and speech synthesis~\cite{guo2025evolution,krug2025precisely,lian2025cauchy}. Despite these advances, deep models remain highly vulnerable to adversarial attacks~\cite{bai2025rat,wang2024adversarial,DBLP:conf/cvpr/DongLPS0HL18}, where imperceptible perturbations deliberately added to inputs can drastically degrade performance. Such attacks not only undermine the utility of these systems but may also cause severe consequences in safety-critical applications, such as medical misdiagnosis~\cite{kanca2025enhancing} or traffic accidents~\cite{lu2024adversarial}. To counter these risks, a variety of defense strategies have been proposed, among which adversarial training (AT)\cite{PGD,wang2019dynamic,wang2020improving,mo2022adversarial,sui2025isdat,zhao2025adversarial,liu2025parameter} has emerged as the most effective. AT dynamically generates adversarial examples during training and incorporates them into the optimization process. Despite its effectiveness, AT still suffers from severe trade-off problems that hinder its broader deployment. On the one hand, there exists an \textbf{outer trade-off} between natural and robust accuracy: improving robustness against adversarial perturbations usually comes at the cost of reduced performance on clean samples, as illustrated in Figure \ref{fig:first}(a). On the other hand, an \textbf{inner trade-off} arises across different norm constraints, where enhancing robustness against $\ell_\infty$-bounded attacks typically compromises robustness against $\ell_2$-bounded ones, as shown in Figure \ref{fig:first}(b). These dilemmas have significantly limited the practical applicability of AT in real-world scenarios.

\begin{figure}[!t]
\centering
\subfigure[Trade-off between robustness and accuracy]
{\includegraphics[width=0.48\linewidth]{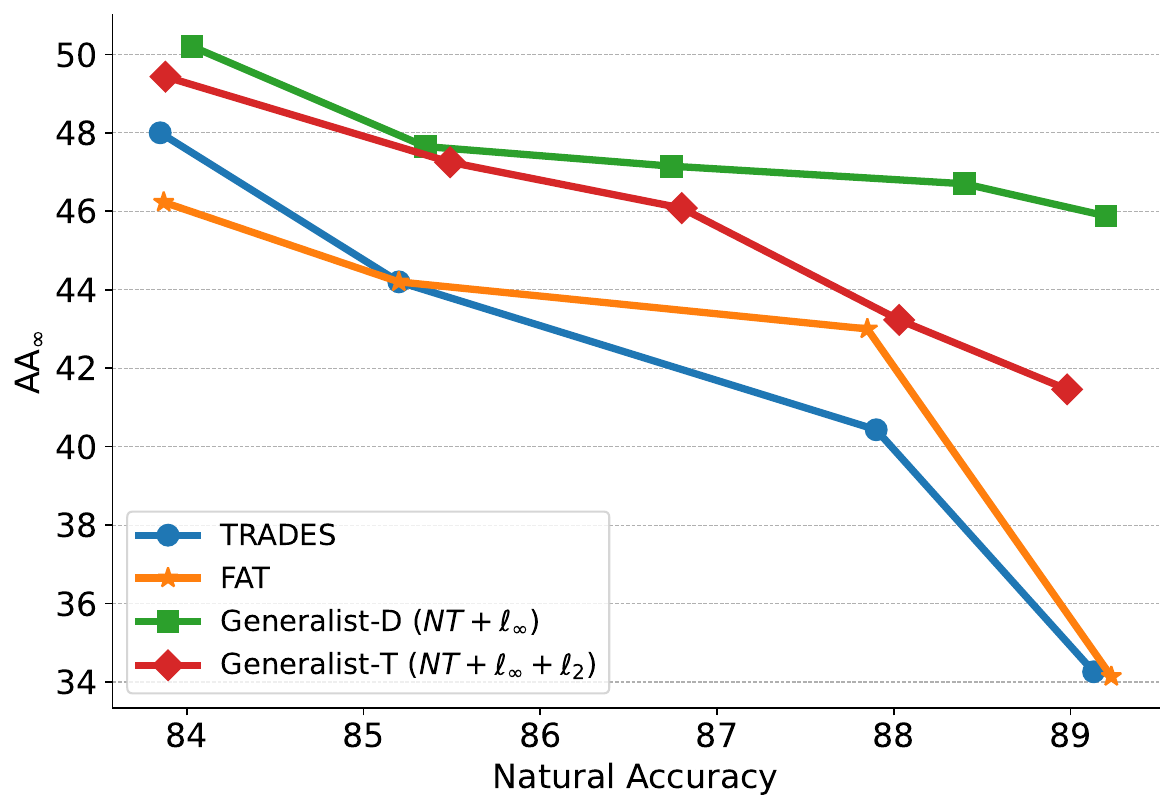}}
 \subfigure[Trade-off between $\ell_\infty$ and $\ell_2$ robustness ]	{\includegraphics[width=0.48\linewidth]{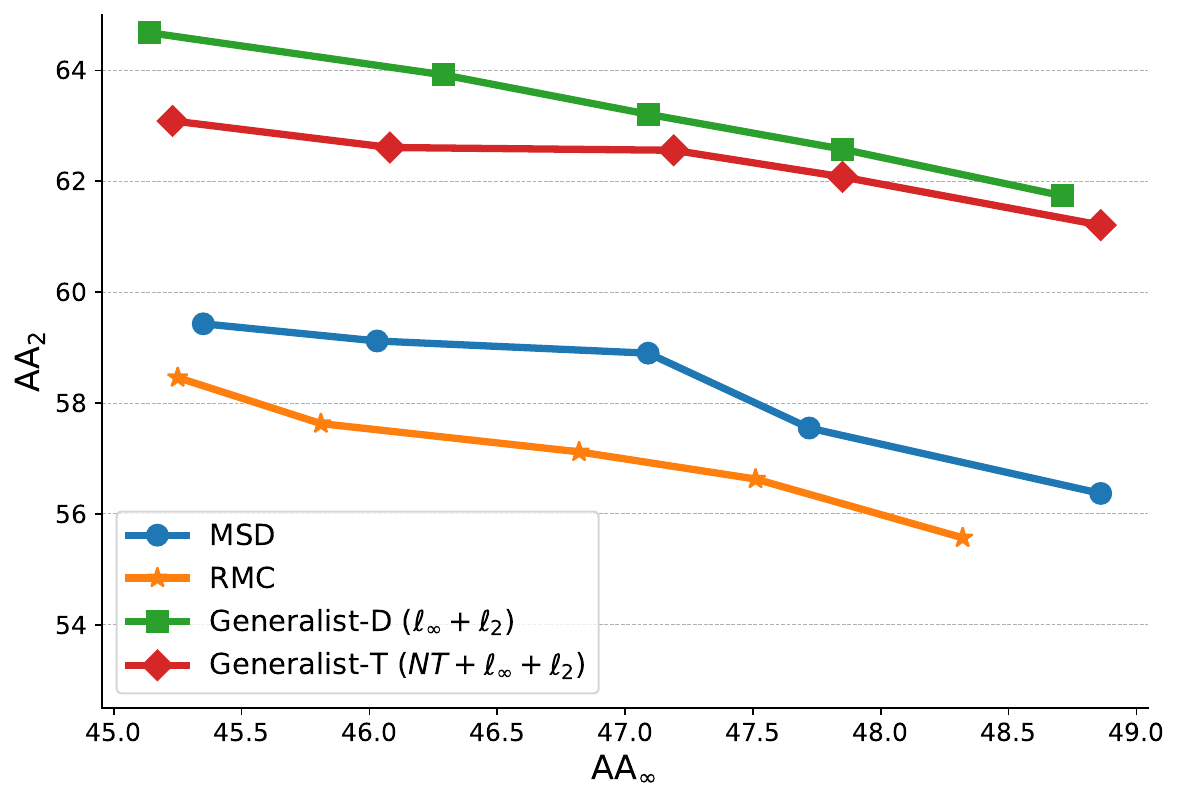}}

\caption{Comparison with current variants of AT that aim at achieving a better trade-off. Note that the baseline for comparison is different in (a) and (b) because existing methods typically address one problem at a time. We compare {Generalist} against their respective areas of expertise. Results show that Generalist achieves strong performance when focusing on a single trade-off issue (see Generalist-D). Moreover, when addressing two issues simultaneously, Generalist outperforms existing baselines in both aspects (see Generalist-T). The improvement is notable since we only use the naive cross-entropy loss without increasing model size.}
\label{fig:first}
\end{figure}

\IEEEpubidadjcol

Although prior works have studied these issues, most frameworks are designed to alleviate only one trade-off at a time. For example, to address the accuracy–robustness trade-off, some works provide theoretical analyses \cite{DBLP:conf/iclr/TsiprasSETM19, trades}, while subsequent approaches attempt indirect solutions such as incorporating additional labeled or unlabeled data~\cite{DBLP:conf/nips/AlayracUHFSK19,DBLP:conf/nips/NajafiMKM19,DBLP:conf/nips/CarmonRSDL19,RST}, adjusting the perturbation bounds~\cite{fat,PART,ge2025rethinking} or selectively optimizing the specific layers~\cite{gowdaconserve}. For the trade-off across norm bounds, remedies include augmenting training inputs with generative models~\cite{MNG} or sampling diverse adversarial examples via advanced strategies~\cite{E_AT,MSD}. However, these methods remain data-centric and problem-specific, without addressing the root cause from the perspective of the training paradigm.

\begin{figure}[!t]
    \centering
\includegraphics[width=0.5\textwidth]{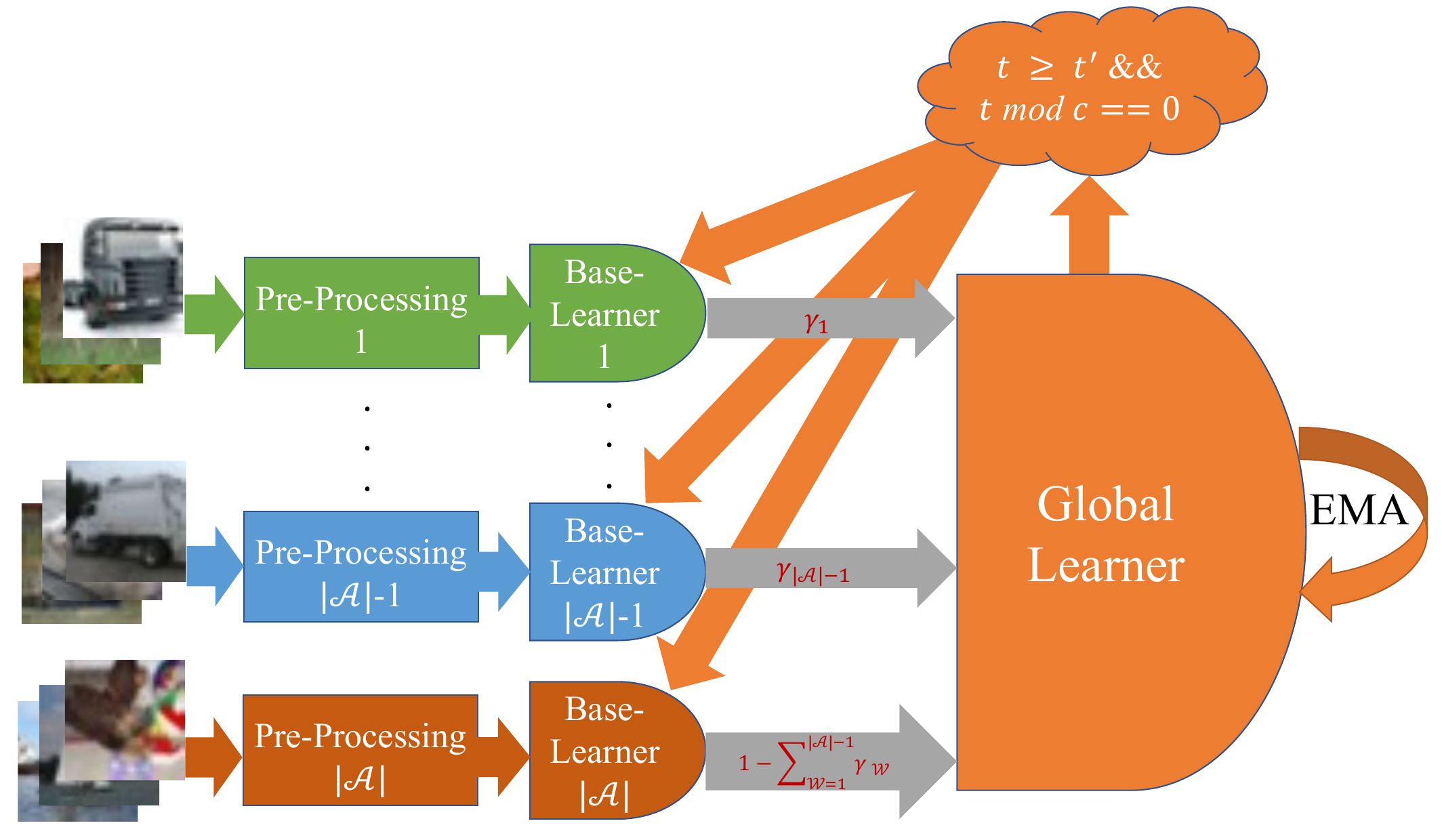}
    \caption{Pipeline of the proposed Generalist. Multiple base learners are trained independently within their respective sub-tasks. A global learner periodically aggregates parameters from the base learners, integrates knowledge, and redistributes the updated parameters back for continued training.} \label{fig:pipeline}
\end{figure}

Inspired by the principle of divide-and-conquer, we propose a novel Generalist paradigm that decouples the objective of adversarial training into multiple sub-tasks. In the case of the natural–robustness trade-off, the subtasks correspond to natural example classification and adversarial example classification, while for the multi-norm robustness trade-off, each subtask corresponds to classification under a single norm constraint. For every subtask, we train a dedicated base learner with task-specific data and configurations while maintaining the same model architecture across all subtasks. The parameters of these base learners are periodically aggregated into a global learner, which then redistributes its knowledge back to the base learners as initialization for continued training. This cyclical process enables the global learner to integrate complementary strengths while allowing each base learner to specialize in its own domain. We term the overall framework Generalist, whose proof-of-concept pipeline is illustrated in Figure~\ref{fig:pipeline}.

Unlike traditional joint training frameworks that attempt to balance multiple objectives simultaneously, Generalist explicitly leverages task-aware specialization. Each base learner can explore the optimal trajectory for its subtask, while the global learner integrates their strengths. Depending on the number of base learners, we instantiate three variants: 1) Generalist-D ($NT+\ell_\infty$): natural + $\ell_\infty$ adversarial training, 2) Generalist-D ($\ell_\infty+\ell_2$): dual-norm adversarial training, and 3) Generalist-T ($NT+\ell_\infty+\ell_2$): triple-task training. We theoretically prove that if the base learners are well trained, the aggregated global learner is guaranteed to achieve lower risk. To our knowledge, Generalist is the first task-aware training paradigm designed to simultaneously alleviate both trade-offs in adversarial training (performance preview in Figure \ref{fig:first}).
The main contributions of this work are summarized as follows:
\begin{itemize}
  \item We introduce a novel Generalist paradigm that addresses both major trade-offs in adversarial training—natural vs. robust accuracy and robustness across different norm bounds—by constructing multiple task-aware base learners rather than relying on joint training.
  \item Our framework allows complete customization of training strategies (\textit{e.g.}, optimization schemes) for each base learner, enabling them to specialize effectively while the global learner integrates their complementary strengths.
  \item We provide extensive experiments on small- and large-scale datasets, demonstrating that Generalist achieves state-of-the-art results in alleviating trade-off problems.
\end{itemize}

The main results of Generalist-D ($NT+\ell_\infty$) were published originally in CVPR 2023 as a highlight paper \cite{wang2023generalist}. 
In this longer article version, we extend it from the following aspects:
\begin{itemize}
   \item We propose two new variants, Generalist-D ($\ell_\infty+\ell_2$) and Generalist-T ($NT+\ell_\infty+\ell_2$) to alleviate the trade-off across multi-norms (Section \ref{sec:method}).
    \item We generalize the theoretical analysis from the two-learner case to an arbitrary number of base learners, showing that parameter aggregation across multiple subtasks leads to a provably lower expected error with tighter generalization guarantees. Moreover, from a stability perspective, we prove that Generalist maintains convergence without amplifying mini-batch randomness, as the global learning dynamics remain governed by per-task stability under mild regularity conditions (Section~\ref{sec:theore}).
    \item We further evaluate Generalist on large-scale datasets and out-of-distribution (OOD) scenarios, demonstrating not only its effectiveness at scale but also its strong transferability to unseen perturbations (Section \ref{sec:largescale} and Section \ref{sec:ood}).
    \item We further include extensive ablation studies and interpretable analyses to investigate the working dynamics of Generalist (Section \ref{sec:mixing}, \ref{sec:customized}, and \ref{sec:visual}).
\end{itemize}

\renewcommand{\arraystretch}{0.85}

\section{Preliminaries and Related Work}
\label{sec:related}
In this section, we provide the necessary background and terminology related to adversarial training and meta-learning.
\label{sec:preliminaries}

\textbf{Notations.} Consider an image classification task with input space $\mathcal{X}$ and output space $\mathcal{Y}$. Let $x \in\mathcal{X} \subseteq \mathbb{R}^{d}$ denote a natural image and $y\in\mathcal{Y}=\{1,2, \ldots, K\}$ denote its corresponding ground-truth label. We denote the natural dataset as $\mathcal{X}\times\mathcal{Y}={(x_i,y_i)}_{i=1}^{n}$, sampled from distribution $\mathcal{D}_1$, and the adversarial dataset as $\mathcal{X}'\times\mathcal{Y}={(x'_i,y_i)}_{i=1}^{n}$, sampled from distribution $\mathcal{D}_2$. A deep neural network (DNN) classifier is represented as $f_{\theta}: \mathcal{X}\rightarrow \mathbb{R}^{K}$, parameterized by $\boldsymbol{\theta}\in \Theta$, which maps any input image to one of the $K$ classes. The objective functions for the natural and adversarial settings are defined as $\ell_{1} \overset{def}{=} \mathcal{D}_{1}\times \Theta \rightarrow [0,\infty)$ and $\ell_{2} \overset{def}{=} \mathcal{D}_{2}\times \Theta \rightarrow [0,\infty)$, respectively. These functions are typically assumed to be positive, bounded, and upper semi-continuous~\cite{DBLP:journals/mor/BlanchetM19,Villani2003TopicsIO,DBLP:conf/colt/BartlettM01}.

\subsection{Adversarial Training and Trade-off Issues}
\textbf{Adversarial Training.} The goal of an adversary is to craft a malicious example $x^{\prime}$ by adding an imperceptible perturbation $\varepsilon \in \mathbb{R}^{d}$ to a natural input $x$. The resulting adversarial example $x^{\prime}$ should remain visually similar to $x$ while inducing misclassification. This perturbation is constrained within a neighborhood of $x$, defined as $\mathbb{B}_{\varepsilon}(x)=\{(x^{\prime},y)\in \mathcal{D}_2 \mid ||x-x^{\prime}||_p \leq \varepsilon\}$, where $p=1,2,\ldots,\infty$ specifies the norm space used for adversarial samples. Adversarial training (AT) defends against such perturbations by generating adversarial examples and optimizing model parameters with respect to them. According to \cite{PGD}, the iterative procedure of AT under an $\ell_p$-norm budget can be summarized as:
\begin{equation}
\small
\label{eqn:AT_basic}
\left\{\begin{array}{l}
{x^{\prime}}^{(t+1)}=\Pi_{\mathbb{B}\left(x, \epsilon\right)}\left({x^{\prime}}^{(t)}+\alpha 
*\frac{\left(\nabla_{x^{\prime}} \ell_2({x^{\prime}}^{(t)}, y; \boldsymbol\theta^t)\right)^{q-1}}{\vert\vert\nabla_{x^{\prime}} \ell_2\left({x^{\prime}}^{(t)}, y; \boldsymbol\theta^t\right)\vert\vert_q^{q-1}}\right) \\
\boldsymbol\theta^{(t+1)}=\boldsymbol\theta^{(t)} - \tau\nabla_{\boldsymbol\theta} \mathbb{E}[\ell_1(x, y; \boldsymbol\theta^t)+\beta\mathcal{R}(x^{\prime}, x, y; \boldsymbol\theta^t)],
\end{array}\right.
\end{equation}
where $\ell_q$ is the dual norm of the  $\ell_p$ norm, $\Pi_{\mathbb{B}\left(x, \epsilon\right)}$ is the projection operator, $\alpha$ is the step size, $\tau$ is the learning rate, and $\mathcal{R}(\cdot)$ is the loss difference of $\ell_2(x^{\prime}, y; \boldsymbol\theta^t) - \ell_1(x, y; \boldsymbol\theta^t)$. The exponent $(\cdot)^{q-1}$ preserves the sign of the gradient, while the trade-off factor $\beta$ balances natural and robust errors. Many AT variants arise from Eq.\ref{eqn:AT_basic}. For instance, $\beta=1$ recovers vanilla PGD training\cite{PGD}, $\beta=1/2$ yields the half-half loss~\cite{DBLP:journals/corr/GoodfellowSS14}, and $\beta=0$ degenerates to standard natural training. Replacing $\mathcal{R}(\cdot)$ with KL divergence or squared error leads to TRADES~\cite{trades} or LSE~\cite{LSE}, respectively.

\textbf{Trade-off Issues with AT.} Although AT is regarded as the most reliable defense~\cite{athalye2018obfuscated}, it faces persistent trade-off challenges. One major problem is the tension between natural and robust accuracy: models trained with AT typically achieve higher robustness at the cost of lower accuracy on clean samples. This phenomenon was first analyzed in \cite{trades,DBLP:conf/iclr/TsiprasSETM19}, with follow-up works attributing it to excessively strong adversarial examples. Methods such as FAT and LSE~\cite{fat,LSE} mitigate this by reducing perturbation strength via fewer iterations or smaller budgets, while others like IAT~\cite{IAT} and AGR~\cite{AGR} normalize AT with natural training loss to stabilize learning. Another challenge is the inconsistency of robustness across norms. Ideally, a robust classifier should withstand attacks under various constraints. However, \cite{tramer2019adversarial} showed that robustness drops sharply when training and evaluation norms differ. Empirical remedies diversify the attack norms during training, leading to techniques such as average-norm operations~\cite{tramer2019adversarial}, steepest ascent updates~\cite{MSD}, random norm selection~\cite{E_AT,MNG}, and logit pairing~\cite{RMC}.

In contrast to these approaches, our proposed framework Generalist addresses both trade-off problems simultaneously within a unified paradigm. Rather than forcing a single model to balance conflicting objectives, we decouple the tasks into separate base learners, each specializing in its own objective, thereby substantially alleviating the inherent trade-offs.

\subsection{Multi-Task Learning and Meta-Learning}

The core idea of multi-task learning (MTL) is to exploit commonalities across tasks by training them jointly, so that shared structures can improve the performance of each individual task~\cite{DBLP:conf/nips/BilenV16,DBLP:conf/cvpr/LuKZCJF17,DBLP:conf/iclr/YangH17,mo2022multi}. Formally, consider a set of assignments $\mathcal{A}=\{\mathcal{D}, \ell\}$ defined by data distributions and loss functions with corresponding models $\left\{\mathcal{M}_{a}\right\}_{a=1}^{n}$ parameterized by $\boldsymbol\theta_{\mathcal{M}_{a}}$. The goal of MTL is to jointly optimize these tasks to obtain task-specific parameters $\boldsymbol\theta_{\mathcal{M}_{a}}^{\star}$:
\begin{equation}
\label{eqn:MTL}
\bigcup_{a=1}^{|\mathcal{A}|} \boldsymbol\theta_{\mathcal{M}_{a}}^{\star}=\underset{\cup_{a=1}^{|\mathcal{A}|} \boldsymbol\theta_{\mathcal{M}_{a}}}{\operatorname{argmin}} \mathbb{E}_{\mathcal{A}}\mathbb{E}_{\mathcal{D}}\ \ell_{a}\left(\mathcal{D}_a; \boldsymbol\theta_{\mathcal{M}_{a}}\right),
\end{equation}
where $\ell_a(\mathcal{D}_a; \boldsymbol\theta_{\mathcal{M}_{a}})$ measures the performance of a model $\boldsymbol\theta_{\mathcal{M}_{a}}$ on dataset $\mathcal{D}_a$. While this joint optimization encourages knowledge sharing, it constrains all tasks to be optimized in a homogeneous fashion.
In contrast, meta-learning emphasizes rapid adaptation, aiming to equip models with the ability to generalize to unseen tasks by leveraging training on related but disjoint sets of tasks~\cite{DBLP:conf/icml/FinnAL17,DBLP:journals/corr/abs-1803-02999}. Suppose the task set $\mathcal{A}$ is split into non-overlapping subsets $\mathcal{V}$ and $\mathcal{W}$. The model is first trained on tasks in $\mathcal{W}$ and then adapted to $\mathcal{V}$, leading to the following formulation:
\begin{equation}
\label{eqn:meta-learning}
\small
\boldsymbol\theta^{\star}=\underset{\boldsymbol\theta}{\operatorname{argmin}} \mathbb{E}_{\mathcal{V}}\mathbb{E}_{\mathcal{D}_{\mathcal{V}}}\ \ell_{\mathcal{V}}\left(\mathcal{D}_{\mathcal{V}}; \underset{\boldsymbol\theta}{\operatorname{argmin}} \mathbb{E}_{\mathcal{W}}\mathbb{E}_{\mathcal{D}_{\mathcal{W}}}\ \ell_{\mathcal{W}}\left(\mathcal{D}_{\mathcal{W}}; \boldsymbol\theta\right)\right).
\end{equation}
Unlike MTL, which optimizes for a set of known tasks, meta-learning is designed to facilitate transfer to previously unseen ones, often through good initialization or update strategies.

Our proposed \emph{Generalist} framework draws inspiration from both paradigms: like MTL, it learns from multiple sources simultaneously, yet unlike MTL, each sub-task can be optimized with heterogeneous strategies; and similar to meta-learning, it leverages shared initialization and periodic aggregation to transfer knowledge across tasks while still allowing base learners to specialize.

\section{The Proposed Framework: Generalist}
\label{sec:method}
Similar to a physical-world generalist who has broad knowledge across many topics and expertise in a few, our proposed Generalist is designed to handle multiple tasks across different domains.

\subsection{Overview}
\label{sec:overview}

Generalist consists of several base learners, each gradually specializing in its own sub-field, while collectively contributing to a global learner that accumulates and redistributes knowledge. The framework operates in two steps: 1) each base learner $\boldsymbol\theta_a$ is optimized on its assigned data distribution $\mathcal{D}_a$, and 2) the parameters of the global learner $\boldsymbol\theta_g$ are periodically aggregated and redistributed to all base learners. Through this continuous interaction, the global learner disseminates accumulated knowledge, while base learners refine their expertise by periodically re-initializing from the global parameters. All base learners and the global learner share the same architecture, i.e., $\mathcal{M}_{1}=\mathcal{M}_{2}=\cdots=\mathcal{M}_{|\mathcal{A}|}$.

Specifically, when $|\mathcal{A}|=2$, we obtain the ``Double" version of Generalist (\textbf{Generalist-D}), aiming to address one single trade-off problem. Similarly, When $|\mathcal{A}|=3$, the ``Triple" version of Generalist (\textbf{Generalist-T}) integrates knowledge from three learners, enabling it not only to balance the trade-off between robustness and natural accuracy but also to achieve strong robustness across different norms. The overall procedures of Generalist-D and Generalist-T are presented in Algorithm~\ref{alg:Generalist-D} and Algorithm~\ref{alg:Generalist-T}, respectively.

\subsection{Task-aware Base Learners}
Given a global data distribution $\mathcal{D}$ for the trade-off problem, as denoted in Section \ref{sec:preliminaries}, $\mathcal{D}_1,\dots, \mathcal{D}_{|\mathcal{A}|}$ are subject to the distribution of training data $\mathcal{D}_{\mathcal{W}}$. The training of base learners corresponds to solving the inner minimization of Eq.~\ref{eqn:meta-learning} over these distributions in a distributed manner:
\begin{equation}
\label{eqn:base learner}
\left\{\boldsymbol\theta_1^{\star},\dots, \boldsymbol\theta_{|\mathcal{A}|}^{\star}\right\}=\underset{\bigcup_{\mathcal{W}=1}^{|\mathcal{A}|}\boldsymbol\theta_{\mathcal{W}}}{\operatorname{argmin}}\mathbb{E}_{\mathcal{D}_{\mathcal{W}}}\ \ell_{\mathcal{W}}\left(\mathcal{D}_{\mathcal{W}}; \boldsymbol\theta_{\mathcal{W}}\right).
\end{equation}
Specifically, during training, each base learner $f_{\boldsymbol\theta_{\mathcal{W}}}$ is assigned a specific subproblem and requires access only to its own data distribution. The base learners operate in a complementary manner: their parameter updates are performed independently, while the global learner periodically aggregates their parameters. The optimization subproblem for each base learner is defined as:
\begin{equation}
\label{eqn:subproblem}
\boldsymbol\theta_{\mathcal{W}}^{\star}=\underset{\boldsymbol\theta}{\operatorname{argmin}}{\mathcal{Z}_{\mathcal{W}}^{T}\left[\mathbb{E}_{\mathcal{W}}(\nabla_{\boldsymbol\theta} \ell_{\mathcal{W}}(\mathcal{D}_{\mathcal{W}}; \boldsymbol\theta_{\mathcal{W}})),\tau_{\mathcal{W}}\right]}.
\end{equation}
where the task-aware optimizer $\mathcal{Z}_{\mathcal{W}}^{T}(\cdot,\cdot)$ searches for the optimal parameters $\boldsymbol\theta_{\mathcal{W}}^{\star}$ for subproblem $\mathcal{W}$ within $T$ rounds. Each base learner can also adopt task-specific loss functions. Although minimizing the 0-1 loss for natural and adversarial errors is theoretically ideal, the problem is NP-hard and computationally intractable. In practice, we employ cross-entropy as a surrogate loss for each $\ell_{\mathcal{W}}$, since it provides a simple yet effective approximation.

\begin{figure*}[!t]
\centering
\begin{minipage}[t]{0.48\linewidth}
\begin{algorithm}[H]
\footnotesize
\caption{\footnotesize{Generalist-D: The double version of Generalist for leveraging learning trajectory with respect to two task-aware base learners to alleviate one trade-off problem.}}
   \label{alg:Generalist-D}
   \algsetup{linenosize=\tiny}
   \begin{algorithmic}
   \STATE {\bfseries Input:} A DNN classifier $f(\cdot)$ with initial learnable parameters $\boldsymbol\theta_g$ for the global learner and parameters $\boldsymbol\theta_1, \boldsymbol\theta_2$ for each base learner with objective functions $\ell_1, \ell_2$, learning rate $\tau_1, \tau_2$, optimizers $\mathcal{Z}_1, \mathcal{Z}_2$; functions for the generation of adversarial samples $G_\infty, G_2$; number of iterations $T$; data distribution $\mathcal{D}$; exponential decay rates for ensembling $\alpha^{\prime}=0.999$; mixing ratio $\gamma_1$; starting point and frequency of communication $t^{\prime}, c$; \texttt{Mode} of performing Generalist-D. 
   \STATE Initialize $\boldsymbol\theta_g, \boldsymbol\theta_1, \boldsymbol\theta_2$ in $\Theta$ space.
   \FOR {t $ \leftarrow 1, 2, \cdots , T$}
    \STATE Sample a minibatch $(x, y)$ from the data distribution $\mathcal{D}$.
    \STATE (Optional) Performing model ensembling, data augmentation or label smoothing, etc.
    \STATE $\boldsymbol\theta_1 \leftarrow \mathcal{Z}_1\left[\mathbb{E}_{(x,y)}(\nabla_{\boldsymbol\theta_1} \ell_1(G_\infty(x),y; \boldsymbol\theta_{1})),\tau_{1}\right]$
    \STATE (Optional) Performing model ensembling, data augmentation or label smoothing, etc.
    \IF{\texttt{Mode} == ``$\ell_\infty+\ell_2$"}
    \STATE $\boldsymbol\theta_2 \leftarrow \mathcal{Z}_2\left[\mathbb{E}_{(x,y)}(\nabla_{\boldsymbol\theta_2} \ell_2(G_2(x),y; 
    \boldsymbol\theta_{2})),\tau_{2}\right]$
    \ELSE
    \STATE $\boldsymbol\theta_2 \leftarrow \mathcal{Z}_2\left[\mathbb{E}_{(x,y)}(\nabla_{\boldsymbol\theta_2} \ell_2(x,y; 
    \boldsymbol\theta_{2})),\tau_{2}\right]$
    \ENDIF
    \STATE $\boldsymbol\theta_g \leftarrow \alpha^{\prime}\boldsymbol\theta_g + (1-\alpha^{\prime})(\gamma_1\boldsymbol\theta_1 + (1-\gamma_1)\boldsymbol\theta_2)$
    \IF {$t\geq t^{\prime}$ and $t\ \operatorname{mod} c==0$}
      \STATE $\boldsymbol\theta_1, \boldsymbol\theta_2 \leftarrow \boldsymbol\theta_g$
    \ENDIF
   \ENDFOR
   \STATE \textbf{Return} Parameters of the global learner $\boldsymbol\theta_g$
   \end{algorithmic}
\end{algorithm}
\end{minipage}
\hfill
\begin{minipage}[t]{0.48\linewidth}
\begin{algorithm}[H]
\footnotesize
   \caption{\footnotesize{Generalist-T: The triple version of Generalist for leveraging learning trajectory with respect to three base learners to alleviate both trade-off problems.}}
   \label{alg:Generalist-T}
   \algsetup{linenosize=\tiny}
   \begin{algorithmic}
   \STATE {\bfseries Input:} A DNN classifier $f(\cdot)$ with initial learnable parameters $\boldsymbol\theta_g$ for the global learner and parameters $\boldsymbol\theta_1, \boldsymbol\theta_2, \boldsymbol\theta_3$ for each base learner with objective functions $\ell_1, \ell_2, \ell_3$, learning rates $\tau_1, \tau_2, \tau_3$, optimizers $\mathcal{Z}_1, \mathcal{Z}_2, \mathcal{Z}_3$; functions for the generation of adversarial samples $G_\infty, G_2$; number of iterations $T$; data distribution $\mathcal{D}$; exponential decay rates for ensembling $\alpha^{\prime}=0.999$; mixing ratio $\gamma_1$, $\gamma_2$; starting point and frequency of communication $t^{\prime}, c$. 
   \STATE Initialize $\boldsymbol\theta_g, \boldsymbol\theta_1, \boldsymbol\theta_2, \boldsymbol\theta_3$ in $\Theta$ space.
   \FOR {t $ \leftarrow 1, 2, \cdots , T$}
    \STATE Sample a minibatch $(x, y)$ from the data distribution $\mathcal{D}$.
    \STATE (Optional) Performing model ensembling, data augmentation or label smoothing, etc.
    \STATE $\boldsymbol\theta_1 \leftarrow \mathcal{Z}_1\left[\mathbb{E}_{(x,y)}(\nabla_{\boldsymbol\theta_1} \ell_1(G_\infty(x),y; \boldsymbol\theta_{1})),\tau_1\right]$
    \STATE (Optional) Performing model ensembling, data augmentation or label smoothing, etc.
    \STATE $\boldsymbol\theta_2 \leftarrow \mathcal{Z}_2\left[\mathbb{E}_{(x,y)}(\nabla_{\boldsymbol\theta_2} \ell_2(x,y; \boldsymbol\theta_{2})),\tau_2\right]$
    \STATE (Optional) Performing model ensembling, data augmentation or label smoothing, etc.
    \STATE $\boldsymbol\theta_3 \leftarrow \mathcal{Z}_3\left[\mathbb{E}_{(x,y)}(\nabla_{\boldsymbol\theta_3} \ell_3(G_2(x),y; \boldsymbol\theta_{3})),\tau_3\right]$
    \STATE $\boldsymbol\theta_g \leftarrow \alpha^{\prime}\boldsymbol\theta_g + (1-\alpha^{\prime})(\gamma_1\boldsymbol\theta_1 + \gamma_2\boldsymbol\theta_2+(1-\gamma_1-\gamma_2)\boldsymbol\theta_3)$
    \IF {$t\geq t^{\prime}$ and $t\ \operatorname{mod} c==0$}
      \STATE $\boldsymbol\theta_1, \boldsymbol\theta_2, \boldsymbol\theta_3 \leftarrow \boldsymbol\theta_g$
    \ENDIF
   \ENDFOR
   \STATE \textbf{Return} Parameters of the global learner $\boldsymbol\theta_g$
   \end{algorithmic}
\end{algorithm}
\end{minipage}
\end{figure*}

\subsection{Global Learner Aggregation}
\label{sec:global}
At the early stages of training, base learners are insufficiently trained and thus less reliable. Directly mixing their parameters at this point may mislead optimization and accumulate bias. To address this, we reserve the first $t^{\prime}$ epochs for independently training the base learners. During this warm-up phase, the global learner is updated only through aggregation of their optimization trajectories using an exponential moving average (EMA): 
\begin{equation}
\boldsymbol\theta_g \leftarrow \alpha^{\prime}\boldsymbol\theta_g + (1-\alpha^{\prime})(\sum\limits_{\mathcal{W}=1}^{|\mathcal{A}|-1}\gamma_\mathcal{W}\boldsymbol\theta_\mathcal{W}+(1-\sum\limits_{\mathcal{W}=1}^{|\mathcal{A}|-1}\gamma_\mathcal{W})\boldsymbol\theta_{|\mathcal{A}|}).
\end{equation}
where $\alpha^{\prime}$ is the EMA decay rate and $\gamma_\mathcal{W}$ ($0<\gamma_\mathcal{W}<1$, $\sum\limits_{\mathcal{W}=1}^{|\mathcal{A}|-1}\gamma_\mathcal{W}<1$) denotes the mixing weight of the base learners.

Once the base learners become sufficiently specialized, the global learner periodically redistributes its aggregated parameters back to them every $c$ epochs, serving as a shared initialization that accelerates convergence and improves generalization:
\begin{equation}
\label{eqn:opt_interlude}
\boldsymbol\theta_{\mathcal{W}}^{\star}=\underset{\boldsymbol\theta}{\operatorname{argmin}}{\mathcal{Z}_{\mathcal{W}}^{c}\left[\mathbb{E}_{\mathcal{W}}(\nabla_{\boldsymbol\theta} \ell_{\mathcal{W}}(\mathcal{D}_{\mathcal{W}}; \boldsymbol\theta_{g})),\tau_{\mathcal{W}}\right]}.
\end{equation}
Note that $\boldsymbol\theta_g$ contains part of parameters from each base learners $\boldsymbol\theta_\mathcal{W}$, meaning that there always exists a term updated by gradient information of distribution different from the current subproblem. This mechanism enables fast learning within a given assignment and improves generalization, and the acceleration is applicable to the given assignment for its corresponding base learner only (proof in Appendix \ref{app:theory}).

With all discussed above, the learning progress of Generalist can be constructed by decending the gradient of each base learner $\boldsymbol\theta_\mathcal{W}$ and mixing all of them. The key calculating steps can be summarized in the following equation:
\begin{equation}
\label{eqn:overall}
\small
\left\{
\begin{aligned}
\boldsymbol\theta_{\mathcal{W}}^{t} = &
\mathcal{Z}_{\mathcal{W}}\left[\mathbb{E}_{(x, y) \sim \mathcal{D}_\mathcal{W}}
\big(\nabla_{\boldsymbol\theta_\mathcal{W}} \ell_\mathcal{W}(x, y; \boldsymbol\theta_{\mathcal{W}}^{t-1})\big), \tau_\mathcal{W}\right] \\
&(\mathcal{W}=1,2,\cdots,|\mathcal{A}|)\\
 \\
\boldsymbol\theta_{g}^{t} = & 
\alpha^{\prime}\boldsymbol\theta_g^{t-1} + (1-\alpha^{\prime})(\sum\limits_{\mathcal{W}=1}^{|\mathcal{A}|-1}\gamma_\mathcal{W}\boldsymbol\theta_\mathcal{W}^t+(1-\sum\limits_{\mathcal{W}=1}^{|\mathcal{A}|-1}\gamma_\mathcal{W})\boldsymbol\theta_{|\mathcal{A}|}^t) \\
 \\
\boldsymbol\theta_{\mathcal{W}}^{t} = &
\mathcal{B}(t, t^{\prime}, c)\boldsymbol\theta_{g}^{t}
+ (1-\mathcal{B}(t, t^{\prime}, c))\boldsymbol\theta_{\mathcal{W}}^{t} \\
&(\mathcal{W}=1, 2,\cdots,|\mathcal{A}|)\\
\end{aligned}
\right.
\end{equation}
where $\mathcal{B}(t, t^{\prime}, c)$ is a Boolean function returning 1 if $t\geq t^{\prime}$ and $t\ \operatorname{mod} c==0$, and 0 otherwise. 
\vspace{-5pt}

\subsection{Theoretical Analysis}
\label{sec:theore}

In this section, we theoretically analyze why the decoupled-and-aggregated framework of Generalist can perform well in multiple tasks from two different perspectives. First, from a \textbf{generalization} viewpoint, we show that the population risk of the global learner is controlled by the sum of task-wise regrets of the base learners. Second, from a \textbf{stability} viewpoint, we formalize the insensitivity of a learning algorithm to perturbations in the training data as stablity, and prove that the stability of the global learner can be well controlled by the convex combination of its base learners. These two findings provide a solid theoretical guarantee for the practicality and scalability of Generalist. (Proofs in Appendix \ref{app:theory})

\begin{definition}
(\textbf{Trade-off Regret with Mixed Strategies}) For the natural training assignment or adversarial training assignments $a_1,a_2,\cdots,a_{|\mathcal{A}|}$, consider an algorithm that generates the trajectory of states $\boldsymbol\theta_1,\boldsymbol\theta_2,\cdots,\boldsymbol\theta_{|\mathcal{A}|}$ for ${|\mathcal{A}|}$ base learners, then the regret of ${|\mathcal{A}|}$ base learners on their respective loss function $\ell_1,\ell_2,\cdots,\ell_{|\mathcal{A}|}$ is defined as:
\begin{equation}
\mathbf{R}_{T}=\frac{1}{|\mathcal{A}|} \sum_{a=1}^{|\mathcal{A}|}\left(\sum_{t=1}^{T} \ell_{a}\left(\boldsymbol\theta_a^t\right)-\inf _{\boldsymbol\theta_a^t \in \Theta} \sum_{t=1}^{T} \ell_a\left(\boldsymbol\theta_{a}^{t}\right)\right).
\end{equation}
\end{definition}
Here, the second term corresponds to the oracle state $\boldsymbol\theta_{a}^{\star}$, \textit{i.e.}, the theoretically optimal parameters for each task $a$. Thus, $\mathbf{R}_{T}$ is the sum of the difference between the parameters of each base learner and its theoretically optimal parameters for each task.

Based on the definition, we establish the following upper bound on the expected error of the classifier trained by Generalist with respect to $\mathbf{R}_{T}$ as:
\begin{theorem}
\label{theorem:Theorem-1}
 Consider an algorithm with regret bound $R_{T}$ that generates the trajectory of states for $|\mathcal{A}|$ base learners. For any parameter state $\boldsymbol\theta \in \Theta$, given a sequence of convex surrogate evaluation functions ${\ell: \Theta\mapsto [0, 1]_{a\in \mathcal{A}}}$ drawn i.i.d. from some distribution $\mathcal{L}$, the expected error of the global learner $\boldsymbol\theta_{g}$ on all tasks over the test set can be bounded with probability at least $1-\delta$ as:
\begin{equation}
\underset{\ell \sim \mathcal{L}}{\mathbb{E}} \ell\left(\boldsymbol\theta_{g}\right) \leq \underset{\ell \sim \mathcal{L}}{\mathbb{E}} \ell\left(\boldsymbol\theta\right)+\frac{\mathbf{R}_{T}}{T}+2\sqrt{\frac{2}{T}\log \frac{1}{\delta}}.
\end{equation}
\end{theorem}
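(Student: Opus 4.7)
The plan is to invoke a classical online-to-batch conversion combined with the regret definition, transferring the per-round inequalities into a single population-level guarantee for the aggregated parameter $\boldsymbol\theta_g$. The first step exploits that the EMA update in Eq.~(\ref{eqn:overall}) expresses the global learner as a convex combination of the base-learner iterates with nonnegative weights that sum (up to a vanishing tail of order $(\alpha')^{T}$) to one. By convexity of the surrogate $\ell$ and Jensen's inequality one therefore obtains an upper bound of the form
\begin{equation*}
\ell(\boldsymbol\theta_g)\;\le\;\frac{1}{T|\mathcal{A}|}\sum_{t=1}^{T}\sum_{a=1}^{|\mathcal{A}|}\ell(\boldsymbol\theta_{a}^{t}),
\end{equation*}
which reduces the claim to bounding the empirical average of per-task per-round losses evaluated along the trajectory iterates.

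Next, viewing $\{\ell_{t}\}$ as an i.i.d.\ sequence from $\mathcal{L}$ and noting that $\boldsymbol\theta_{a}^{t}$ is measurable with respect to $\ell_{1},\dots,\ell_{t-1}$, I would form the martingale differences $X_{t,a}=\mathbb{E}_{\ell\sim\mathcal{L}}\ell(\boldsymbol\theta_{a}^{t})-\ell_{t}(\boldsymbol\theta_{a}^{t})$, which lie in $[-1,1]$ since each loss takes values in $[0,1]$. An Azuma--Hoeffding inequality then yields, with probability at least $1-\delta/2$,
\begin{equation*}
\frac{1}{T|\mathcal{A}|}\sum_{t,a}\mathbb{E}_{\ell}\ell(\boldsymbol\theta_{a}^{t})\;\le\;\frac{1}{T|\mathcal{A}|}\sum_{t,a}\ell_{t}(\boldsymbol\theta_{a}^{t})+\sqrt{\tfrac{2}{T}\log\tfrac{2}{\delta}}.
\end{equation*}
The regret identity $\mathbf{R}_{T}=\tfrac{1}{|\mathcal{A}|}\sum_{a}\bigl(\sum_{t}\ell_{a}(\boldsymbol\theta_{a}^{t})-\inf_{\boldsymbol\theta}\sum_{t}\ell_{a}(\boldsymbol\theta)\bigr)$ then swaps the empirical average at the iterates for one evaluated at the comparator $\boldsymbol\theta$ at an additive cost of $\mathbf{R}_{T}/T$. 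A second Hoeffding inequality applied to the i.i.d.\ sequence $\{\ell_{t}(\boldsymbol\theta)\}_{t}$, for a $\boldsymbol\theta$ fixed independently of the data, controls $T^{-1}\sum_{t}\ell_{t}(\boldsymbol\theta)$ by $\mathbb{E}_{\ell}\ell(\boldsymbol\theta)+\sqrt{(1/2T)\log(2/\delta)}$ with probability $1-\delta/2$. A union bound on the two concentration events and a mild simplification collapse the deviation into the advertised $2\sqrt{(2/T)\log(1/\delta)}$.

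The main obstacle is the first step: $\boldsymbol\theta_g$ is not literally a uniform average of base iterates but an EMA of their convex mixture, further disturbed by the periodic redistribution in Eq.~(\ref{eqn:opt_interlude}). I would handle this by expanding the EMA into an explicit weighted average of past mixed iterates whose weights sum to $1-(\alpha')^{T}$, so that the missing mass contributes only an $O(1/T)$ residual dominated by the concentration terms, while the redistribution step merely re-anchors each base learner at $\boldsymbol\theta_g$ without breaking the convex-hull property used in Jensen's inequality. A secondary subtlety is that the comparator in $\mathbf{R}_{T}$ is task-specific, whereas Theorem~\ref{theorem:Theorem-1} compares $\boldsymbol\theta_g$ against a single $\boldsymbol\theta$; this is resolved by fixing the same $\boldsymbol\theta$ inside every per-task infimum, which can only inflate $\mathbf{R}_{T}$ and hence preserves the stated inequality.
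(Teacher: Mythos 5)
Your proposal follows essentially the same route as the paper's proof: an online-to-batch conversion via Azuma--Hoeffding concentration along the iterates, the regret definition to swap to a fixed comparator at cost $\mathbf{R}_T/T$, a second concentration bound for that comparator, and Jensen's inequality through convexity of the surrogate to pass to $\boldsymbol\theta_{g}$. The only real difference is cosmetic: you apply Jensen directly to the convex combination of base-learner iterates, whereas the paper applies it to the Ces\`aro average of the global trajectory and invokes convergence of $\boldsymbol\theta^{t}$ to $\boldsymbol\theta_{g}$; both versions leave the same informal step (identifying $\boldsymbol\theta_{g}$ with a uniform average of the iterates despite the geometric EMA weights) in the same place, and you are in fact more explicit than the paper about the union bound and the task-specific comparator.
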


This result shows that any strategy that reduces the task-specific regret $\mathbf{R}_{T}$ will also tighten the error bound of the global learner. Considering Generalist divides the trade-off problem into several independent tasks, Theorem \ref{theorem:Theorem-1} guarantees that lowering the error of individual tasks directly lowers the risk bound of the global learner. In practice, we can apply customized learning rate strategies, optimizers, and weight averaging to guarantee the error reduction of each base learners.

We next analyze the sensitivity of the Generalist against perturbations in the training data. To this end, we introduce the notion of \emph{$\epsilon$-stability}, a variant of uniform stability in~\cite{shalev2010learnability}. 
In the general learning setting, it identifies algorithmic stability---the insensitivity of the learned predictor to replacing one training example---as the key \emph{necessary and sufficient} condition for statistical learnability. 
\begin{definition}
\noindent\textbf{(\(\epsilon\)-Stability)} 
A learning algorithm admits \(\epsilon\)-stability in the sense that, for any two training sets
$\mathcal{D},\mathcal{D}^{\prime}$ differing in exactly one example and any test point $z$ in the test set $\mathcal{T}$,
\begin{equation}
\begin{aligned}
\big|\ell\!\left(f_{\theta(\mathcal{D})},z\right)-\ell\!\left(f_{\theta(\mathcal{D}^{\prime})},z\right)\big| \;\le\; \epsilon,  
\end{aligned}
\end{equation}
where $\theta(\mathcal{D})$, $\theta(\mathcal{D}^{\prime})$ are parameters learned by the algorithm, and $f_{\theta(\mathcal{D})}$, $f_{\theta(\mathcal{D}^{\prime})}$  are the corresponding predictors.
\end{definition}

Intuitively, $\epsilon$-stability controls how much the loss of the returned predictor can change when the training data is perturbed at a single point. This directly yields \emph{distribution-free generalization} guarantees and explicitly isolates the contribution of the learning rule (rather than the hypothesis class complexity). 
Adopting this concept allows us to quantify how Generalist reacts to sample-level randomness during training.

\begin{theorem}
\noindent\textbf{(Global Stability)} 
\label{thm:2}
Assume each base learner reaches $\epsilon_a$-stablity on its own task, for $a=1, 2,\cdots,|\mathcal{A}|$, and let $\bar\theta$ denote the previous global iterate (i.e., the global parameter before the current aggregation round). 
Then, the global learner $f_{\theta_g}$ produced by Generalist framework at the current round admits the stability bound
\begin{equation}
\begin{aligned}
\epsilon_g \;\le\; \epsilon_\oplus \;+\; C\,\sum_{a=1}^{|\mathcal{A}|}\gamma_a\,\|\theta_a-\bar\theta\|^2,
\end{aligned}
\end{equation}
where $\epsilon_\oplus:=\sum_{a=1}^{|\mathcal{A}|}\gamma_a\,\epsilon_a$ and $C$ is a bounded constant.
\end{theorem}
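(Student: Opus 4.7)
The plan is to establish the stability of the aggregated model via a second-order Taylor expansion of the loss around the previous global iterate $\bar{\theta}$, followed by a triangle-inequality accounting of per-learner stability and curvature contributions. The key observation is that the aggregation step $\theta_g = \alpha'\bar{\theta} + (1-\alpha')\sum_{a=1}^{|\mathcal{A}|}\gamma_a \theta_a$ is affine in the base-learner parameters, so sensitivity of $\theta_g$ to a single-sample perturbation of the training set is governed by a convex combination of the per-learner sensitivities.

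First, I would fix two datasets $\mathcal{D},\mathcal{D}'$ differing in exactly one example, and a test point $z$. Since $\bar{\theta}$ is the global iterate \emph{before} the current aggregation round, it does not depend on the current single-sample perturbation. Hence $\theta_g(\mathcal{D}) - \theta_g(\mathcal{D}') = (1-\alpha')\sum_a \gamma_a(\theta_a(\mathcal{D}) - \theta_a(\mathcal{D}'))$, which isolates the perturbation in the base learners. I would also introduce the intermediate quantity $\sum_a \gamma_a \ell(f_{\theta_a(\cdot)}, z)$ as a pivot for the triangle inequality.

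Second, under a standard $L$-smoothness assumption on $\theta \mapsto \ell(f_\theta, z)$ (a mild regularity condition consistent with the assumptions stated in the preliminaries), a second-order Taylor expansion of the loss around $\bar{\theta}$ gives the Jensen-gap-style identity $\ell(f_{\theta_g}, z) = \sum_a \gamma_a \ell(f_{\theta_a}, z) + R$, where the linear terms vanish because $\theta_g - \bar{\theta}$ equals the same convex combination applied to $\theta_a - \bar{\theta}$ up to the factor $(1-\alpha')$, and the remainder $R$ is controlled by $\tfrac{L}{2}\sum_a \gamma_a \|\theta_a - \bar{\theta}\|^2$. Applying this decomposition to both $\mathcal{D}$ and $\mathcal{D}'$, subtracting, and invoking the per-learner $\epsilon_a$-stability yields $\sum_a \gamma_a |\ell(f_{\theta_a(\mathcal{D})}, z) - \ell(f_{\theta_a(\mathcal{D}')}, z)| \le \epsilon_\oplus$, while the residual terms collapse into the claimed quadratic dispersion, with $C$ absorbing $L$ and the EMA factor.

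The main obstacle will be making the remainder control uniform across both $\mathcal{D}$ and $\mathcal{D}'$, i.e., ensuring that the \emph{same} quadratic dispersion $\sum_a \gamma_a\|\theta_a - \bar{\theta}\|^2$ simultaneously bounds both Taylor remainders. This requires that the base learners remain in a neighborhood of $\bar{\theta}$ where the $L$-smoothness constant is valid---a property that is guaranteed by Generalist's periodic redistribution step (Eq.~\ref{eqn:opt_interlude}), which re-anchors every base learner to the global parameters every $c$ epochs. Once this neighborhood is made explicit, the rest of the argument reduces to routine triangle-inequality bookkeeping, and the constant $C$ can be chosen as a fixed multiple of $L$.
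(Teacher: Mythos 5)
Your proof is sound in its essentials and arrives at the same bound, but it takes a genuinely different route from the paper. The paper pivots through the \emph{loss of the averaged prediction}, $\ell\bigl(\sum_a\gamma_a f_{\theta_a}(x),y\bigr)$, which forces it to prove a separate mixing lemma (a one-by-one swap argument requiring convexity of $\ell$ in the prediction) to reduce the middle term to $\epsilon_\oplus$, and then to Taylor-expand the \emph{network output} $f_\theta(x)$ around $\bar\theta$, combining a prediction-Lipschitz constant $L$ with a parametric curvature bound $M$ to get $C=2LM$. You instead pivot through the \emph{average of the losses}, $\sum_a\gamma_a\,\ell(f_{\theta_a},z)$, so the middle term reduces to $\epsilon_\oplus$ by the triangle inequality alone---no convexity of the loss and no swap lemma needed---and you Taylor-expand the composite map $\theta\mapsto\ell(f_\theta,z)$ directly, folding the paper's two constants into a single smoothness constant. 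Your version is more elementary and slightly more general (it does not require $\ell(\cdot,y)$ to be convex in the logits); the cost is that you assume second-order regularity of the full composite loss rather than of the network output alone, which is a comparable ``mild regularity'' hypothesis. The linear-term cancellation and the Jensen step $\|\theta_g-\bar\theta\|^2\le\sum_a\gamma_a\|\theta_a-\bar\theta\|^2$ are identical in both arguments, as is the need to impose the neighborhood condition uniformly over $\mathcal D$ and $\mathcal D'$.

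One wrinkle to fix: you keep the EMA factor, writing $\theta_g=\alpha'\bar\theta+(1-\alpha')\sum_a\gamma_a\theta_a$, so that $\theta_g-\bar\theta=(1-\alpha')\sum_a\gamma_a(\theta_a-\bar\theta)$. With that extra factor the linear terms do \emph{not} cancel exactly---you are left with a residual $-\alpha'\,\nabla h(\bar\theta)^\top\sum_a\gamma_a(\theta_a-\bar\theta)$, which is first order in $\|\theta_a-\bar\theta\|$ and, after differencing $\mathcal D$ against $\mathcal D'$, is controlled by parameter-level rather than loss-level stability; it cannot simply be ``absorbed into $C$'' alongside the quadratic dispersion. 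The paper sidesteps this by proving the theorem for the pure convex combination $\theta_g=\sum_a\gamma_a\theta_a$ (which is what the stated bound implicitly presumes). Either adopt that convention, or state explicitly that the residual linear term requires an additional parameter-stability assumption.
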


Theorem \ref{thm:2} shows that the global learner’s instability $\epsilon_g$ is \emph{not} amplified by aggregated training. Instead, it is controlled by the convex combination of per-task instabilities, along with a small geometric term that quantifies how closely the base parameters cluster around the previous global iterate. This result highlights that the decoupled training paradigm of Generalist mitigates, rather than amplifies, per-task variability. Consequently, stochastic noise arising from mini-batch sampling and task heterogeneity is effectively averaged out during aggregation, leading to more stable optimization and better balanced performance across tasks.

\renewcommand{\arraystretch}{0.85}
\vspace{-5pt}
\section{Experiments}
\label{sec:exp}

In this section, we conduct extensive experiments to evaluate the effectiveness and generality of the proposed Generalist framework. The evaluation covers three aspects: standard adversarial robustness on CIFAR-10 and CIFAR-100, scalability to large-scale datasets using ImageNet, and generalization to out-of-distribution (OOD) perturbations. As described in Section~\ref{sec:method}, Generalist has two variants depending on the number of base learners: \textbf{Generalist-D} (double base learners) and \textbf{Generalist-T} (triple base learners). Generalist-D can be further instantiated as Generalist-D ($NT+\ell_\infty$) to address the natural–robustness trade-off, or Generalist-D ($\ell_\infty+\ell_2$) to address robustness across different norm constraints. We compare these variants against a wide range of state-of-the-art adversarial training baselines under unified settings.

\vspace{-5pt}
\subsection{Setup}
\label{apd:exp}
\textbf{Baselines.} In addition to vanilla AT using PGD \cite{PGD}, we compare against two groups of baselines. The first group focuses on improving natural generalization of AT, including: AT with half-half loss (averaging natural and adversarial losses)\cite{DBLP:journals/corr/GoodfellowSS14}, TRADES with $\beta=1$\cite{trades}, Friendly Adversarial Training (FAT)\cite{fat}, Interpolated Adversarial Training (IAT)\cite{IAT}, Self-Consistent Robust Error (SCORE)\cite{LSE}, Adaptive Gradient Reconstruction (AGR)\cite{AGR}, and Pixel-reweighted Adversarial Training (PART)\cite{PART}. The second group addresses robustness across different norm budgets, including: AT with averaged losses over perturbations\cite{tramer2019adversarial}, Multi Steepest Descent (MSD)\cite{MSD}, Extreme-norm Adversarial Training (E-AT)\cite{E_AT}, and Robust Method against Multiple Perturbations (RAMP)~\cite{RMC}. All models are trained from scratch using the publicly available code of each method.

\textbf{Evaluation.} To evaluate robustness, we apply adversarial attacks including 20-step PGD \cite{PGD}, \textit{i.e.}, PGD$^{20}$, and AutoAttack (AA) \cite{DBLP:conf/icml/Croce020a} that is an ensemble of four attacks (\textit{i.e.}, two types of APGD attacks, FAB and Square attack) and widely regarded as the most reliable attacks in adversarial robustness. Subscripts distinguish norms used for attacks, e.g., AA$_\infty$ and AA$_2$. We also report {union robustness} (Union), defined as the average of AA$_\infty$ and AA$_2$, to reflect robustness under multiple perturbation types.

\begin{table*}[!t]
\renewcommand{\arraystretch}{1.1}
\centering
\caption{Comparison (\%) of Generalist with different training methods using ResNet-18 and WRN-32-10 on CIFAR-10. The attack budgets are set to $\varepsilon=8/255$ for the $\ell_\infty$ norm and $\varepsilon=128/255$ for the $\ell_2$ norm. The best and second-best results are highlighted in \textbf{bold} and \underline{underlined}, respectively. Standard deviations are omitted as they are negligible ($<0.5\%$). 
}
\label{tab:cifar10}
\resizebox{1.0\linewidth}{!}{
\begin{tabular}{c|ccccccccccccccccc}
\bottomrule
\multirow{3}{*}{Method} &           \multicolumn{6}{c}{ResNet-18}                            & &\multicolumn{6}{c}{WRN-32-10}                       \\ \cmidrule{2-7} \cmidrule{9-14}
& Natural&$\text{PGD}_\infty^{20}$& $\text{AA}_{\infty}$ &$\text{PGD}_2^{20}$ & $\text{AA}_2$  &Union&& Natural&$\text{PGD}_\infty^{20}$& $\text{AA}_{\infty}$ &$\text{PGD}_2^{20}$ & $\text{AA}_2$&Union\\ 
\midrule
AT~\cite{PGD}& 84.32 & 48.29& 44.37 & 61.07   & 56.99&50.68&&87.32&49.01&46.11&57.00&53.97&50.04\\
AT (NT+$\ell_\infty$)~\cite{halfat}&87.34&44.51&40.06&57.79&54.97&47.52&&89.27&48.95&44.81&58.55&55.46&50.14\\
TRADES ($\beta=1$)~\cite{trades}&87.88&45.58&40.32&62.15&58.01&49.17&&87.20&51.33&49.81&60.01&56.70&53.26\\
FAT~\cite{fat}& 87.72 & 46.69 & 43.14& 58.64  & 55.83&49.49&&89.65&48.74&44.73&57.90&53.54&49.14 \\
IAT~\cite{IAT}& 83.70 & 40.83 & 35.13& 55.16  & 51.39&43.26&&88.04&48.85&42.23&64.95&59.66&50.95 \\
SCORE~\cite{LSE}&87.72&42.73&32.70&63.25&55.65&44.18&&88.48&47.11&38.86&64.62&57.80&48.33 \\
AGR~\cite{AGR}&85.18&48.80&44.42&60.07&57.29&50.81&&87.09&50.98&48.19&59.16&56.24&52.22\\
PART~\cite{PART}&83.07&42.98&41.22&59.98&57.57&49.40&&84.29&42.08&41.13&59.20&57.13&49.13  \\
\midrule
AT ($\ell_\infty$+$\ell_2$)~\cite{tramer2019adversarial}&85.61&43.35&39.48& 63.16&61.34&50.41&&87.55&49.17&45.62&64.36&63.01&54.32\\
MSD~\cite{MSD}&82.91&\underline{50.52} & 46.08& 62.73 & 58.97&52.53&&86.27&51.07&46.65&\underline{69.66}&\underline{67.12}&56.89\\ 
E-AT~\cite{E_AT}&72.68&39.38&34.98&57.84&55.70&45.34&&71.75&38.98&34.77&57.32&55.00&44.89 \\
RMC~\cite{RMC}&82.00&\textbf{52.26}&\textbf{48.32}&58.91&55.57&51.95&&80.18&53.87&50.00&61.62&58.92&54.46 \\

\midrule
Generalist-D ($NT+\ell_\infty)$  &\textbf{89.09} & 50.01& 46.07  & 62.08  & 58.11&52.09&&\textbf{91.03}&\underline{56.88}& \underline{52.91}&63.96&58.95 &55.93 \\
Generalist-D  ($\ell_\infty+\ell_2$) & 86.94 &50.46 &\underline{46.24} & \textbf{67.63}  & \textbf{65.09}&\textbf{55.67}&&88.10&\textbf{57.38}&\textbf{53.29}&\textbf{70.85}&\textbf{68.07}&\textbf{60.68}\\
Generalist-T ($NT+\ell_\infty+\ell_2$)  &\underline{88.03} & 47.61 & 43.23 &\underline{65.89}&\underline{63.40}&\underline{53.32}&&\underline{89.66}&54.00&50.62&66.39&63.44&\underline{57.03}\\ 
\bottomrule
\end{tabular}}
\end{table*}

\begin{table*}[!t]
\renewcommand{\arraystretch}{1.1}
\centering
\caption{Comparison (\%) of Generalist with different training methods using ResNet-18 and WRN-32-10 on CIFAR-100. The attack budgets are set to $\varepsilon=8/255$ for the $\ell_\infty$ norm and $\varepsilon=128/255$ for the $\ell_2$ norm. The best and second-best results are highlighted in \textbf{bold} and \underline{underlined}, respectively. Standard deviations are omitted as they are negligible ($<0.5\%$). 
}
\label{tab:cifar100}
\resizebox{1.0\linewidth}{!}{
\begin{tabular}{c|ccccccccccccccccc}
\bottomrule
\multirow{3}{*}{Method} &           \multicolumn{6}{c}{ResNet-18}                            & &\multicolumn{6}{c}{WRN-32-10}                       \\ \cmidrule{2-7} \cmidrule{9-14}
& Natural&$\text{PGD}_\infty^{20}$& $\text{AA}_{\infty}$ &$\text{PGD}_2^{20}$ & $\text{AA}_2$  &Union&& Natural&$\text{PGD}_\infty^{20}$& $\text{AA}_{\infty}$ &$\text{PGD}_2^{20}$ & $\text{AA}_2$&Union\\ 
\midrule
AT~\cite{PGD}&60.10&28.22&23.87&30.08&26.87&25.37&&57.74&29.07&25.64&35.73&31.50&28.57 \\
AT (NT+$\ell_\infty$)~\cite{halfat}&60.87&22.64&19.17&28.96&25.93&22.55&&63.75&26.11&22.98&35.22&21.20&22.09\\
TRADES ($\beta=1$)~\cite{trades}&60.18&28.93&23.22&34.32&31.21&27.22&&61.47&24.35&21.63&34.42&31.50&26.57\\
FAT~\cite{fat}&61.71&22.93&20.01&32.56&30.50&25.26&&65.30&24.03&21.38&32.67&29.91&25.65\\
IAT~\cite{IAT}&57.04&21.40&15.50&55.76&28.73&22.12&&63.21&23.16&18.89&35.46&31.35&25.12 \\
SCORE~\cite{LSE}&44.27&27.84&23.36&32.57&27.99&25.68&&39.65&27.06&22.55&29.18&24.35&23.45 \\
AGR~\cite{AGR}&58.25&23.86&20.85&34.02&31.06&25.96&&62.42&27.10&24.29&34.06&21.04&22.67\\
PART~\cite{PART}&56.42&20.45&18.04&31.68&29.28&23.66&&57.39&21.11&19.18&31.78&29.82&24.50 \\
\midrule
AT ($\ell_\infty$+$\ell_2$)~\cite{tramer2019adversarial}&56.36&19.62&16.82&35.43&33.22&25.02&&58.70&25.17&22.19&37.72&35.42&28.81\\
MSD~\cite{MSD}&58.30&28.23&23.58&38.27&34.38&28.98&&62.51&26.78&23.54&37.12&33.74&28.64\\ 
E-AT~\cite{E_AT}&45.48&19.73&15.94&32.91&30.06&23.00&&44.07&18.97&15.33&31.85&29.04&22.19\\
RMC~\cite{RMC}&55.48&25.73&22.29&29.76&26.71&24.50&&56.53&29.90&25.65&36.55&32.76&29.21\\

\midrule
Generalist-D ($NT+\ell_\infty)$  &\textbf{62.97}&\textbf{29.48}&\underline{23.96}&39.14&34.23&29.10&&\textbf{66.66}&\underline{30.47}&\underline{26.86}&38.67&34.04&30.45\\
Generalist-D  ($\ell_\infty+\ell_2$) &60.90&\underline{29.43}&\textbf{24.23}&\textbf{42.37}&\textbf{38.25}&\textbf{33.84}&&64.85&\textbf{30.65}&\textbf{27.29}&\textbf{42.63}&\textbf{39.47}&\textbf{33.38}\\
Generalist-T ($NT+\ell_\infty+\ell_2$)  &\underline{62.68}&28.94&23.88&\underline{41.07}&\underline{36.34}&\underline{30.11}&&\underline{66.47}&29.61&26.23&\underline{41.18}&\underline{37.77}&\underline{32.00}\\ 
\bottomrule
\end{tabular}}
\vspace{-5pt}
\end{table*}

\subsection{Performance on Standard Benchmarks}
\label{sec:benchmark}

To evaluate the effectiveness of Generalist under standard benchmark settings, we conduct experiments with ResNet-18~\cite{DBLP:conf/cvpr/HeZRS16} and WRN-32-10~\cite{DBLP:journals/corr/ZagoruykoK16} on CIFAR-10~\cite{cifar} and CIFAR-100 \cite{cifar}. We train all models with SGD using momentum $0.9$ for $120$ epochs. The weight decay factor is $3.5\times10^{-3}$ for ResNet-18 and $7\times10^{-4}$ for WRN-32-10. For adversarial-training base learners, the initial learning rate is set to $0.01$ for ResNet-18 and $0.1$ for WRN-32-10 until epoch 40, after which it decays linearly. Following the settings in previous studies~\cite{crocerobustbench}, we set the perturbation budgets $\epsilon$ to $8/255$ for the $\ell_\infty$ norm and $128/255$ for the $\ell_2$ norm. The inner maximization employs PGD with 10 steps and step size $\epsilon/4$. For natural-training base learners, the initial learning rate is $0.1$ with weight decay $5\times10^{-4}$ for both architectures. For Generalist, we set $t^{\prime}=75$.

As shown in Tables \ref{tab:cifar10} and \ref{tab:cifar100}, on both CIFAR-10 and CIFAR-100, we first observe that Generalist-D achieves outstanding performance in alleviating either the natural–robustness trade-off or the robustness trade-off across norms. For example, Generalist-D ($NT+\ell_\infty$) consistently improves natural accuracy over existing robust training methods while maintaining comparable robustness. On CIFAR-10 with ResNet-18, it is the only method to achieve natural accuracy above 89\%, whereas the best competing method, TRADES, reaches only 87.88\%. In terms of robustness, Generalist-D ($NT+\ell_\infty$) attains 46.07\% under AA$_\infty$, substantially higher than TRADES (40.32\%). Similarly, Generalist-D ($\ell_\infty+\ell_2$) consistently achieves the best union robustness across all datasets and architectures. For instance, on CIFAR-100 with WRN-32-10, it improves union robustness to 33.38\%, surpassing the best baseline by more than 4\%. These results highlight the effectiveness of Generalist-D when focusing on a single trade-off issue.

When both trade-offs are expected to be mitigated simultaneously, Generalist-T provides a strong solution. Although in almost all cases, it is left behind by Generalist-D, which is more focused, Generalist-T still exceeds the performance of current methods in each aspect. For example, on CIFAR-100 dataset, when comparing Generalist-T with baselines designed for the natural-robustness trade-off mitigation (the first group of baselines in Table  \ref{tab:cifar100}), we observe that Generalist-T obtains higher natural accuracy than the existing best result (66.47\% vs 65.30\%, +1.17\%) on WRN-32-10. Meanwhile, since Generalist-T learns knowledge from a base learner that is adversarially trained under $\ell_2$ norm, its robustness against $\ell_2$ attacks increases markedly, raising AA$_2$ from 31.50\% to 37.77\%. Similarly, when comparing with methods aiming at universal robustness (the second group of baselines in Table  \ref{tab:cifar100}), we see that Generalist-T not only achieves higher union robustness but also boosts natural accuracy. The above evidences demonstrate the superior performance of Generalist-T in mitigating both trade-off issues.

It is worth noting that the final Generalist models are the same size as those trained by baseline methods. Moreover, Generalist-D and Generalist-T are trained using only the standard cross-entropy loss, without resorting to advanced loss designs. This simplicity indicates that further improvements may be achievable with more sophisticated objectives, suggesting promising potential for future extensions.

\begin{table*}[!t]
\renewcommand{\arraystretch}{1.1}
\centering
\caption{Comparison (\%) of Generalist with different training methods using ResNet-50 and WRN-50-2 on ImageNet. The attack budgets are set to $\varepsilon=4/255$ for the $\ell_\infty$ norm and $\varepsilon=64/255$ for the $\ell_2$ norm. Following \cite{crocerobustbench}, the evaluation is performed on 5000 images randomly sampling from the validation set. The best and second-best results are highlighted in \textbf{bold} and \underline{underlined}, respectively. Standard deviations are omitted as they are negligible ($<0.5\%$).
}
\label{tab:imagenet}
\resizebox{1.0\linewidth}{!}{
\begin{tabular}{c|ccccccccccccccccc}
\bottomrule
\multirow{3}{*}{Method}&           \multicolumn{6}{c}{ResNet-50}                            & &\multicolumn{6}{c}{WRN-50-2}                       \\ \cmidrule{2-7} \cmidrule{9-14}
& Natural&$\text{PGD}_\infty^{20}$& $\text{AA}_{\infty}$ &$\text{PGD}_2^{20}$ & $\text{AA}_2$  &Union&& Natural&$\text{PGD}_\infty^{20}$& $\text{AA}_{\infty}$ &$\text{PGD}_2^{20}$ & $\text{AA}_2$&Union\\ 
\midrule
Fast-AT~\cite{wong2020fast}&55.62&30.32&26.24&51.48&50.32&38.28&&58.48&31.56&28.10&51.62&49.90&39.00\\
AT~\cite{salman2020adversarially}&64.02&39.20&34.96&58.82&57.30&46.13&&68.46&41.30&38.14&63.42&62.30&50.22\\
\midrule
Generalist-D ($NT+\ell_\infty)$&\textbf{65.48}&\textbf{40.52}&35.88&59.94&58.66&47.27&&\textbf{68.92}&\underline{43.30}&39.60&63.28&62.10&50.85\\
Generalist-D  ($\ell_\infty+\ell_2$) &64.92&39.88&35.86&\underline{60.08}&\textbf{58.98}&\textbf{47.42}&& 68.36&\textbf{43.38}&39.76&\textbf{63.60}&\textbf{62.64}&\textbf{51.25}\\
Generalist-T ($NT+\ell_\infty+\ell_2$)&\underline{65.38}&\underline{40.16}&\textbf{35.88}&\textbf{60.28}&\underline{58.94}&\underline{47.41}&&\underline{68.70}&43.16&\textbf{39.76}&\underline{63.54}&\underline{62.58}&\underline{51.17}\\ 
\bottomrule
\end{tabular}}
\vspace{-10pt}
\end{table*}

\begin{table*}[!t]
\renewcommand{\arraystretch}{1.1}
\centering
\caption{Generalization (\%) of AT methods on out-of-distribution (OOD) datasets. We highlight the best result in \textbf{bold} and the second-best results with \underline{underlines}. Standard deviations are omitted as they are negligible ($<0.5\%$).}
\label{tab:ood}
\resizebox{0.85\linewidth}{!}{
\begin{tabular}{c|ccccccccccccccccc}
\bottomrule
\multirow{3}{*}{Method}&           \multicolumn{2}{c}{CIFAR-10-C}   &   &\multicolumn{2}{c}{CIFAR-10-P}&&\multicolumn{2}{c}{CIFAR-100-C} & &\multicolumn{2}{c}{CIFAR-100-P}               \\ \cmidrule{2-3} \cmidrule{5-6}\cmidrule{8-9}\cmidrule{11-12}
&RN18&WRN32& & RN18&WRN32&&RN18&WRN32& & RN18&WRN32\\ 
\midrule
AT~\cite{PGD}&75.20&74.83&&83.35&83.11&&41.44&42.10&&51.37&54.28\\
AT(NT+$\ell_\infty$)~\cite{halfat}&75.21&76.30&&83.35&84.84&&42.46&46.35&&54.48&60.52\\
TRADES ($\beta=1$)~\cite{trades}&75.12&76.66&&84.54&86.04&&44.18&46.63&&55.65&58.34\\
FAT~\cite{fat}&75.94&77.25&&85.10&86.13&&45.45&47.55&&57.83&61.54\\
IAT~\cite{IAT}&72.58&77.39&&81.39&86.51&&42.24&47.67&&54.26&60.07\\
SCORE~\cite{LSE}&74.04&76.83&&85.02&85.95&&19.58&32.36&&42.48&38.02\\
AGR~\cite{AGR}&74.05&74.79&&83.98&84.54&&43.87&46.32&&55.25&59.02\\
PART~\cite{PART}&71.86&70.09&&80.78&79.13&&42.09&43.32&&53.20&54.32\\
\midrule
AT ($\ell_\infty$+$\ell_2$)~\cite{tramer2019adversarial}&74.04&76.21&&81.50&84.13&&43.10&45.07&&53.68&55.73\\
MSD~\cite{MSD}&45.79&48.83&&48.03&85.75&&22.95&23.96&&25.64&26.89\\ 
E-AT~\cite{E_AT}&36.81&51.10&&37.40&55.56&&36.30&34.88&&43.38&41.99\\
RMC~\cite{RMC}&53.94&56.46&&57.86&60.08&&23.56&23.11&&26.41&25.83\\

\midrule
Generalist-D ($NT+\ell_\infty)$ &\underline{77.67}&\underline{79.25}&&\underline{85.68}&\underline{87.93}&&\underline{48.51}&\underline{50.60}&&\underline{60.68}&\underline{62.94}\\
Generalist-D  ($\ell_\infty+\ell_2$) &76.41&78.20 &&85.33&86.87&&47.35&50.27&&57.93&61.73\\
Generalist-T ($NT+\ell_\infty+\ell_2$)&\textbf{77.76}&\textbf{79.64}&&\textbf{85.79}&\textbf{87.96}&&\textbf{49.07}&\textbf{51.09}&&\textbf{60.81}&\textbf{62.96}\\ 
\bottomrule
\end{tabular}}
\vspace{-7pt}
\end{table*}

\vspace{-5pt}
\subsection{Performance on Large-scale Dataset}
\label{sec:largescale}

To assess whether Generalist scales to realistic scenarios, we further evaluate it on ImageNet~\cite{deng2009imagenet} using ResNet-50~\cite{DBLP:conf/cvpr/HeZRS16} and WRN-50-2~\cite{DBLP:journals/corr/ZagoruykoK16}. Although adversarial training has been extensively studied on small datasets like the baselines in Tables~\ref{tab:cifar10} and~\ref{tab:cifar100}, its application to large-scale settings remains limited due to the substantial computational cost. Here, we compare Generalist against two representative approaches on large-scale adversarial training: Fast Adversarial Training (Fast-AT)\cite{wong2020fast} and standard AT\cite{salman2020adversarially}. For a fair comparison, robustness is evaluated under an $\ell_\infty$ budget of $4/255$ and an $\ell_2$ budget of $64/255$. To reduce computation, adversarial examples are crafted with 3 PGD steps instead of 10. Given the scale of ImageNet, we set the weight decay to $1\times10^{-4}$ for adversarial training and $2\times10^{-4}$ for natural training, and double the number of epochs to ensure convergence. All other hyperparameters follow section \ref{sec:benchmark}.

As shown in Table \ref{tab:imagenet}, Generalist also achieves strong performances on the ImageNet dataset. For example, considering the natural accuracy, Generalist-D ($NT+\ell_\infty$) increases it from 64.02\% to 65.48\% on ResNet-50 while maintaining high $\ell_\infty$ robustness (+0.92\% AA$_\infty$ over AT). Compared to the specificity of Generalist-D, Generalist-T also achieves remarkable performances in alleviating both trade-off issues: On WRN-50-2, natural accuracy is improved from 68.46\% to 68.70\% while achieving high union robustness (+0.95\%).

\vspace{-5pt}
\subsection{Performance on OOD Datasets}
\label{sec:ood}

\begin{figure}[t]
  \begin{minipage}{1.0\linewidth}
    \makebox[.48\linewidth]{\includegraphics[width=.48\linewidth]{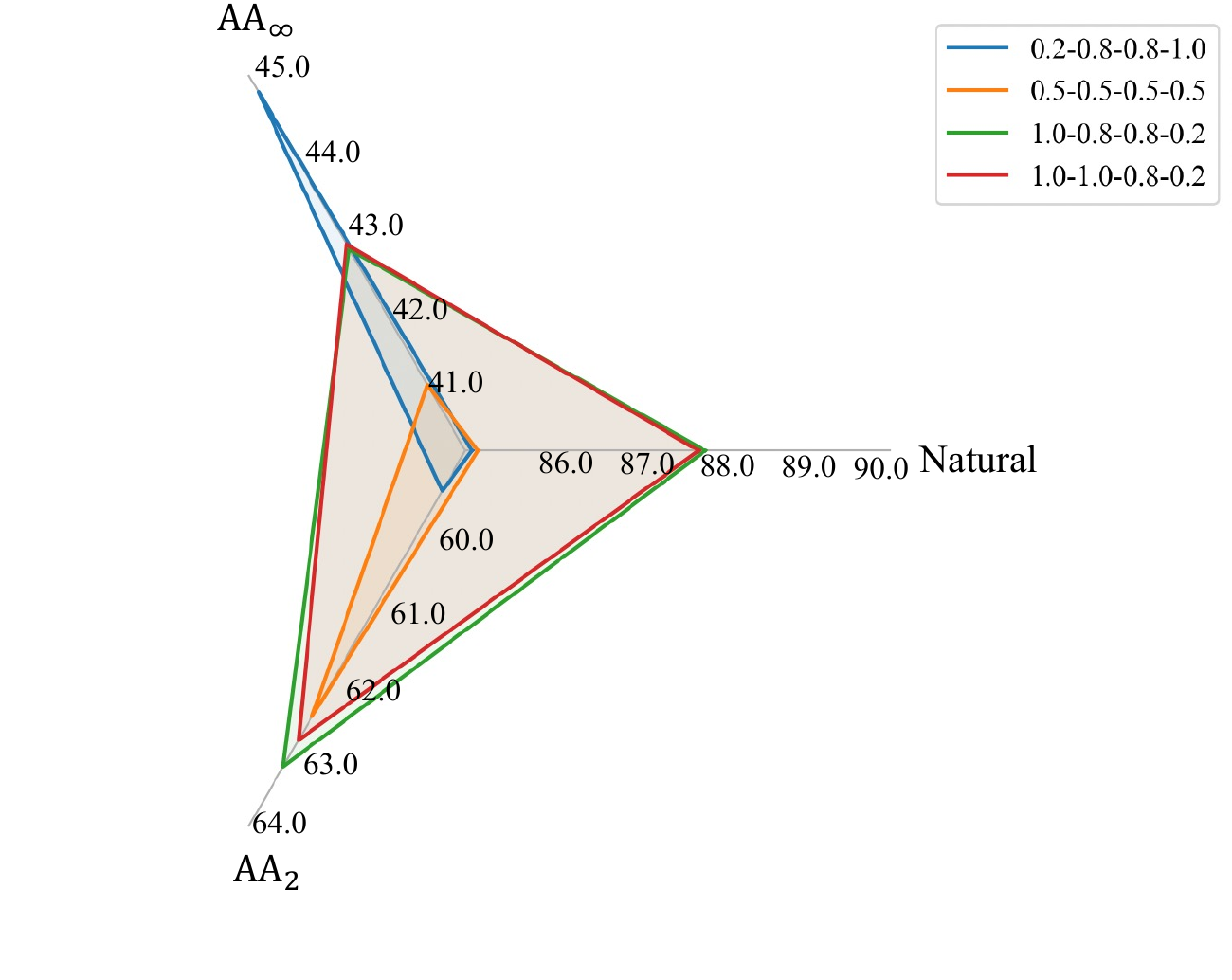}}%
    \hspace{8pt}
    \makebox[.48\linewidth]{\includegraphics[width=.48\linewidth]{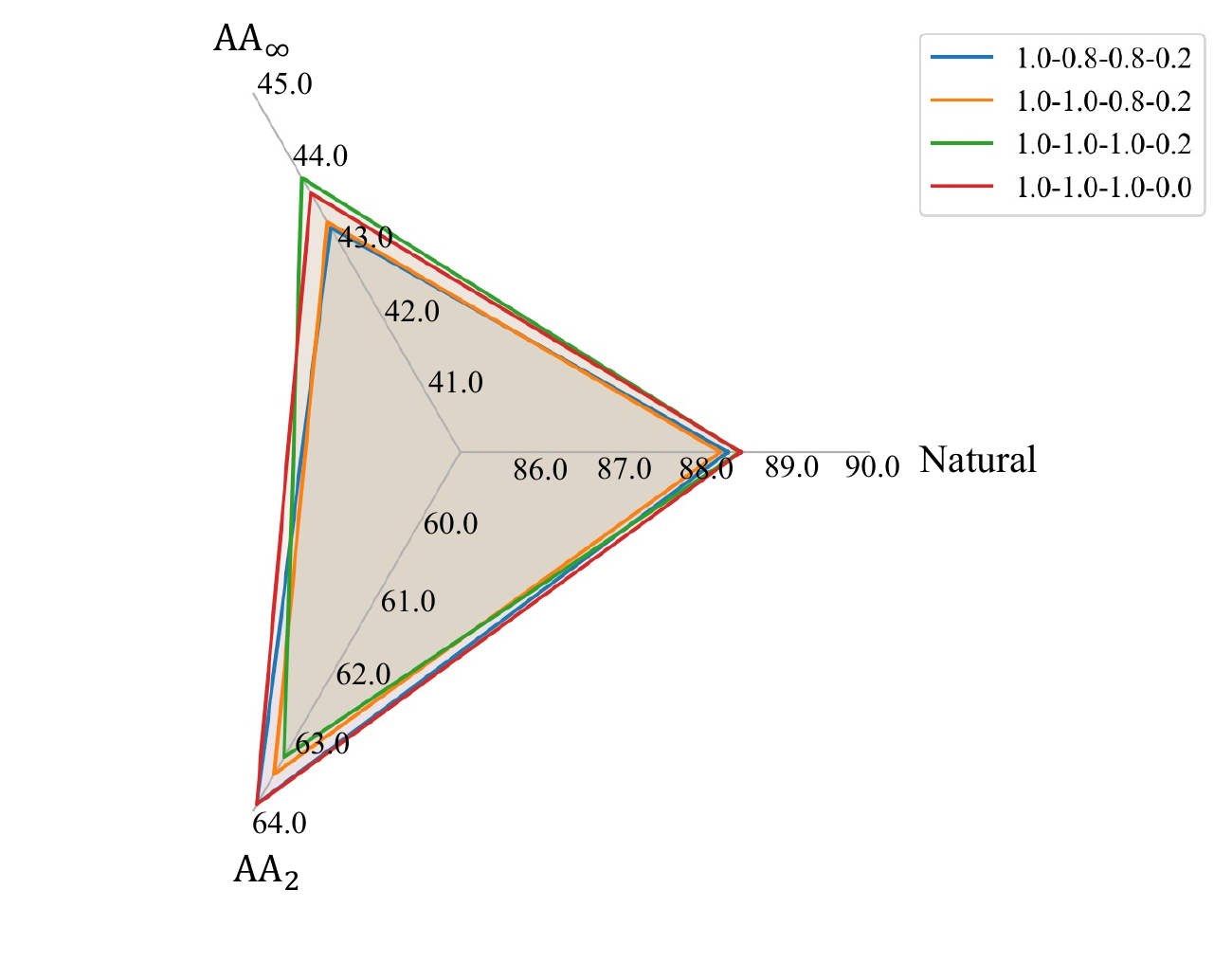}}%
    
    {\makebox[.48\linewidth]{(a) Change Trends}}%
    {\makebox[.48\linewidth]{(b) Decay Stages}}%
  \end{minipage}%
\caption{The impact of $\gamma_1$ to the performances of Generalist-T.}
\vspace{-10pt}
\label{fig:triple_gamma1}
\end{figure}
In real-world deployment, models inevitably encounter not only adversarial perturbations but also unforeseen distribution shifts. Out-of-distribution (OOD) data~\cite{liu2021towards} differ from the training data in aspects such as style, background, or physical distortions (e.g., brightness changes or glass blurring). Unlike in-distribution samples, these OOD inputs can mislead models even without adversarial perturbations. Ideally, a robust model should withstand not only attacks incorporated during training but also generalize to perturbations from previously unseen domains.

To evaluate this property, we test on four OOD benchmarks: CIFAR-10-C, CIFAR-100-C, CIFAR-10-P, and CIFAR-100-P~\cite{hendrycks2019robustness}, where all corruptions are unseen for both AT baselines and Generalist. For CIFAR-10-C and CIFAR-100-C, we report model accuracy under level-5 natural corruptions. For CIFAR-10-P and CIFAR-100-P, we report the average accuracy across corruption sequences. Results are summarized in Table~\ref{tab:ood}.

The findings show that Generalist achieves consistently better resistance to OOD attacks compared with baseline methods, confirming its ability to integrate knowledge from diverse tasks and generalize to unseen scenarios. For example, on the CIFAR-10-C dataset, the accuracy of vanilla AT is 75.20\% while Generalist-T improves it to 77.76\%. In addition, when comparing the performances between Generalist-D and Generalist-T, it is interesting to see that Generalist-T achieves higher performance across all four datasets. This is because a larger number of base learners contributes to a more knowledgeable global learner, which in turn captures invariant features more effectively and enhances robustness against previously unseen threats.

\begin{figure}[t]
  \begin{minipage}{0.48\linewidth}
    \centering

    \includegraphics[width=\linewidth]{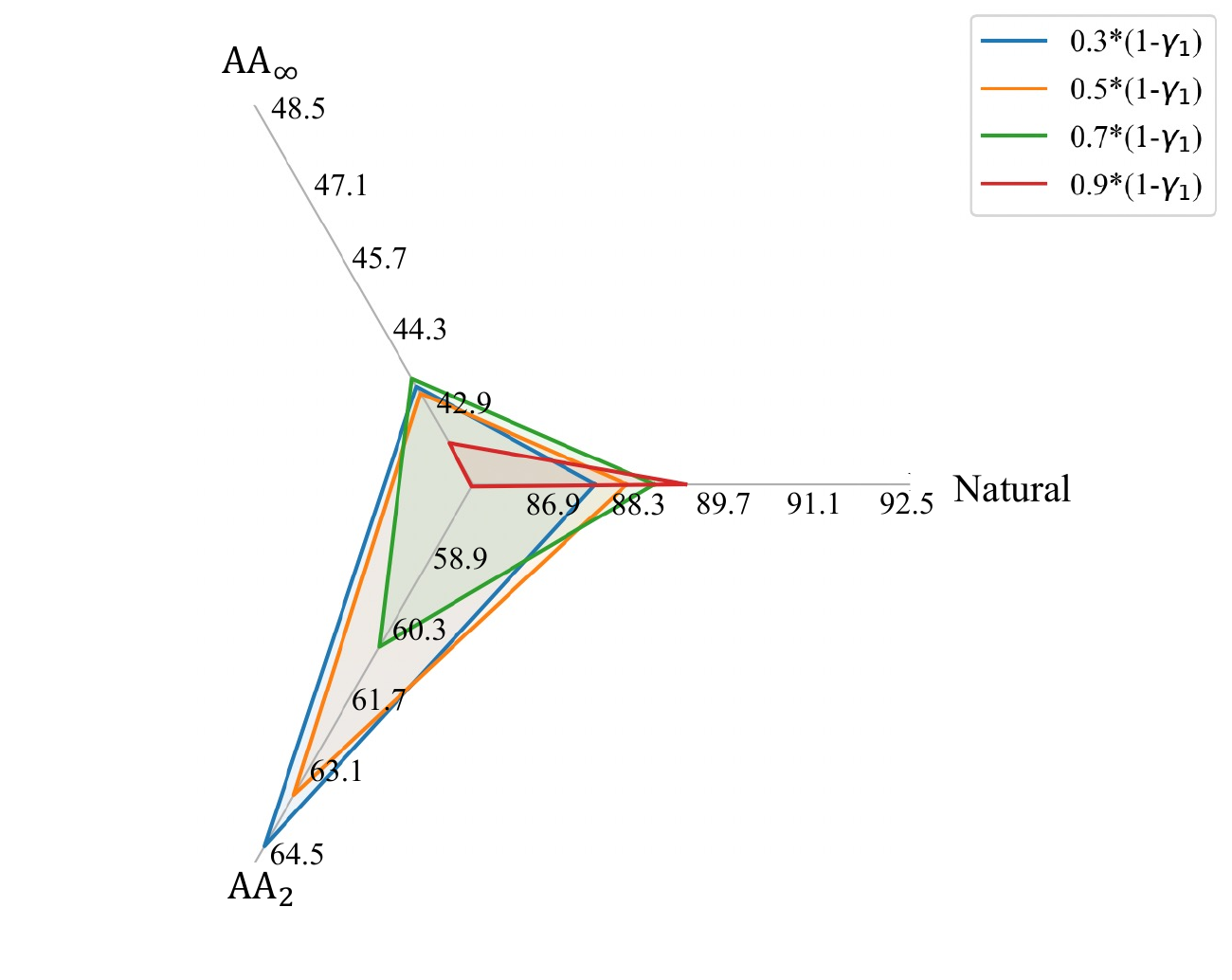}
    \caption{The impact of $\gamma_2$ to the performances of Generalist-T.}
    \label{fig:triple_gamma2}
  \end{minipage}\hfill
  \begin{minipage}{0.48\linewidth}
    \centering
    \includegraphics[width=\linewidth]{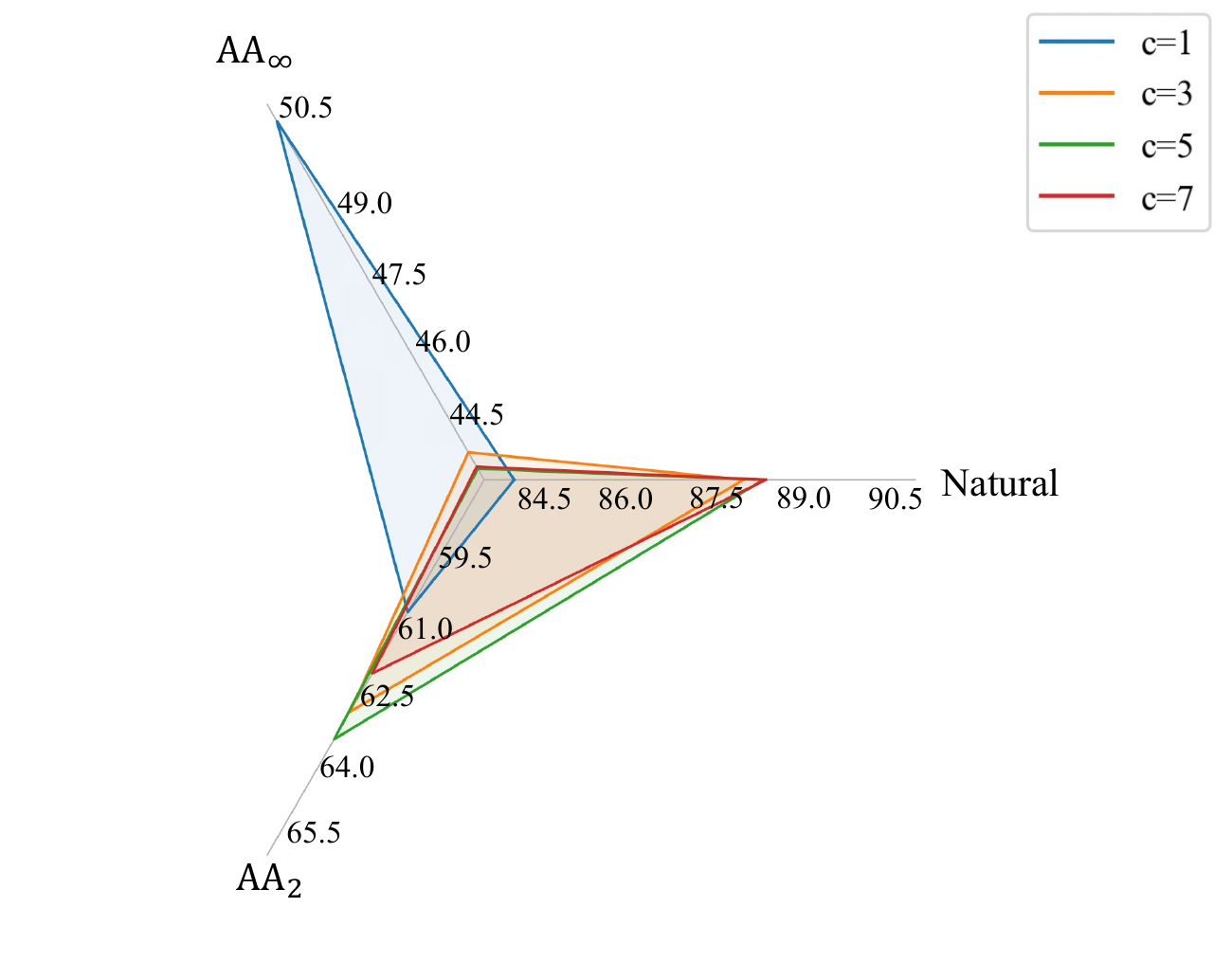}
    \caption{The impact of $c$ to the performances of Generalist-T.}
    \label{fig:frequency}
  \end{minipage}
\end{figure}

\section{Ablation Studies}
\label{sec:mixing}

In this section, we conduct a series of ablation studies to better understand the Generalist framework. As illustrated in Algorithms~\ref{alg:Generalist-D} and~\ref{alg:Generalist-T}, two key factors govern the trade-off behavior of Generalist: the \emph{mixing ratio $\gamma$} (with $\gamma_1$ for Generalist-D and $\gamma_1,\gamma_2$ for Generalist-T) and the \emph{communication frequency $c$} between base learners and the global learner. Unless otherwise stated, all experiments are performed on CIFAR-10 with ResNet-18. For space considerations, we report results on Generalist-T in the main text and defer those of Generalist-D to Appendix \ref{app:abla_d}.

\subsection{Mixing Strategies of $\gamma$} 

In the Generalist framework, the coefficient $\gamma_{\mathcal{W}}$ controls the relative contribution of each base learner to the global parameters. Although $\gamma$ is a scalar, we dynamically adjust its value during training to ensure that parameter aggregation occurs only after all base learners have acquired sufficient task-specific knowledge. Specifically, we define several breakpoints along the training trajectory and update $\gamma$ through a piecewise linear schedule.

For Generalist-T, $\gamma_{\mathcal{W}}$ corresponds to $\gamma_1$ and $\gamma_2$. We first fix $\gamma_2$ and examine the effect of $\gamma_1$, as shown in Figure~\ref{fig:triple_gamma1}. Figure~\ref{fig:triple_gamma1}(a) compares different change trends of $\gamma_1$. We observe that decaying $\gamma_1$ over time yields the most balanced performance across all metrics—achieving not only the best AA$_2$ and natural accuracy but also comparable AA$_\infty$. Figure~\ref{fig:triple_gamma1}(b) further investigates different decay stages, showing that a late-stage decay schedule (1.0–1.0–1.0–0.0) achieves the best results, confirming the benefit of gradual reduction once base learners have stabilized. Consequently, we adopt this configuration as the default setting in experiments. 

Next, we analyze the effect of $\gamma_2$, noting that $\gamma_1+\gamma_2<1$ must hold to preserve a positive contribution from the third base learner $\theta_3$. We therefore introduce a hyperparameter $b$ and set $\gamma_2=b(1-\gamma_1)$. As shown in Figure~\ref{fig:triple_gamma2}, $\gamma_2$ directly governs the trade-off between AA$_2$ robustness and natural accuracy. Empirically, we find that $\gamma_2=0.5(1-\gamma_1)$ provides the most favorable balance between AA$_2$ and natural accuracy, and we also adopt this configuration as the default setting in experiments.

\vspace{-5pt}
\subsection{Communication Frequency $c$} 

In Generalist, the parameter $c$ determines how frequently the global learner communicates with base learners during training. With the mixing ratio fixed, we vary $c$ from 1 to 7 to investigate its effect. The results for Generalist-T are shown in Figure \ref{fig:frequency}.

We observe that when $c$ is too small (\textit{e.g.}, $c=1$), base learners are synchronized too frequently, which prevents them from fully adapting to their respective sub-tasks. This leads to lower natural accuracy and weaker AA$_2$ robustness, even though AA$_\infty$ improves due to the dominance of adversarial signals. As $c$ increases to a moderate value, both natural accuracy and AA$_2$ improve markedly, indicating that allowing base learners sufficient independent optimization steps helps them specialize while still benefiting from periodic aggregation. However, when $c$ becomes too large (\textit{e.g.}, $c=7$), the communication becomes too sparse, and the global learner struggles to integrate knowledge effectively, causing a slight drop. Overall, the results reveal that $c$ implicitly governs the balance between specialization and synchronization among tasks. Setting $c=5$ provides the most favorable trade-off, achieving high natural accuracy and strong robustness across both $\ell_\infty$ and $\ell_2$ perturbations. Therefore, we adopt $c=5$ as the default communication frequency in experiments.

\vspace{-5pt}
\subsection{Transferability of Hyperparameters}

In practice, the mixing parameter $\gamma$ and communication frequency $c$ can be selected without prior knowledge of the target model or dataset. We first identify the optimal configuration on a specific architecture and dataset, and then directly transfer these hyperparameters to other settings. In other words, the best $\gamma$, $c$, and their scheduling strategies found on one model or dataset can be effectively reused for others with minimal or no fine-tuning.
For instance, in the CIFAR-100 experiments reported in Table~\ref{tab:cifar100}, we simply adopt the optimal parameters and update strategies obtained from CIFAR-10, yet still achieve strong performance. A similar observation holds for large-scale datasets such as ImageNet, where using the same transferred parameters yields higher natural accuracy and union robustness than the baselines.

\begin{figure}[t]
  \begin{minipage}{1.0\linewidth}
    \makebox[.43\linewidth]{\includegraphics[width=.38\linewidth]{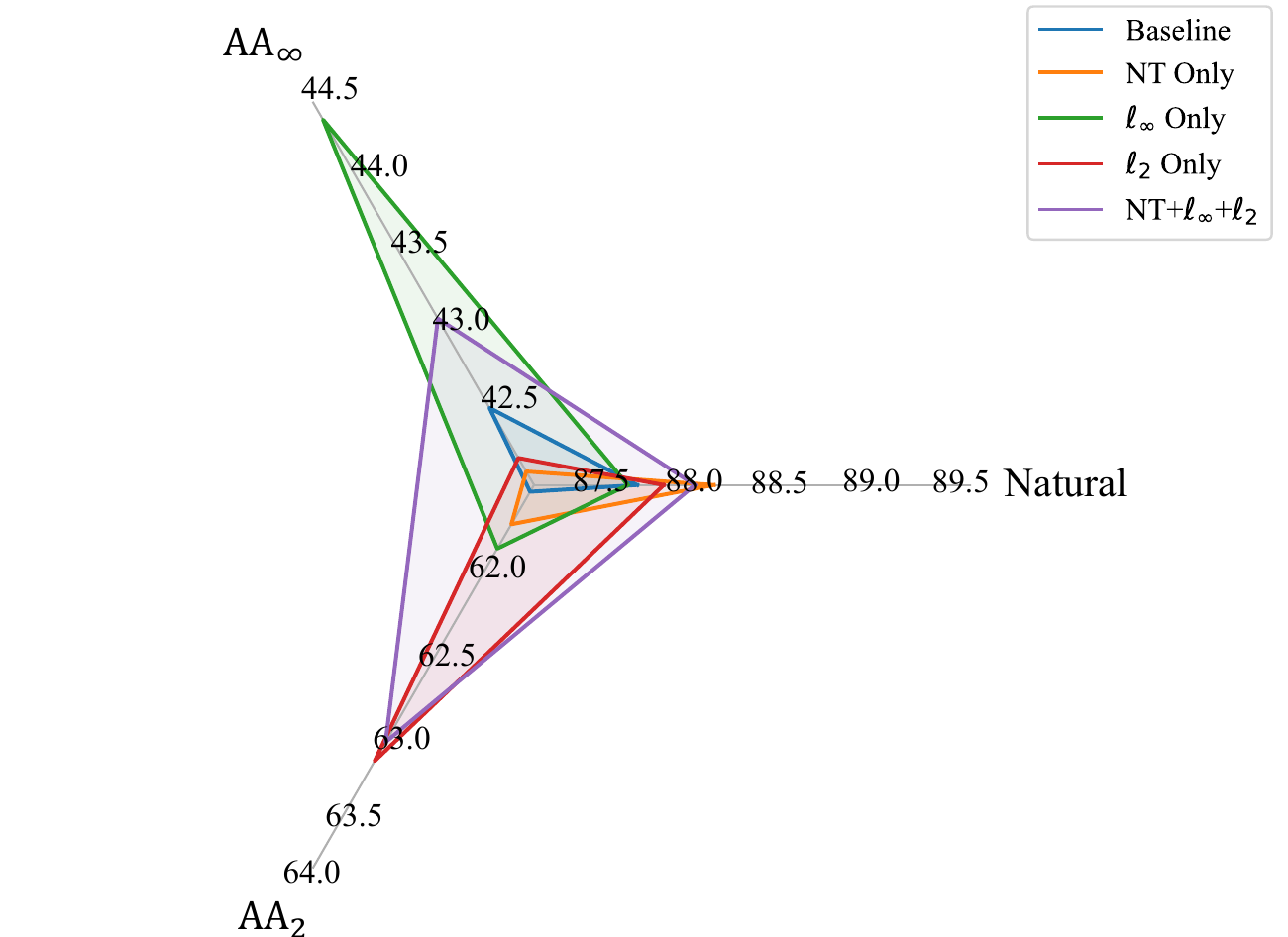}}%
    \hfill
    \makebox[.53\linewidth]{\includegraphics[width=.46\linewidth]{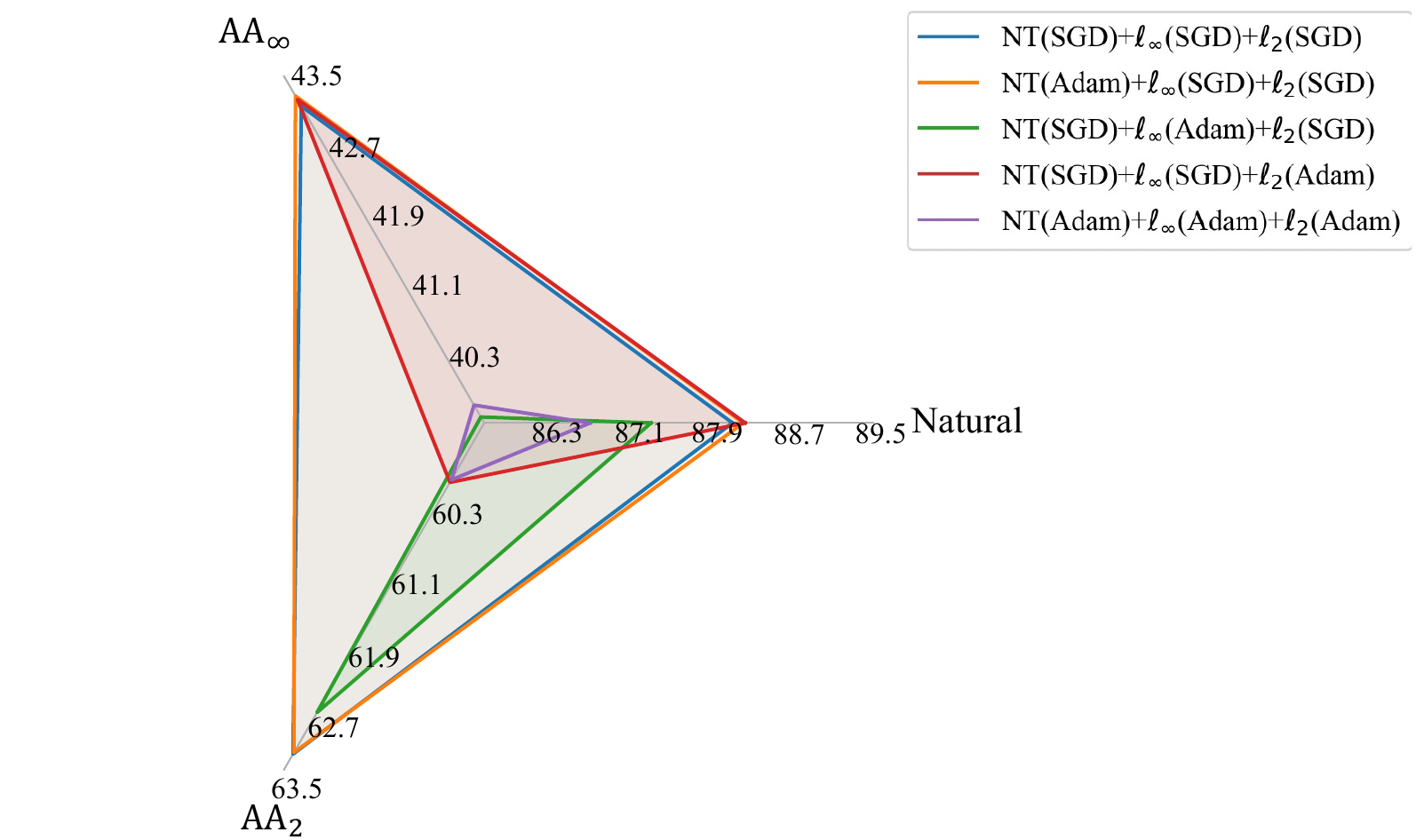}}%
    
    \makebox[.38\linewidth]{\small (a) Weight Averaging}\hspace{20pt}
    \makebox[.45\linewidth]{\small (b) Optimizer}%
  \end{minipage}%
\caption{Performance of Generalist-T under different configurations of (a) weight averaging and (b) optimizers. 
}\label{fig:optim_triple}
\vspace{-10pt}
\end{figure}

\vspace{-5pt}
\section{Customized Policies for Individuals} 
\label{sec:customized}

As discussed above, one of the key advantages of Generalist over standard joint training frameworks is its flexibility: each base learner can adopt a customized optimization strategy tailored to its specific task, rather than sharing a uniform strategy across all tasks.
In this section, we investigate whether such task-specific customization further enhances performance when Generalist is combined with diverse training techniques. For brevity, we focus on Generalist-T as a representative case, and refer to Appendix \ref{app:double_policy} for results on Generalist-D.

\vspace{-5pt}
\subsection{Weight Averaging}
Recent studies have demonstrated that weight averaging (WA) can substantially enhance both natural and robust generalization~\cite{DBLP:journals/corr/abs-2103-01946,DBLP:conf/uai/IzmailovPGVW18,wang2022selfensemble}. WA aggregates model parameters across training checkpoints to form an implicit ensemble, thereby stabilizing optimization and improving convergence. However, in traditional joint training frameworks, WA often fails to simultaneously benefit both accuracy and robustness. 

Therefore, we apply WA independently to each base learner in Generalist. The results for Generalist-T are presented in Figure~\ref{fig:optim_triple}(a). We evaluate several configurations: applying WA to a single base learner (NT Only, $\ell_\infty$ Only, or $\ell_2$ Only) and applying WA to all base learners simultaneously (NT+$\ell_\infty$+$\ell_2$). As illustrated in Figure~\ref{fig:optim_triple}(a), Generalist-T equipped with WA across all base learners achieves the most balanced and superior performance compared to the baseline. In contrast, applying WA to only one base learner leads to asymmetric improvements. When WA is applied solely to the NT learner, natural accuracy increases, but AA$_\infty$ declines. Applying WA only to the $\ell_\infty$ learner enhances AA$_\infty$ but reduces natural accuracy, whereas equipping only the $\ell_2$ learner raises AA$_2$ but simultaneously decreases AA$_\infty$. These results indicate that partial use of WA disrupts synchronization among base learners, as those equipped with WA converge faster on their subtasks, causing misalignment in learning dynamics. Conversely, enabling WA for all base learners ensures coordinated optimization and leads to a more stable and well-generalized global model.

\subsection{Different Optimizers}

We further examine the impact of using different optimizers for individual base learners. Specifically, we consider SGD with momentum and Adam under a piecewise learning rate schedule as the baselines. The initial learning rate for Adam is set to 0.0001. We then alternately substitute the optimizer of each base learner while keeping the others unchanged. The results are shown in Figure~\ref{fig:triple_gamma2}. 

As observed, applying Adam to the NT learner while keeping SGD for the $\ell_\infty$ and $\ell_2$ learners yields the most balanced overall performance, achieving the highest natural accuracy with comparable robustness. In contrast, assigning Adam to the adversarial learners (either $\ell_\infty$ or $\ell_2$) noticeably degrades robustness. For instance, when Adam is used for the $\ell_\infty$ learner, AA$_\infty$ drops significantly, and applying it to the $\ell_2$ learner similarly weakens AA$_2$ relative to the all-SGD configuration. These results suggest that Adam benefits natural accuracy but is less suited for adversarial training, where SGD provides more stable and consistent updates. Overall, this experiment confirms the advantage of Generalist’s decoupled optimization scheme: each base learner can adopt an optimizer best aligned with its task characteristics. By assigning Adam to natural learning and SGD to adversarial learners, Generalist effectively leverages the strengths of both optimizers to achieve a better trade-off between accuracy and robustness.

\section{Interpretable Analysis}
\label{sec:visual}

\begin{figure}[!t]
\centering
  \begin{minipage}{\linewidth}
    \makebox[.5\linewidth]{\includegraphics[width=.5\linewidth]{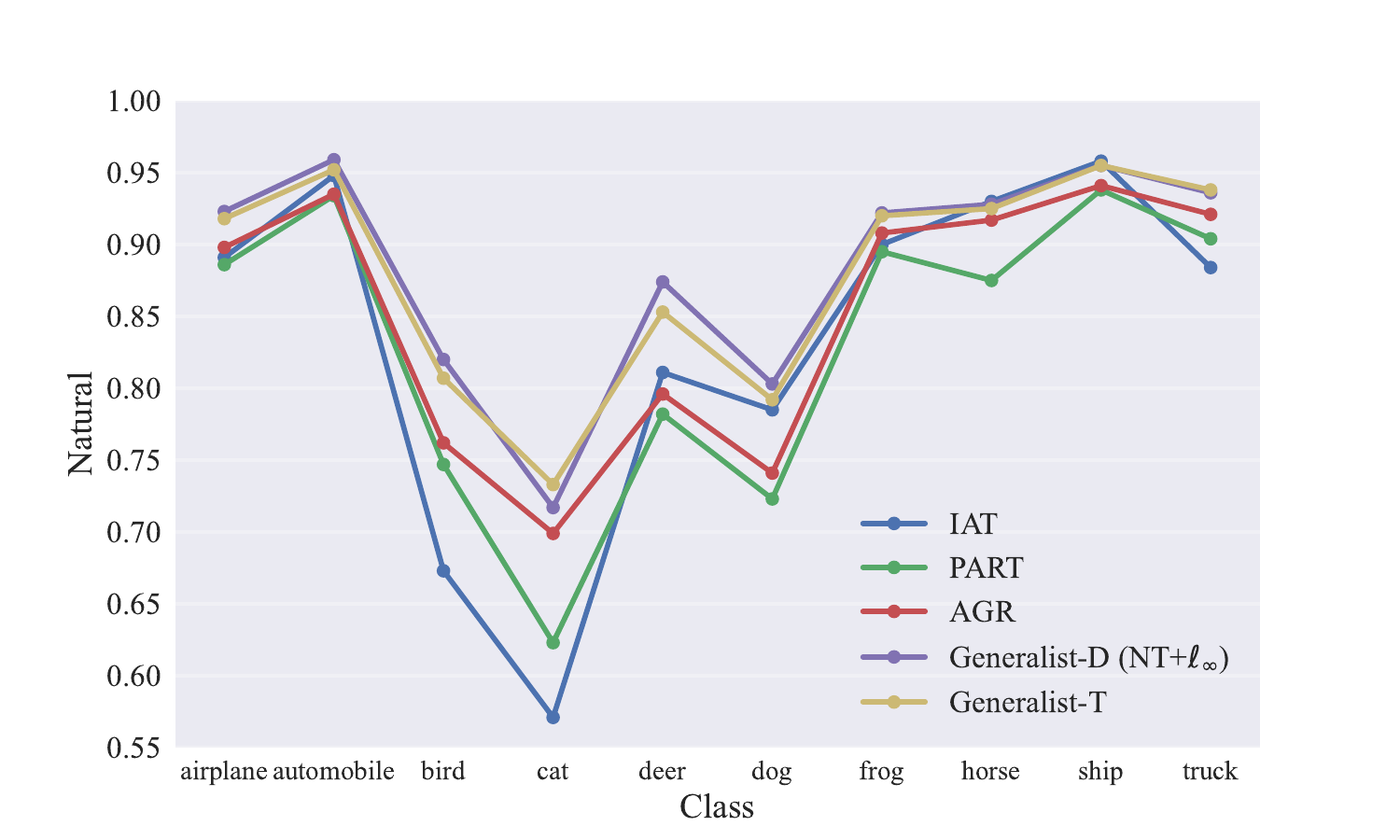}}%
    \hfill
    \makebox[.5\linewidth]{\includegraphics[width=.5\linewidth]{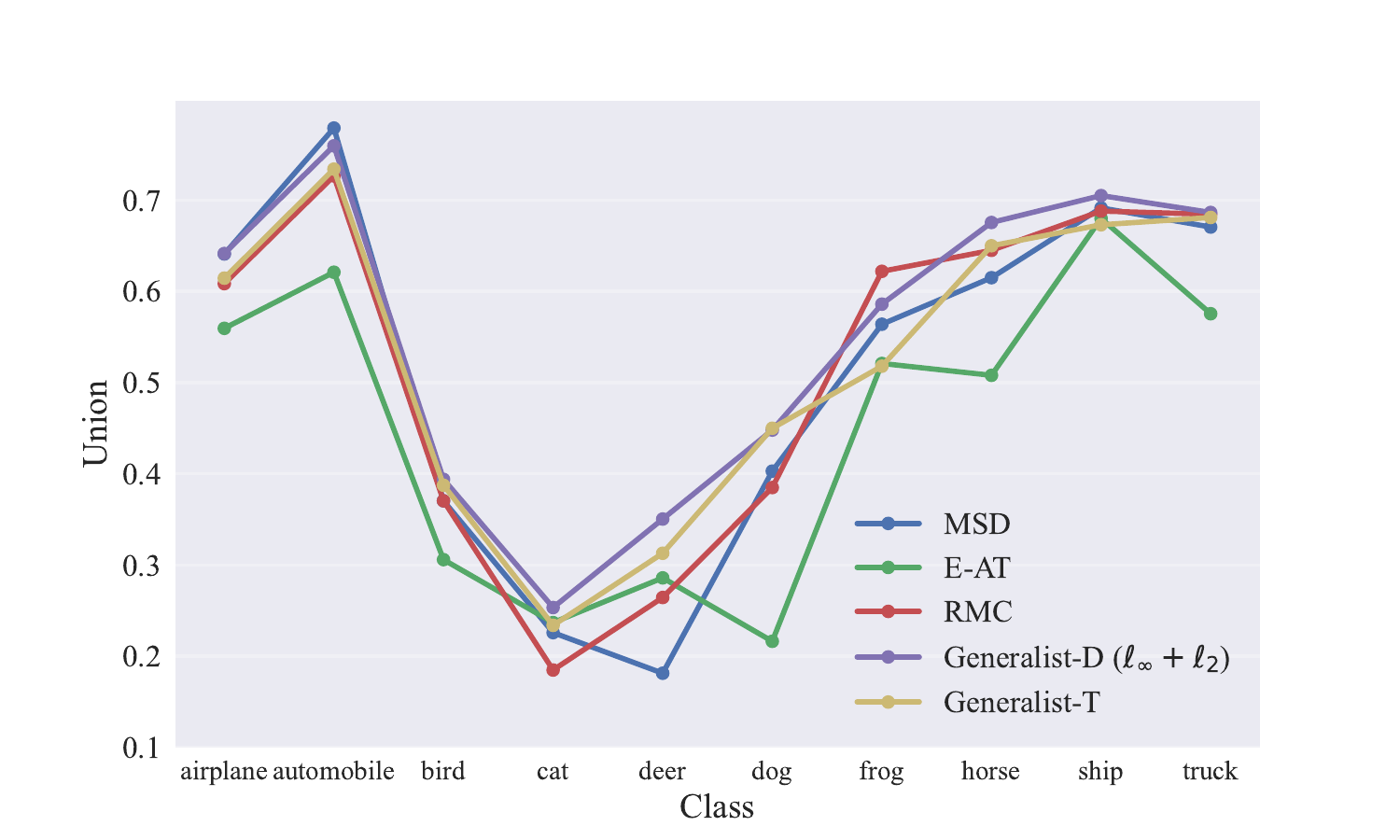}} 
    \makebox[.5\linewidth]{\small (a)}%
    \makebox[.5\linewidth]{\small (b)}%
  \end{minipage}%
\caption{Analysis of class-wise predictions that different robust classifiers have on CIFAR-10 using ResNet-18. (a) Class-wise natural accuracy of AT variants for the accuracy improvement goal. (b) Class-wise union robustness of AT variants for the multi-norm robustness goal.}\label{fig:cls}
\vspace{-10pt}
\end{figure}

While the previous sections demonstrate the superior quantitative performance of Generalist, it remains essential to understand how such improvements arise. To this end, we conduct an interpretable analysis to examine the representations learned by Generalist from both quantitative and qualitative perspectives.

We first investigate the class-wise behavior of different adversarial training methods to reveal which categories benefit most from Generalist’s design. We then complement this with a visual interpretability study using Grad-CAM, comparing the attention maps of Generalist and baseline models on both natural and adversarial examples.

\subsection{Class-wise Behavior Analysis}

Considering that Generalist achieves remarkable performance in mitigating both the natural–robustness and multi-norm trade-offs, it is instructive to analyze in detail how these gains are distributed across different categories. To this end, we examine class-wise prediction behaviors of robust classifiers on CIFAR-10. We conduct experiments with six representative baselines and three variants of Generalist, divided into two groups according to their learning objectives. The first group—including IAT, PART, AGR, Generalist-D ($NT+\ell_\infty$), and Generalist-T—focuses on improving natural accuracy while maintaining competitive $\ell_\infty$ robustness. The second group—including MSD, E-AT, RMC, Generalist-D ($\ell_\infty+\ell_2$), and Generalist-T—targets robustness generalization across multiple perturbation norms. Their class-wise performances are visualized in Figure~\ref{fig:cls}.

From Figure~\ref{fig:cls}(a), we observe that all models exhibit noticeable drops in accuracy for bird, cat, deer, and dog—the so-called hard classes identified in prior work~\cite{wang2019symmetric}. Nevertheless, Generalist-D ($NT+\ell_\infty$) and Generalist-T consistently achieve higher natural accuracy across almost all categories, and the gains are particularly significant on these hard classes, while maintaining comparable performance on the easier ones such as automobile, ship, and truck. A similar trend is observed in Figure~\ref{fig:cls}(b): The benefits of  Generalist-D ($\ell_\infty+\ell_2$) and Generalist-T are prominent in the hard classes (bird, cat, deer, dog). These results suggest that Generalist not only improves overall robustness but also alleviates class-specific vulnerability, leading to more balanced and consistent generalization across categories.

\subsection{Visual Interpretability Analysis}

To better understand these behavioral differences, we visualize representative samples from the CIFAR-10 test set that are misclassified by baseline methods but correctly predicted by Generalist-D and Generalist-T, including both natural examples (Figure~\ref{fig:heat}(a)) and $\ell_2$-bounded adversarial examples crafted by PGD$^{20}_2$ (Figure~\ref{fig:heat}(b)). Using Grad-CAM~\cite{selvaraju2017grad}, we examine the spatial attention regions of each model to understand where they focus when making predictions.

Baseline AT methods, though robust to certain perturbations, often rely on spurious background correlations. For example, in Figure~\ref{fig:heat}(a), PART misclassifies an airplane as a bird because it attends to the blue-sky background, while FAT and AGR also overemphasize irrelevant contextual textures. In Figure~\ref{fig:heat}(b), E-AT and RMC fail on dog examples where the background or color cues overlap, showing that their robustness is largely context-dependent. In contrast, Generalist-D and Generalist-T consistently focus on the foreground object regions, capturing structural and shape cues that remain stable across both natural and adversarial domains. 

Overall, these qualitative observations reinforce the quantitative findings: Generalist effectively filters out background noise and learns foreground-centered, semantically meaningful representations that generalize across diverse perturbations.

\begin{figure*}[!t]
\centering
  \begin{minipage}{1.0\linewidth}
      \hspace{30pt}
    \makebox[.4\linewidth]{\includegraphics[width=.4\linewidth]{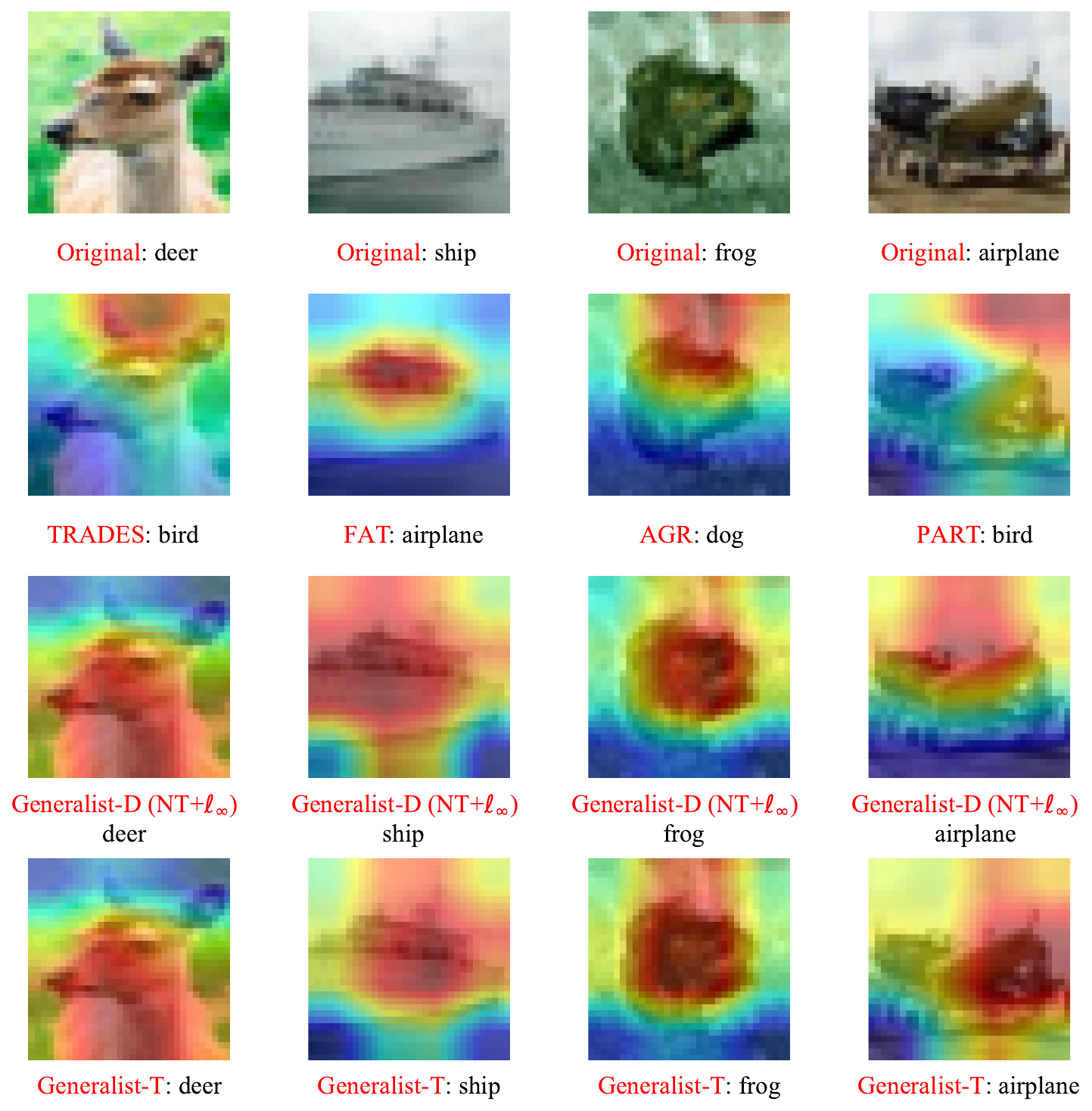}}%
    \hspace{50pt}
    \makebox[.4\linewidth]{\includegraphics[width=.4\linewidth]{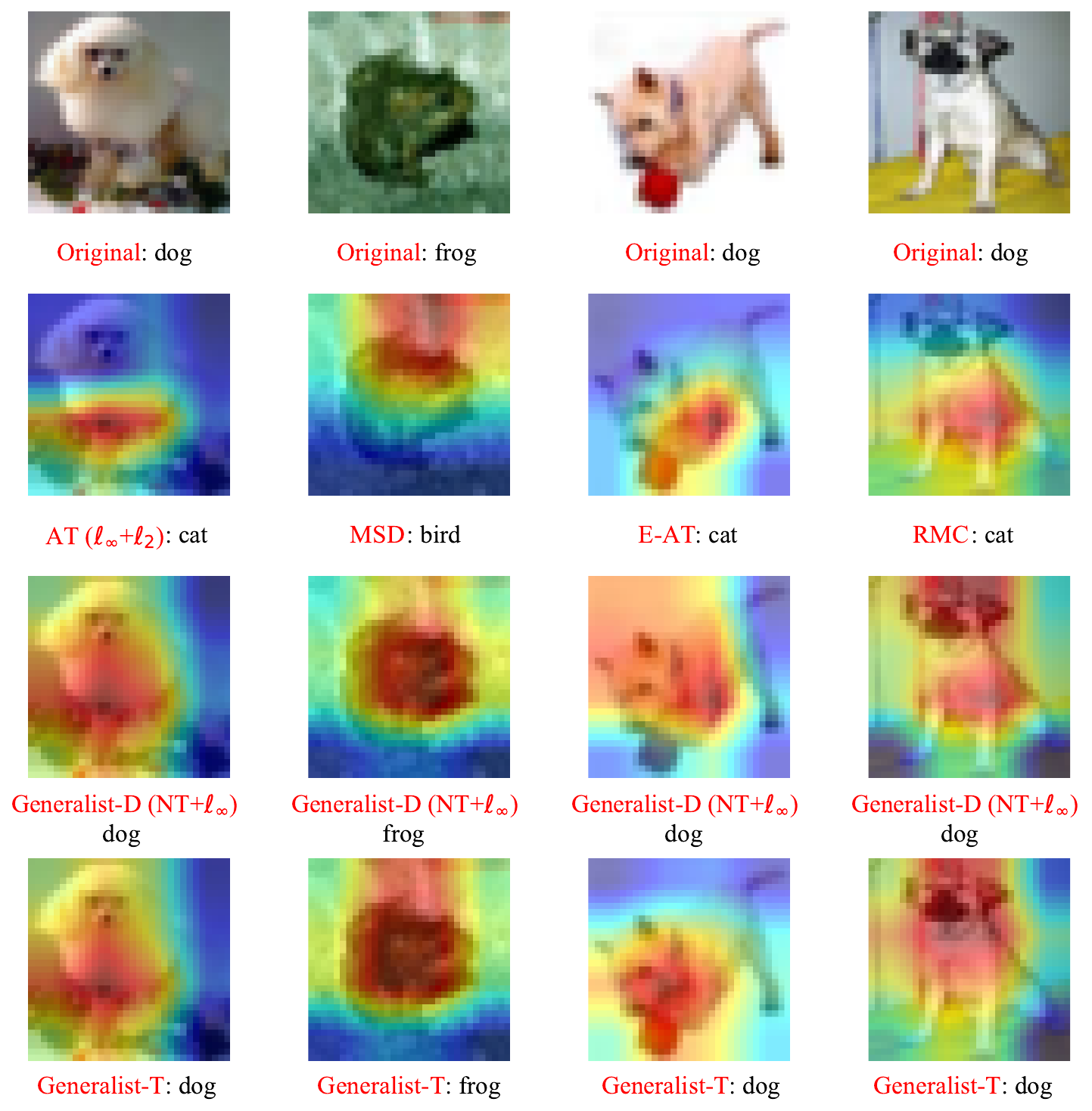}} 
    
    \hspace{5pt}
    \makebox[.5\linewidth]{ (a)}%
    \makebox[.5\linewidth]{ (b)}%
  \end{minipage}%
    \caption{Heatmap visualizations on the CIFAR-10 test set using Grad-CAM. (a) Natural samples misclassified by baselines but correctly recognized by Generalist-D ($NT+\ell_\infty$) and Generalist-T. (b) Adversarial samples carfted by PGD$_{2}^{20}$ misclassified by baselines while Generalist-D ($\ell_\infty+\ell_2$) and Generalist-T make correct predictions.}\label{fig:heat}
    \vspace{-10pt}
\end{figure*}

\section{Conclusion}
\label{conclusion}
In this paper, we propose a multi-expert framework named Generalist to alleviate both the natural-robustness and multi-norm tradeoff issues, which trains multiple base learners responsible for complementary fields and collects their parameters to construct a global learner. By decoupling from the joint training paradigm, each base learner can wield customized strategies based on data distribution. According to its detailed applicable scenarios, we develop three variants from one framework including: Generalist-D ($NT+\ell_\infty$), Generalist-D ($\ell_\infty+\ell_2$) and Generalist-T ($NT+\ell_\infty+\ell_2$). We provide not only theoretical analysis to justify the effectiveness of task-aware strategies but also extensive experiments to show the extraordinary performances of Generalist on both small and big datasets. In addition, the extensive experiments on the OOD datasets reveal that the knowledge learned by Generalist can be generalized to resisting attacks from unseen domains. Our further ablation studies also show the advantage of Generalist in assigning customized policies for individual learners and capturing the invariant robust features. We hope Generalist will serve as a foundation for the development of fully robust classifiers in the future.

\bibliography{ref}

\begin{thebibliography}{10}
\providecommand{\url}[1]{#1}
\csname url@samestyle\endcsname
\providecommand{\newblock}{\relax}
\providecommand{\bibinfo}[2]{#2}
\providecommand{\BIBentrySTDinterwordspacing}{\spaceskip=0pt\relax}
\providecommand{\BIBentryALTinterwordstretchfactor}{4}
\providecommand{\BIBentryALTinterwordspacing}{\spaceskip=\fontdimen2\font plus
\BIBentryALTinterwordstretchfactor\fontdimen3\font minus \fontdimen4\font\relax}
\providecommand{\BIBforeignlanguage}[2]{{%
\expandafter\ifx\csname l@#1\endcsname\relax
\typeout{** WARNING: IEEEtran.bst: No hyphenation pattern has been}%
\typeout{** loaded for the language `#1'. Using the pattern for}%
\typeout{** the default language instead.}%
\else
\language=\csname l@#1\endcsname
\fi
#2}}
\providecommand{\BIBdecl}{\relax}
\BIBdecl

\bibitem{DBLP:conf/cvpr/HeZRS16}
K.~He, X.~Zhang, S.~Ren, and J.~Sun, ``Deep residual learning for image recognition,'' in \emph{CVPR}, 2016.

\bibitem{huang2017densely}
G.~Huang, Z.~Liu, L.~Van Der~Maaten, and K.~Q. Weinberger, ``Densely connected convolutional networks,'' in \emph{Proceedings of the IEEE conference on computer vision and pattern recognition}, 2017, pp. 4700--4708.

\bibitem{DBLP:journals/corr/ZagoruykoK16}
S.~Zagoruyko and N.~Komodakis, ``Wide residual networks,'' \emph{CoRR}, vol. abs/1605.07146, 2016.

\bibitem{devlin2019bert}
J.~Devlin, M.-W. Chang, K.~Lee, and K.~Toutanova, ``Bert: Pre-training of deep bidirectional transformers for language understanding,'' in \emph{Proceedings of the 2019 conference of the North American chapter of the association for computational linguistics: human language technologies, volume 1 (long and short papers)}, 2019.

\bibitem{achiam2023gpt}
J.~Achiam, S.~Adler, S.~Agarwal, L.~Ahmad, I.~Akkaya, F.~L. Aleman, D.~Almeida, J.~Altenschmidt, S.~Altman, S.~Anadkat \emph{et~al.}, ``Gpt-4 technical report,'' \emph{arXiv preprint arXiv:2303.08774}, 2023.

\bibitem{guo2025evolution}
Y.~Guo, G.~Li, C.~Xie, and Q.~Sun, ``Evolution and perspectives of speech synthesis technology: From parametric synthesis to the era of large language models,'' in \emph{ICAID}, 2025.

\bibitem{krug2025precisely}
P.~K. Krug, C.~Wagner, P.~Birkholz, and T.~Stich, ``Precisely controllable neural speech synthesis,'' in \emph{ICASSP}, 2025.

\bibitem{lian2025cauchy}
Q.~Lian, Y.~Qi, and Y.~Wang, ``Cauchy diffusion: A heavy-tailed denoising diffusion probabilistic model for speech synthesis,'' in \emph{AAAI}, 2025.

\bibitem{bai2025rat}
F.~Bai, R.~Liu, Y.~Du, Y.~Wen, and Y.~Yang, ``Rat: Adversarial attacks on deep reinforcement agents for targeted behaviors,'' in \emph{AAAI}, 2025.

\bibitem{wang2024adversarial}
Y.~Wang, Y.~Mo, D.~Wu, M.~Li, X.~Ma, and Z.~Lin, ``On the adversarial transferability of generalized" skip connections",'' in \emph{arXiv}, 2024.

\bibitem{DBLP:conf/cvpr/DongLPS0HL18}
Y.~Dong, F.~Liao, T.~Pang, H.~Su, J.~Zhu, X.~Hu, and J.~Li, ``Boosting adversarial attacks with momentum,'' in \emph{CVPR}, 2018.

\bibitem{kanca2025enhancing}
E.~Kanca~Gulsoy, S.~Ayas, E.~Baykal~Kablan, and M.~Ekinci, ``Enhancing the adversarial robustness in medical image classification: exploring adversarial machine learning with vision transformers-based models,'' \emph{Neural Computing and Applications}, vol.~37, no.~12, pp. 7971--7989, 2025.

\bibitem{lu2024adversarial}
J.~Lu, C.~Wang, Y.~Huang, K.~Ding, and X.~Liu, ``An adversarial example defense algorithm for intelligent driving,'' \emph{IEEE Network}, vol.~38, no.~6, pp. 98--105, 2024.

\bibitem{PGD}
A.~Madry, A.~Makelov, L.~Schmidt, D.~Tsipras, and A.~Vladu, ``Towards deep learning models resistant to adversarial attacks,'' in \emph{ICLR}, 2018.

\bibitem{wang2019dynamic}
Y.~Wang, X.~Ma, J.~Bailey, J.~Yi, B.~Zhou, and Q.~Gu, ``On the convergence and robustness of adversarial training,'' in \emph{ICML}, 2019.

\bibitem{wang2020improving}
Y.~Wang, D.~Zou, J.~Yi, J.~Bailey, X.~Ma, and Q.~Gu, ``Improving adversarial robustness requires revisiting misclassified examples,'' in \emph{ICLR}, 2020.

\bibitem{mo2022adversarial}
Y.~Mo, D.~Wu, Y.~Wang, Y.~Guo, and Y.~Wang, ``When adversarial training meets vision transformers: Recipes from training to architecture,'' in \emph{NeurIPS}, 2022.

\bibitem{sui2025isdat}
C.~Sui, A.~Wang, H.~Wang, H.~Liu, Q.~Gong, J.~Yao, and D.~Hong, ``Isdat: An image-semantic dual adversarial training framework for robust image classification,'' \emph{Pattern Recognition}, vol. 158, p. 110968, 2025.

\bibitem{zhao2025adversarial}
C.~Zhao, Y.~Qian, B.~Wang, Z.~Gu, S.~Ji, W.~Wang, and Y.~Zhang, ``Adversarial training via multi-guidance and historical memory enhancement,'' \emph{Neurocomputing}, vol. 619, p. 129124, 2025.

\bibitem{liu2025parameter}
X.~Liu, Y.~Yang, K.~He, and J.~E. Hopcroft, ``Parameter interpolation adversarial training for robust image classification,'' \emph{IEEE Transactions on Information Forensics and Security}, 2025.

\bibitem{DBLP:conf/iclr/TsiprasSETM19}
D.~Tsipras, S.~Santurkar, L.~Engstrom, A.~Turner, and A.~Madry, ``Robustness may be at odds with accuracy,'' in \emph{ICLR}, 2019.

\bibitem{trades}
H.~Zhang, Y.~Yu, J.~Jiao, E.~P. Xing, L.~E. Ghaoui, and M.~I. Jordan, ``Theoretically principled trade-off between robustness and accuracy,'' in \emph{ICML}, 2019.

\bibitem{DBLP:conf/nips/AlayracUHFSK19}
J.~Alayrac, J.~Uesato, P.~Huang, A.~Fawzi, R.~Stanforth, and P.~Kohli, ``Are labels required for improving adversarial robustness?'' in \emph{NeurIPS}, 2019.

\bibitem{DBLP:conf/nips/NajafiMKM19}
A.~Najafi, S.~Maeda, M.~Koyama, and T.~Miyato, ``Robustness to adversarial perturbations in learning from incomplete data,'' in \emph{NeurIPS}, 2019.

\bibitem{DBLP:conf/nips/CarmonRSDL19}
Y.~Carmon, A.~Raghunathan, L.~Schmidt, J.~C. Duchi, and P.~Liang, ``Unlabeled data improves adversarial robustness,'' in \emph{NeurIPS}, 2019.

\bibitem{RST}
A.~Raghunathan, S.~M. Xie, F.~Yang, J.~C. Duchi, and P.~Liang, ``Understanding and mitigating the tradeoff between robustness and accuracy,'' in \emph{ICML}, 2020.

\bibitem{fat}
J.~Zhang, X.~Xu, B.~Han, G.~Niu, L.~Cui, M.~Sugiyama, and M.~S. Kankanhalli, ``Attacks which do not kill training make adversarial learning stronger,'' in \emph{ICML}, 2020.

\bibitem{PART}
J.~Zhang, F.~Liu, D.~Zhou, J.~Zhang, and T.~Liu, ``Improving accuracy-robustness trade-off via pixel reweighted adversarial training,'' in \emph{ICML}, 2024.

\bibitem{ge2025rethinking}
Y.~Ge, Y.~Li, and K.~Han, ``Rethinking the validity of perturbation in single-step adversarial training,'' \emph{Pattern Recognition}, vol. 158, p. 111007, 2025.

\bibitem{gowdaconserve}
S.~Gowda, B.~Zonooz, and E.~Arani, ``Conserve-update-revise to cure generalization and robustness trade-off in adversarial training,'' in \emph{ICLR}.

\bibitem{MNG}
D.~Madaan, J.~Shin, and S.~J. Hwang, ``Learning to generate noise for multi-attack robustness,'' in \emph{ICML}, 2021.

\bibitem{E_AT}
F.~Croce and M.~Hein, ``Adversarial robustness against multiple and single $ l\_p $-threat models via quick fine-tuning of robust classifiers,'' in \emph{ICML}, 2022.

\bibitem{MSD}
P.~Maini, E.~Wong, and Z.~Kolter, ``Adversarial robustness against the union of multiple perturbation models,'' in \emph{ICML}, 2020.

\bibitem{wang2023generalist}
H.~Wang and Y.~Wang, ``Generalist: Decoupling natural and robust generalization,'' in \emph{CVPR}, 2023.

\bibitem{DBLP:journals/mor/BlanchetM19}
J.~H. Blanchet and K.~R.~A. Murthy, ``Quantifying distributional model risk via optimal transport,'' \emph{Math. Oper. Res.}, vol.~44, no.~2, pp. 565--600, 2019.

\bibitem{Villani2003TopicsIO}
C.~Villani, ``Topics in optimal transportation,'' 2003.

\bibitem{DBLP:conf/colt/BartlettM01}
P.~L. Bartlett and S.~Mendelson, ``Rademacher and gaussian complexities: Risk bounds and structural results,'' in \emph{COLT}, 2001.

\bibitem{DBLP:journals/corr/GoodfellowSS14}
I.~J. Goodfellow, J.~Shlens, and C.~Szegedy, ``Explaining and harnessing adversarial examples,'' in \emph{ICLR}, 2015.

\bibitem{LSE}
T.~Pang, M.~Lin, X.~Yang, J.~Zhu, and S.~Yan, ``Robustness and accuracy could be reconcilable by (proper) definition,'' in \emph{ICML}, 2022.

\bibitem{athalye2018obfuscated}
A.~Athalye, N.~Carlini, and D.~Wagner, ``Obfuscated gradients give a false sense of security: Circumventing defenses to adversarial examples,'' in \emph{ICML}, 2018.

\bibitem{IAT}
A.~Lamb, V.~Verma, J.~Kannala, and Y.~Bengio, ``Interpolated adversarial training: Achieving robust neural networks without sacrificing too much accuracy,'' in \emph{ACM AISec Workshop}, 2019.

\bibitem{AGR}
H.~Tong, X.~Zhang, Y.~Jin, J.~Lou, K.~Wu, and X.~Chen, ``Balancing generalization and robustness in adversarial training via steering through clean and adversarial gradient directions,'' in \emph{ACM MM}, 2024.

\bibitem{tramer2019adversarial}
F.~Tramer and D.~Boneh, ``Adversarial training and robustness for multiple perturbations,'' in \emph{NeurIPS}, 2019.

\bibitem{RMC}
R.~Wang, Y.~Li, and S.~Liu, ``Robust mode connectivity-oriented adversarial defense: Enhancing neural network robustness against diversified $\ell_p$ attacks,'' in \emph{arXiv}, 2023.

\bibitem{DBLP:conf/nips/BilenV16}
H.~Bilen and A.~Vedaldi, ``Integrated perception with recurrent multi-task neural networks,'' in \emph{NeurIPS}, 2016.

\bibitem{DBLP:conf/cvpr/LuKZCJF17}
Y.~Lu, A.~Kumar, S.~Zhai, Y.~Cheng, T.~Javidi, and R.~S. Feris, ``Fully-adaptive feature sharing in multi-task networks with applications in person attribute classification,'' in \emph{CVPR}, 2017.

\bibitem{DBLP:conf/iclr/YangH17}
Y.~Yang and T.~M. Hospedales, ``Deep multi-task representation learning: {A} tensor factorisation approach,'' in \emph{ICLR}, 2017.

\bibitem{mo2022multi}
Y.~Mo and S.~Wang, ``Multi-task learning improves synthetic speech detection,'' in \emph{ICASSP}, 2022.

\bibitem{DBLP:conf/icml/FinnAL17}
C.~Finn, P.~Abbeel, and S.~Levine, ``Model-agnostic meta-learning for fast adaptation of deep networks,'' in \emph{ICML}, 2017.

\bibitem{DBLP:journals/corr/abs-1803-02999}
A.~Nichol, J.~Achiam, and J.~Schulman, ``On first-order meta-learning algorithms,'' \emph{CoRR}, vol. abs/1803.02999, 2018.

\bibitem{shalev2010learnability}
S.~Shalev-Shwartz, O.~Shamir, N.~Srebro, and K.~Sridharan, ``Learnability, stability and uniform convergence,'' \emph{The Journal of Machine Learning Research}, vol.~11, pp. 2635--2670, 2010.

\bibitem{DBLP:conf/icml/Croce020a}
F.~Croce and M.~Hein, ``Reliable evaluation of adversarial robustness with an ensemble of diverse parameter-free attacks,'' in \emph{ICML}, 2020.

\bibitem{halfat}
I.~J. Goodfellow, J.~Shlens, and C.~Szegedy, ``Explaining and harnessing adversarial examples,'' in \emph{ICLR}, 2015.

\bibitem{cifar}
A.~Krizhevsky, G.~Hinton \emph{et~al.}, ``Learning multiple layers of features from tiny images,'' \emph{Tech Report}, 2009.

\bibitem{crocerobustbench}
F.~Croce, M.~Andriushchenko, V.~Sehwag, E.~Debenedetti, and N.~Flammarion, ``Robustbench: a standardized adversarial robustness benchmark,'' in \emph{NeurIPS}, 2021.

\bibitem{wong2020fast}
E.~Wong, L.~Rice, and J.~Z. Kolter, ``Fast is better than free: Revisiting adversarial training,'' in \emph{ICLR}, 2020.

\bibitem{salman2020adversarially}
H.~Salman, A.~Ilyas, L.~Engstrom, A.~Kapoor, and A.~Madry, ``Do adversarially robust imagenet models transfer better?'' 2020.

\bibitem{deng2009imagenet}
J.~Deng, W.~Dong, R.~Socher, L.-J. Li, K.~Li, and L.~Fei-Fei, ``Imagenet: A large-scale hierarchical image database,'' in \emph{CVPR}, 2009.

\bibitem{liu2021towards}
J.~Liu, Z.~Shen, Y.~He, X.~Zhang, R.~Xu, H.~Yu, and P.~Cui, ``Towards out-of-distribution generalization: A survey,'' in \emph{arXiv}, 2021.

\bibitem{hendrycks2019robustness}
D.~Hendrycks and T.~Dietterich, ``Benchmarking neural network robustness to common corruptions and perturbations,'' \emph{ICLR}, 2019.

\bibitem{DBLP:journals/corr/abs-2103-01946}
S.~Rebuffi, S.~Gowal, D.~A. Calian, F.~Stimberg, O.~Wiles, and T.~A. Mann, ``Fixing data augmentation to improve adversarial robustness,'' \emph{CoRR}, vol. abs/2103.01946, 2021.

\bibitem{DBLP:conf/uai/IzmailovPGVW18}
P.~Izmailov, D.~Podoprikhin, T.~Garipov, D.~P. Vetrov, and A.~G. Wilson, ``Averaging weights leads to wider optima and better generalization,'' in \emph{UAI}, 2018.

\bibitem{wang2022selfensemble}
H.~Wang and Y.~Wang, ``Self-ensemble adversarial training for improved robustness,'' in \emph{ICLR}, 2022.

\bibitem{wang2019symmetric}
Y.~Wang, X.~Ma, Z.~Chen, Y.~Luo, J.~Yi, and J.~Bailey, ``Symmetric cross entropy for robust learning with noisy labels,'' in \emph{ICCV}, 2019.

\bibitem{selvaraju2017grad}
R.~R. Selvaraju, M.~Cogswell, A.~Das, R.~Vedantam, D.~Parikh, and D.~Batra, ``Grad-cam: Visual explanations from deep networks via gradient-based localization,'' in \emph{ICCV}, 2017.

\bibitem{DBLP:journals/tit/Cesa-BianchiCG04}
N.~Cesa{-}Bianchi, A.~Conconi, and C.~Gentile, ``On the generalization ability of on-line learning algorithms,'' \emph{{IEEE} Trans. Inf. Theory}, vol.~50, no.~9, pp. 2050--2057, 2004.

\bibitem{Azuma1967WEIGHTEDSO}
K.~Azuma, ``Weighted sums of certain dependent random variables,'' \emph{Tohoku Mathematical Journal}, vol.~19, pp. 357--367, 1967.

\end{thebibliography}
\bibliographystyle{IEEEtran}

\vspace{-32pt}
\begin{IEEEbiography}
[{\includegraphics[width=1in,height=1.25in,clip,keepaspectratio]{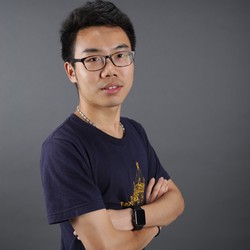}}]{Yisen Wang} received the Ph.D. degree from Tsinghua University in 2018. He is currently an Assistant Professor at Peking University. His research interest includes machine learning and deep learning, such as adversarial learning, graph learning, and weakly/self-supervised learning.
\end{IEEEbiography}
\vspace{-32pt}
\begin{IEEEbiography}[{\includegraphics[width=1in,height=1.25in,clip,keepaspectratio]{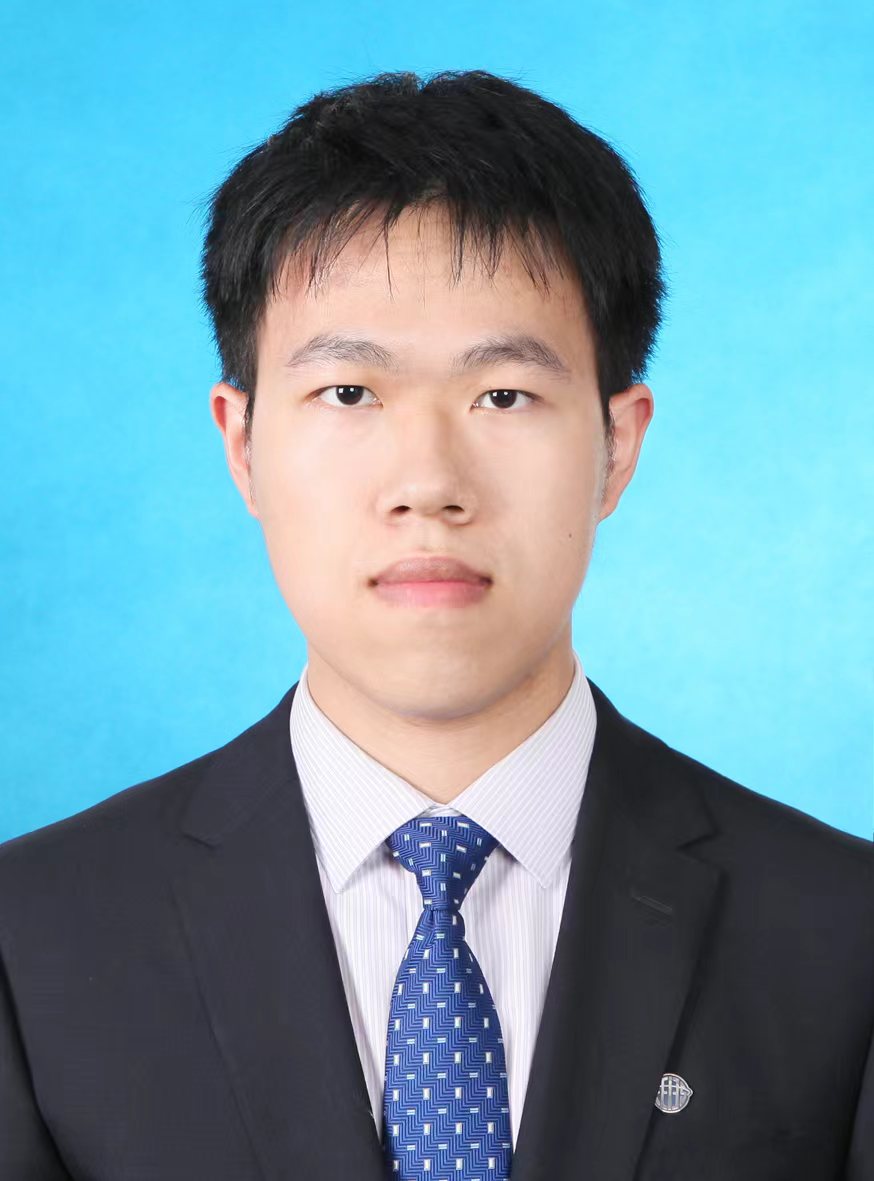}}]{Yichuan Mo} received the B.E. degree from Shanghai Jiao Tong University in 2022. He is currently a Ph.D. candidate at Peking University. His research interests include adversarial learning, model robustness and trustworthy AI. 
\end{IEEEbiography}
\vspace{-32pt}

\begin{IEEEbiography}[{\includegraphics[width=1in,height=1.25in,clip,keepaspectratio]{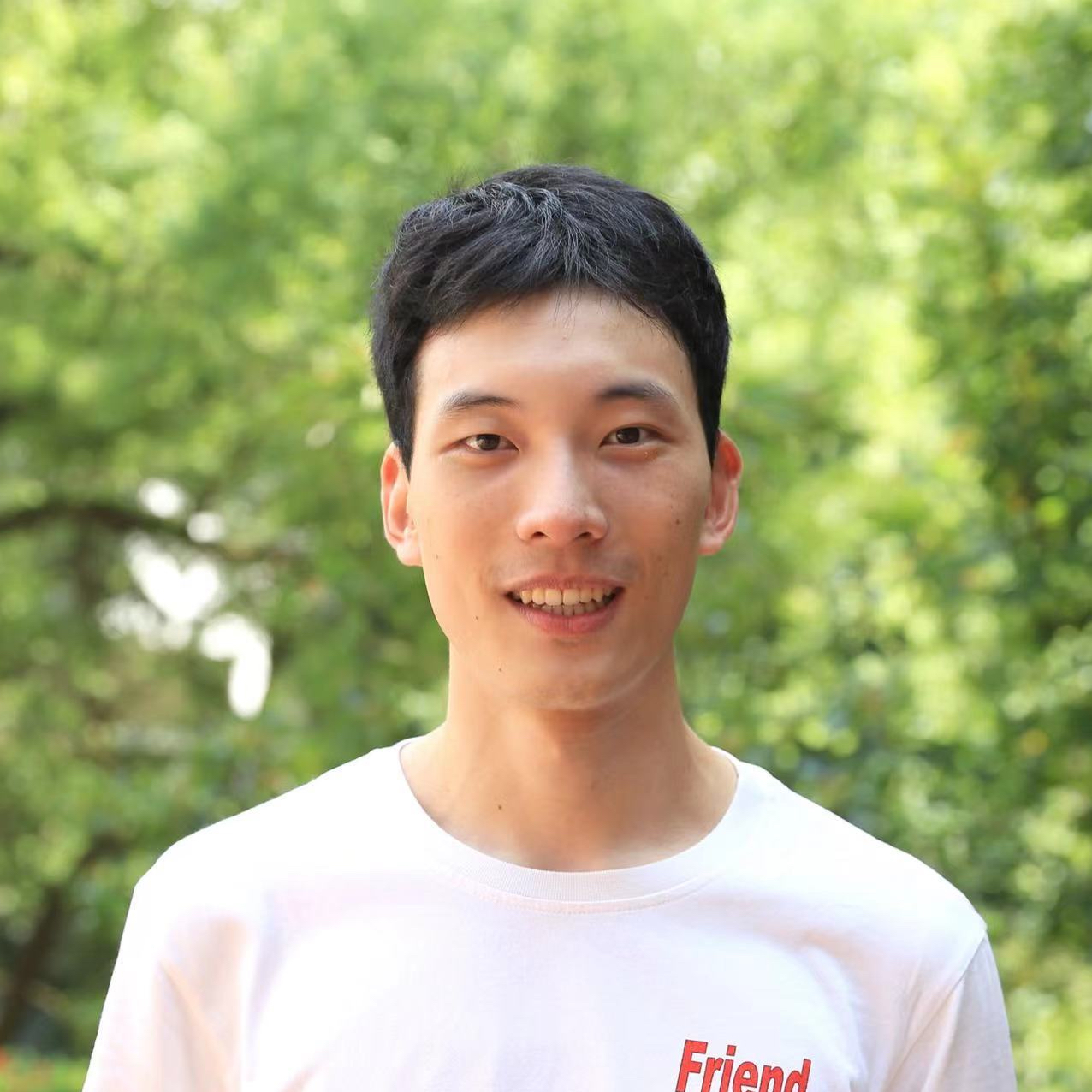}}]{Hongjun Wang} received the B.E. and MPhil degrees from Sun Yat-sen University in 2018 and 2021. He is currently a Ph.D. candidate at The University of Hong Kong. His research interests include open-world scene understanding and distribution shifts. 
\end{IEEEbiography}
\vspace{-32pt}

\begin{IEEEbiography}[{\includegraphics[width=1in,height=1.25in,clip,keepaspectratio]{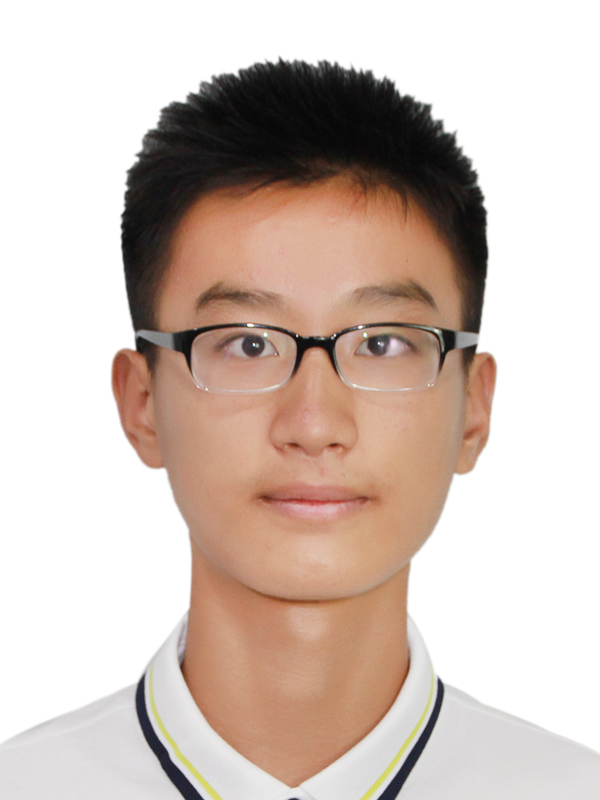}}]{Junyi Li} is an undergraduate student majoring in Mathematics and Applied Mathematics at Peking University. His research interests include trustworthy AI and machine learning.
\end{IEEEbiography}
\vspace{-32pt}

\begin{IEEEbiography}[{\includegraphics[width=1in,height=1.25in,clip,keepaspectratio]{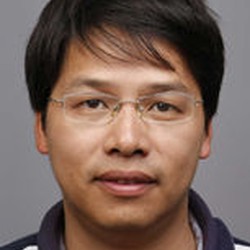}}]{Zhouchen Lin} (M'00-SM'08-F'18) received the Ph.D. degree from Peking University in 2000. He is currently a professor at Peking University. His research interests include computer vision, image processing, machine learning, pattern recognition, and numerical optimization. He was an associate editor of the IEEE Transactions on Pattern Analysis and Machine Intelligence and currently is an associate editor of the International Journal of Computer Vision. He is a Fellow of IAPR and IEEE.
\end{IEEEbiography}

\newpage

\appendices

\section{Proofs of Theoretical Results}
\label{app:theory}
\subsection{Proof of Claim in Section \ref{sec:global}}
\label{apd:b1}
\label{apd:global}
\begin{proof}
At epoch $t$, the parameters of the global learner are distributed to the experts and each expert train from this initialization with $c$ steps by calculating the gradients. Following \cite{DBLP:journals/corr/abs-1803-02999}, we approximate the update performed by the initialization based on the Taylor expansion:
\begin{equation}
\label{eqn:proof_taylor}
\small
\begin{aligned}
g^{t+c}&=\ell^{\prime}\left(\boldsymbol\theta^{t+c}\right) \\&=\ell^{\prime}\left(\boldsymbol\theta^{t}\right)+\ell^{\prime \prime}\left(\boldsymbol\theta^{t}\right)\left(\boldsymbol\theta^{t+c}-\boldsymbol\theta^{t}\right)+O\left(\left\|\boldsymbol\theta^{t+c}-\boldsymbol\theta^{t}\right\|^{2}\right)\\
&\left.=\bar{g}^{t}+\bar{H}^{t}\left(\boldsymbol\theta^{t+c}-\boldsymbol\theta^{t}\right)+O\left(\tau^{2}\right)\right) \\
&=\bar{g}^{t}-\tau \bar{H}^{t} \sum_{j=t}^{t+c} g^{j}+O\left(\tau^{2}\right) \\
&=\bar{g}^{t}-\tau \bar{H}^{t} \sum_{j=t}^{t+c} \bar{g}^{j}+O\left(\tau^{2}\right).
\end{aligned}
\end{equation}
It should be noted that $g$ and $\bar{g}$ in the above equation correspond to gradient at $\theta^{t+c}$ and $\theta^{t}$, respectively. We will continue to use these notation in the following proof.
Recalling that $\mathcal{Z}^{i}$ represents an optimizer that updates the parameter vector at the $t$-th step: $\mathcal{Z}^{i}(\boldsymbol\theta,\tau)=\boldsymbol\theta-\tau\ell^{\prime}(\boldsymbol\theta)$. For each base-learner, we approximate the gradient at intervals: 
\begin{equation}
\begin{aligned}
g_{val}&=\frac{\partial}{\partial \boldsymbol\theta^{t}} \ell\left(\boldsymbol\theta^{t+c}\right) \\&=\frac{\partial}{\partial \boldsymbol\theta^{t}} \ell\left(\mathcal{Z}^{t+c-1}\left(\mathcal{Z}^{t+c-2}\left(\ldots\left(\mathcal{Z}^{t}\left(\boldsymbol\theta^{t}\right)\right)\right)\right)\right) \\
&={\mathcal{Z}^{\prime}}^{t}\left(\boldsymbol\theta^{t}\right) \cdots {\mathcal{Z}^{\prime}}^{t+c-1}\left(\boldsymbol\theta^{t+c-1}\right) \ell^{\prime}\left(\boldsymbol\theta^{t+c}\right) \\
&=\left(I-\tau \ell^{\prime \prime}\left(\boldsymbol\theta^{t}\right)\right) \cdots\left(I-\tau \ell^{\prime \prime}\left(\boldsymbol\theta^{t+c-1}\right)\right) \ell^{\prime}\left(\boldsymbol\theta^{t+c}\right) \\
&=\left(\prod_{j=t}^{t+c-1}\left(I-\tau \ell^{\prime \prime}\left(\boldsymbol\theta^{j}\right)\right)\right) g^{t+c}.
\end{aligned}
\end{equation}

Here $g_{\mathrm{val}}$ denotes the validation gradient, i.e., the gradient obtained after initializing the base learner with the global parameter $\theta_g$ and further training it for $c$ steps, which characterizes how the global initialization influences subsequent task-specific updates.

Replacing $\ell^{\prime \prime}\left(\boldsymbol\theta^{j}\right)$ with $\bar{H}^{j}$ and substituting $g^{t+c}$ for Eq. \ref{eqn:proof_taylor}, we expand to leading order:
\begin{equation}
\small
\begin{aligned}
g_{val}&=\left(\prod_{j=t}^{t+c-1}\left(I-\tau \bar{H}^{j}\right)\right)\left(\bar{g}^{t+c}-\tau \bar{H}^{t+c} \sum_{j=t}^{t+c-1} \bar{g}^{j}\right)+O\left(\tau^{2}\right) \\
&=\left(I-\tau \sum_{j=t}^{t+c-1} \bar{H}^{j}\right)\left(\bar{g}^{t+c}-\tau \bar{H}^{t+c} \sum_{j=t}^{t+c-1} \bar{g}^{j}\right)+O\left(\tau^{2}\right) \\
&=\bar{g}^{t+c}-\tau \sum_{j=t}^{t+c-1} \bar{H}^{j} \bar{g}^{t+c}-\tau \bar{H}^{t+c} \sum_{j=t}^{t+c-1} \bar{g}^{j}+O\left(\tau^{2}\right)
\end{aligned}
\end{equation}
Therefore, we take the expectation of $g_{val}$ over steps, and obtain:
\begin{equation}
\small
\begin{aligned}
\mathbb{E}\left[g_{val}\right]&=\mathbb{E}\left[\bar{g}^{t+c}\right]-\tau\mathbb{E}\left[\sum_{j=t}^{t+c-1} \bar{H}^{j} \bar{g}^{t+c} \right.\\ &+\left.\bar{H}^{t+c} \sum_{j=t}^{t+c-1}\bar{g}^{j}\right]+\mathbb{E}\left[O\left(\tau^{2}\right)\right]
\end{aligned}
\end{equation}
Recalling that $\boldsymbol\theta_{g}$ is mixed by $\boldsymbol\theta_1$, $\boldsymbol\theta_2$,$\cdots$, $\boldsymbol\theta_{|\mathcal{A}|}$. For simplicity of exposition, we use $\gamma_1$, $\gamma_2$, $\cdots$,$\gamma_{|\mathcal{A}|}$ to stand for the scalar factors, meaning $\boldsymbol\theta_{g}=\sum\limits_{\mathcal{W}=1}^{|\mathcal{A}|}\gamma_\mathcal{W}\boldsymbol\theta_\mathcal{W}$. Ignoring the higher order terms, for each expert initialized by the global learner (e.g. $\boldsymbol\theta_{n}$), we have:
\begin{equation}
\begin{aligned}
\boldsymbol\theta_{n}&=\boldsymbol\theta_{g}-\mathbb{E}_n\left[g_{val}\right]\\&=\sum\limits_{\mathcal{W}=1}^{|\mathcal{A}|}\gamma_\mathcal{W}\boldsymbol\theta_\mathcal{W}-[\mathbb{E}\left[\bar{g}^{t+c}_n\right]\\&-\tau_n\mathbb{E}\left[\sum_{j=t}^{t+c-1} \bar{H}^{j} \bar{g}^{t+c}_n+\bar{H}^{t+c} \sum_{j=t}^{t+c-1} \bar{g}^{j}_n\right]] \\
&=[\gamma_{n}\boldsymbol\theta_{n}-\mathbb{E}\left[\bar{g}^{t+c}_n\right]] + [\sum\limits_{\substack{\mathcal{W}=1 \\ \mathcal{W}\neq n}}^{|\mathcal{A}|}
\gamma_\mathcal{W}\boldsymbol\theta_\mathcal{W}\\&-\tau_n\mathbb{E}\left[\bar{H}^{t+c} \sum_{j=t}^{t+c-1} \bar{g}^{j}_n+\sum_{j=t}^{t+c-1} \bar{H}^{j} \bar{g}^{t+c}_n\right]] \\
&=[\gamma_{n}\boldsymbol\theta_{n}-\sum_{i=t}^{t+c-1} \bar{g}^{i}_{n}] +[\sum\limits_{\substack{\mathcal{W}=1 \\ \mathcal{W}\neq n}}^{|\mathcal{A}|}
\gamma_\mathcal{W}\boldsymbol\theta_\mathcal{W}\\&-\tau_n\left(2\bar{H}^{t} \sum_{j=t}^{t+c-1} \bar{g}^{j}_n-\bar{H}^{t}\sum_{i=t}^{t+c-1} \sum_{j=1}^{i-1} \bar{H}^{i} \bar{g}^{j}_n\right)] (\text{for} \ c\ge2).
\end{aligned}
\end{equation}
The first term pushes $\theta_{n}$ to move forward the minimum of its assigned loss over its data distribution; while the second term improves generalization by increasing the inner product between gradients of different mini-batches and updating the parameters from the other task.
\end{proof}
    
\subsection{Proof of Theorem \ref{theorem:Theorem-1}}
\label{apd:b2}
Before we present the proof of the Theorem we present useful intermediate results which we require in our proof.

\begin{proposition}
\label{pro:1}
Consider a sequence of loss functions ${\ell_a: \Theta\mapsto [0, 1]}_{a\in \mathcal{A}}$ drawn i.i.d. from some distribution $\mathcal{L}$ is given to an algorithm that generates a sequence of hypotheses $\left\{\boldsymbol\theta_{a} \in \Theta\right\}_{a \in\mathcal{A}}$ then the following inequality each hold w.p. $1-\delta$:
\begin{equation}
\frac{1}{T} \sum_{t=1}^{T} \underset{\ell \sim D}{\mathbb{E}} \ell\left(\boldsymbol\theta^{t}\right) \leq \frac{1}{T} \sum_{t=1}^{T} \ell^{t}\left(\boldsymbol\theta^{t}\right)+\sqrt{\frac{2}{T}\log \frac{1}{\delta}}.
\end{equation}
\end{proposition}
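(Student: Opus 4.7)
The plan is to recognize the left-hand side minus the right-hand side (ignoring the slack term) as the average of a bounded martingale difference sequence, and then invoke Azuma--Hoeffding to produce the $\sqrt{(2/T)\log(1/\delta)}$ tail bound.

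First, I would fix the natural filtration $\mathcal{F}_{t-1}=\sigma(\ell^{1},\ldots,\ell^{t-1})$. Because the algorithm generates $\boldsymbol\theta^{t}$ from the past losses only, $\boldsymbol\theta^{t}$ is $\mathcal{F}_{t-1}$-measurable, while $\ell^{t}$ is independent of $\mathcal{F}_{t-1}$ and distributed as $\mathcal{L}$. Define
\begin{equation}
X_{t} \;:=\; \mathbb{E}_{\ell\sim\mathcal{L}}\,\ell(\boldsymbol\theta^{t}) \;-\; \ell^{t}(\boldsymbol\theta^{t}).
\end{equation}
Then $\mathbb{E}[X_{t}\mid\mathcal{F}_{t-1}]=\mathbb{E}_{\ell\sim\mathcal{L}}\ell(\boldsymbol\theta^{t})-\mathbb{E}[\ell^{t}(\boldsymbol\theta^{t})\mid\mathcal{F}_{t-1}]=0$, so $(X_{t})_{t=1}^{T}$ is a martingale difference sequence with respect to $(\mathcal{F}_{t})$. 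Since each loss takes values in $[0,1]$, we have $|X_{t}|\le 1$ almost surely, which furnishes the bounded-differences hypothesis needed for a concentration inequality.

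Next, I would apply the Azuma--Hoeffding inequality to the partial sum $S_{T}=\sum_{t=1}^{T}X_{t}$, obtaining
\begin{equation}
\Pr\!\left[\,S_{T}\ge \eta\,\right] \;\le\; \exp\!\left(-\frac{\eta^{2}}{2T}\right)
\end{equation}
for any $\eta>0$. Setting the right-hand side equal to $\delta$ gives $\eta=\sqrt{2T\log(1/\delta)}$. Dividing by $T$ yields
\begin{equation}
\frac{1}{T}\sum_{t=1}^{T}\mathbb{E}_{\ell\sim\mathcal{L}}\ell(\boldsymbol\theta^{t})-\frac{1}{T}\sum_{t=1}^{T}\ell^{t}(\boldsymbol\theta^{t}) \;\le\; \sqrt{\frac{2}{T}\log\frac{1}{\delta}}
\end{equation}
with probability at least $1-\delta$, which is exactly the claimed inequality after rearrangement.

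The argument is essentially standard, so there is no substantive obstacle; the only subtlety worth being careful about is verifying the measurability structure (that $\boldsymbol\theta^{t}$ depends only on $\ell^{1:t-1}$, so it is independent of $\ell^{t}$), which is what makes the zero-mean property of $X_{t}$ hold conditionally on $\mathcal{F}_{t-1}$ and legitimizes the martingale concentration step. Everything else reduces to plugging the boundedness $|X_{t}|\le 1$ into the Azuma bound and inverting the resulting tail to obtain the $\sqrt{(2/T)\log(1/\delta)}$ deviation rate.
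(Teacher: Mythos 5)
Your proof is correct and is essentially the same argument the paper relies on: the paper simply cites Proposition 1 of Cesa-Bianchi, Conconi, and Gentile (2004), whose proof is exactly this online-to-batch conversion via the bounded martingale difference sequence $X_t=\mathbb{E}_{\ell\sim\mathcal{L}}\ell(\boldsymbol\theta^{t})-\ell^{t}(\boldsymbol\theta^{t})$ and Azuma--Hoeffding (the same tool the paper invokes explicitly for the companion Remark). Your measurability remark ($\boldsymbol\theta^{t}$ being $\mathcal{F}_{t-1}$-measurable) is the right thing to check, and the constants work out to the stated $\sqrt{(2/T)\log(1/\delta)}$.
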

\begin{proof}
The proof of the Proposition can be directly derived from the Proposition 1 in \cite{DBLP:journals/tit/Cesa-BianchiCG04}.
\end{proof}
Then we could immediately obtain the below inequality by the symmetric version of the Azuma-Hoeffding inequality \cite{Azuma1967WEIGHTEDSO}
\begin{remark}
\label{remark1}
\begin{equation}
\frac{1}{T} \sum_{t=1}^{T} \underset{\ell \sim \mathcal{L}}{\mathbb{E}} \ell\left(\boldsymbol\theta^{t}\right) \geq \frac{1}{T} \sum_{t=1}^{T} \ell^{t}\left(\boldsymbol\theta^{t}\right)-\sqrt{\frac{2}{T}\log \frac{1}{\delta}}.
\end{equation}
\end{remark}
In short, the proposition and remark above jointly indicate
the following centralized random variable has a Sub-Guassian tail.
\begin{equation}
 \sum_{t=1}^{T} \ell^{t}\left(\boldsymbol\theta^{t}\right)- \sum_{t=1}^{T} \underset{\ell \sim \mathcal{L}}{\mathbb{E}} \ell\left(\boldsymbol\theta^{t}\right)
\end{equation}

Finally, we give the definition of the regret of minimizing any subproblem:
\label{def:1}
\begin{definition}
(\textbf{Subproblem Regret}) Consider an algorithm generates the trajectory of states $\left\{\boldsymbol\theta^{t} \in \Theta\right\}_{t \in[T]}$, the regret of such an algorithm on loss function $\left\{\ell^{t}\right\}_{t \in[T]}$ is:
\begin{equation}
\bar{\mathbf{R}}=\sum_{t=1}^{T} \ell^{t}\left(\boldsymbol\theta^{t}\right)-\inf _{\boldsymbol\theta^{\star} \in \Theta} \sum_{t=1}^{T} \ell^{t}(\boldsymbol\theta).
\vspace{-5pt}
\end{equation}
\end{definition}
\label{thm:1}
\begin{theorem}
(Restated) Consider an algorithm with regret bound $R_{T}$ that generates the trajectory of states for $|\mathcal{A}|$ base learners, for any parameter state $\boldsymbol\theta \in \Theta$, given a sequence of convex surrogate evaluation functions ${\ell: \Theta\mapsto [0, 1]_{a\in \mathcal{A}}}$ drawn i.i.d. from some distribution $\mathcal{L}$, the expected error of the global learner $\boldsymbol\theta_{g}$ on both tasks over the test set can be bounded with probability at least $1-\delta$:
\begin{equation}
\underset{\ell \sim \mathcal{L}}{\mathbb{E}} \ell\left(\boldsymbol\theta_{g}\right) \leq \underset{\ell \sim \mathcal{L}}{\mathbb{E}} \ell\left(\boldsymbol\theta\right)+\frac{\mathbf{R}_{T}}{T}+2\sqrt{\frac{2}{T}\log \frac{1}{\delta}}.
\end{equation}
\end{theorem}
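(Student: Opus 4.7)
The plan is to chain three ingredients already available in the excerpt: convexity of the surrogate loss, the two-sided Azuma--Hoeffding concentration from Proposition~\ref{pro:1} and Remark~\ref{remark1}, and the definition of the trade-off regret $\mathbf{R}_T$. The overall strategy is to pass from $\mathbb{E}_\ell\,\ell(\boldsymbol\theta_g)$ down to the empirical training loss via Jensen, apply the regret identity to introduce an arbitrary fixed comparator $\boldsymbol\theta\in\Theta$, and invert the concentration on the comparator side to return to the expected loss.

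First I would note that the aggregation rule in Eq.~\ref{eqn:overall}, combined with uniform-in-time accounting, lets me write the global iterate as a convex combination of the base-learner states, $\boldsymbol\theta_g = \tfrac{1}{T|\mathcal{A}|}\sum_{a,t}\boldsymbol\theta_a^{t}$ (taking the uniform mixing $\gamma_a=1/|\mathcal{A}|$ for brevity; the general $\gamma_a$-weighted case is identical). Convexity of each $\ell\sim\mathcal{L}$ is preserved under expectation, so a single application of Jensen's inequality yields $\mathbb{E}_\ell\,\ell(\boldsymbol\theta_g)\le \tfrac{1}{T|\mathcal{A}|}\sum_{a,t}\mathbb{E}_\ell\,\ell(\boldsymbol\theta_a^{t})$. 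Next I would apply Proposition~\ref{pro:1} to the averaged sequence $\tfrac{1}{|\mathcal{A}|}\sum_a\ell^{t}(\boldsymbol\theta_a^{t})$: the centered increment $\tfrac{1}{|\mathcal{A}|}\sum_a[\mathbb{E}_\ell\,\ell(\boldsymbol\theta_a^{t})-\ell^{t}(\boldsymbol\theta_a^{t})]$ is a bounded martingale difference because each $\boldsymbol\theta_a^{t}$ is predictable with respect to $\ell^{1},\dots,\ell^{t-1}$ and each term lies in $[-1,1]$. This single concentration replaces the Jensen upper bound by the empirical average $\tfrac{1}{T|\mathcal{A}|}\sum_{a,t}\ell^{t}(\boldsymbol\theta_a^{t})$ up to a first tail of $\sqrt{\tfrac{2}{T}\log\tfrac{1}{\delta}}$.

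At this point the right-hand side is exactly the cumulative empirical loss of the base learners, which is what the regret is defined to control. I would then invoke the definition of $\mathbf{R}_T$ to split it into $\mathbf{R}_T/T$ plus the task-averaged infimum $\tfrac{1}{T|\mathcal{A}|}\sum_a\inf_{\boldsymbol\theta}\sum_t\ell^{t}(\boldsymbol\theta)$; upper-bounding each infimum by the value at any fixed comparator $\boldsymbol\theta$ removes the task index altogether and leaves the single empirical average $\tfrac{1}{T}\sum_t\ell^{t}(\boldsymbol\theta)$. Finally, applying Remark~\ref{remark1} to this fixed $\boldsymbol\theta$ flips that empirical average back to $\mathbb{E}_\ell\,\ell(\boldsymbol\theta)$ at the price of a second $\sqrt{\tfrac{2}{T}\log\tfrac{1}{\delta}}$; the two tails combine into the advertised $2\sqrt{\tfrac{2}{T}\log\tfrac{1}{\delta}}$.

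The regret bookkeeping is routine; the two delicate places I expect to need care are (i) justifying that the time-aggregated $\boldsymbol\theta_g$ produced by the EMA in Eq.~\ref{eqn:overall} really is a convex combination with weights summing to one---at finite $T$ the EMA contributes a vanishing boundary term that must be absorbed, or, more cleanly, one works with the time-averaged iterate---and (ii) choosing the right martingale-difference sequence so that the two concentration applications do not incur an additional union bound over $a$ (handled by averaging over $a$ inside the increment before applying the bound, as above). Neither obstacle affects the structure of the argument; both are routine once set up correctly.
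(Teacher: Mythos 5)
Your proposal is correct and follows essentially the same route as the paper's proof: Jensen's inequality applied to the (approximately) time-averaged global iterate, the online-to-batch concentration of Proposition~\ref{pro:1} and Remark~\ref{remark1} applied in both directions, and the regret definition used to introduce the fixed comparator $\boldsymbol\theta$, with the two tails combining into $2\sqrt{\tfrac{2}{T}\log\tfrac{1}{\delta}}$. The only cosmetic difference is that you average over the task index inside the martingale increment and invoke the task-averaged regret $\mathbf{R}_T$ directly, whereas the paper works with the trajectory of $\boldsymbol\theta_g$ and an intermediate subproblem regret $\bar{\mathbf{R}}\le\mathbf{R}_T$; both versions share the same (acknowledged) looseness in treating the EMA aggregate as an exact uniform time average.
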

\begin{proof}
We denote ${\theta}^t$ through $t=1,\cdots,T$ as the update trajectory of ${\theta}_g$. The outline of the proof is as follows. We first construct an upper bound for $\frac{1}{T} \sum_{t=1}^{T} \underset{\ell \sim \mathcal{L}}{\mathbb{E}}\ell(\boldsymbol{\theta}^t)$ using $\bar{R}$ and then switch $\bar{R}$ to $ R_{T}$. After that, we Establish a connection between $\underset{\ell \sim \mathcal{L}}{\mathbb{E}} \ell\left(\boldsymbol\theta_{g}\right)$ and above results using Jensen's inequality.
From Proposition \ref{pro:1} and Remark \ref{remark1}, the following inequality holds with possibility at least $1-\delta$ for any parameter state $\boldsymbol\theta\in\Theta$:

\begin{equation}
\begin{aligned}
& \frac{1}{T} \sum_{t=1}^{T} \underset{\ell \sim \mathcal{L}}{\mathbb{E}}  \ell(\boldsymbol{\theta}^t) \leq \frac{1}{T} \sum_{t=1}^{T} \ell^t(\theta^t) + \sqrt{\frac{2}{T} \log \frac{1}{\delta}} \\
&=\frac{1}{T} \sum_{t=1}^{T} \ell^t(\theta) + (\frac{1}{T} \sum_{t=1}^{T} \ell^t(\theta^t) - \frac{1}{T} \sum_{t=1}^{T} \ell^t(\theta)) \\ &+\sqrt{\frac{2}{T} \log \frac{1}{\delta}}\\
&\leq \frac{1}{T} \sum_{t=1}^{T} \ell^t(\theta) + \frac{\bar{\mathbf{R}}}{T} + \sqrt{\frac{2}{T} \log \frac{1}{\delta}} \\
&\leq \underset{\ell \sim \mathcal{L}}{\mathbb{E}}  \ell(\boldsymbol{\theta}) + \frac{\bar{\mathbf{R}}}{T} + 2\sqrt{\frac{2}{T} \log \frac{1}{\delta}}.
\end{aligned}
\label{eq:eq1}
\vspace{-3pt}
\end{equation}
Noticed that $ R_{T}$ describes the performance gap between the updating trajectory and theoretically optimal parameters for each task. It turns out that a large term will appear every c steps in $ R_{T}$, due to the frequency of communication in the algorithm is c. 
So it is obvious that:
\begin{equation}
\bar{\mathbf{R}} \leq \mathbf{R}_{T}
\end{equation}
We can derive the following inequality directly from Equation \ref{eq:eq1}:
\begin{equation}
\begin{aligned}
\frac{1}{T} \sum_{t=1}^{T} \underset{\ell \sim \mathcal{L}}{\mathbb{E}} \ell\left(\boldsymbol\theta^{t}\right) \leq & \underset{\ell \sim \mathcal{L}}{\mathbb{E}} \ell\left(\boldsymbol\theta\right)+\frac{\mathbf{R}_{T}}{T}+2\sqrt{\frac{2}{T}\log \frac{1}{\delta}}.
\end{aligned}
\label{eq:eq2}
\vspace{-10pt}
\end{equation}
Since we can treat ${\theta}^1,{\theta}^2,\cdots,{\theta}^T$ as a sequence that converges to $ {\theta}_{g} $, the average value of this sequence with length T is close to ${\theta}_{g}$. This is ensured by the well-known conclusion below:
\begin{equation}
\begin{aligned}
    \lim_{t \to \infty} \theta_t = \theta \quad \Rightarrow \quad \lim_{T \to \infty} \frac{1}{T} \sum_{t=1}^{T} \theta_t = \theta.
\end{aligned}
\end{equation}
Then, the above inequality Equation \ref{eq:eq2} can be further derived by the Jensen's inequality (convex surrogate functions could be selected to evaluate the test errors instead of the 0-1 loss):
\begin{equation}
\begin{aligned}
\underset{\ell \sim \mathcal{L}}{\mathbb{E}} \ell\left(\boldsymbol\theta_{g}\right)&\approx
\underset{\ell \sim \mathcal{L}}{\mathbb{E}} \ell\left(\frac{1}{T}
\sum_{t=1}^{T}\boldsymbol\theta^{t}\right) \leq \frac{1}{T} \sum_{t=1}^{T} \underset{\ell \sim \mathcal{L}}{\mathbb{E}} \ell\left(\boldsymbol\theta^{t}\right) \\&\leq \frac{1}{T} \sum_{t=1}^{T} \ell^{t}\left(\boldsymbol\theta\right)+\frac{\mathbf{R}_{T}}{T}+\sqrt{\frac{2}{T}\log \frac{1}{\delta}} \\
&\leq \underset{\ell \sim \mathcal{L}}{\mathbb{E}} \ell\left(\boldsymbol\theta\right)+\frac{\mathbf{R}_{T}}{T}+2\sqrt{\frac{2}{T}\log \frac{1}{\delta}}.
\end{aligned}
\end{equation}
Note that this inequality also holds when applying weight averaging technique to the base-learner, because weight averaging is the linear combination of all history states.
\end{proof}

\subsection{ Proof of Theorem \ref{thm:2}}
\label{pf:proof-global-stability}

\textbf{Setup and notation:}
Let the multi-task training collections be
\(\mathcal{D}=(\mathcal{D}_a)_{a=1}^{|\mathcal{A}|}\) and
\(\mathcal{D}'=(\mathcal{D}_a^{\prime})_{a=1}^{|\mathcal{A}|}\),
differing in exactly one example (in some task \(a^\star\)).
Denote by \(\theta_a=\theta(\mathcal{D}_a)\) and \(\theta_a'=\theta(\mathcal{D}_a^{\prime})\) the base parameters, and
\[
\theta_g=\sum_{a=1}^{|\mathcal{A}|}\gamma_a\,\theta_a,
\qquad
\theta_g'=\sum_{a=1}^{|\mathcal{A}|}\gamma_a\,\theta_a'.
\]
Let \(\bar{\theta}\) be the previous global iterate.
We write \(f_{\theta_a},f_{\theta_a'},f_{\theta_g},f_{\theta_g'}\) for the corresponding predictors.

\begin{lemma}
\label{lem:mix}
For any \(z=(x,y)\) and nonnegative \(\{\gamma_a\}\) with \(\sum_{a=1}^{|\mathcal{A}|}\gamma_a=1\),
\begin{equation}
\begin{aligned}
&\big|\ell(\textstyle\sum_{a=1}^{|\mathcal{A}|}\gamma_a f_{\theta_a}(x),y)-
      \ell(\sum_{a=1}^{|\mathcal{A}|}\gamma_a f_{\theta_a'}(x),y)\big| \\&
\le
\sum_{a=1}^{|\mathcal{A}|}\gamma_a\,
\big|\ell(f_{\theta_a}(x),y)-\ell(f_{\theta_a'}(x),y)\big|.
\end{aligned}
\end{equation}
\end{lemma}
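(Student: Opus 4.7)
The plan is to prove the inequality by introducing a telescoping hybrid of mixtures that swaps one base learner at a time, and then bounding each per-slot change by a $\gamma_j$-fraction of the corresponding endpoint gap. Throughout I write $L(p):=\ell(p,y)$, $p_a:=f_{\theta_a}(x)$, and $p_a':=f_{\theta_a'}(x)$, so the target reads $|L(\sum_a\gamma_a p_a)-L(\sum_a\gamma_a p_a')|\le \sum_a \gamma_a |L(p_a)-L(p_a')|$.

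First, I would build the hybrid path $u_0,\dots,u_{|\mathcal{A}|}$ defined by
\[
u_j \;:=\; \sum_{a\le j}\gamma_a\,p_a' \;+\; \sum_{a>j}\gamma_a\,p_a,
\]
so $u_0=\sum_a\gamma_a p_a$, $u_{|\mathcal{A}|}=\sum_a\gamma_a p_a'$, and consecutive hybrids satisfy $u_{j-1}-u_j=\gamma_j(p_j-p_j')$. Telescoping combined with the usual triangle inequality yields
\[
|L(u_0)-L(u_{|\mathcal{A}|})| \;\le\; \sum_{j=1}^{|\mathcal{A}|}|L(u_{j-1})-L(u_j)|,
\]
which already matches the sum structure on the right-hand side of the claim; the remaining task is to bound each summand by $\gamma_j|L(p_j)-L(p_j')|$.

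Second, for each $j$, let $\tilde v_j:=(1-\gamma_j)^{-1}\big(\sum_{a<j}\gamma_a p_a'+\sum_{a>j}\gamma_a p_a\big)$ denote the shared background in the convex hull, so that $u_{j-1}=\gamma_j p_j+(1-\gamma_j)\tilde v_j$ and $u_j=\gamma_j p_j'+(1-\gamma_j)\tilde v_j$. The per-slot bound then reduces to a scalar affine-mixing contraction: for a convex surrogate $L$ and any $p,p',w$ with $\alpha\in[0,1]$,
\[
|L(\alpha p+(1-\alpha)w)-L(\alpha p'+(1-\alpha)w)| \;\le\; \alpha\,|L(p)-L(p')|.
\]
Applying this with $\alpha=\gamma_j$, $p=p_j$, $p'=p_j'$, $w=\tilde v_j$ and summing over $j$ finishes the argument.

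The hard part is exactly the affine-mixing contraction. It holds with equality when $L$ is affine in its argument, which is the canonical surrogate regime targeted by the generalization bound in Theorem~\ref{theorem:Theorem-1} (margin-style and linear-in-logit surrogates), but it can fail for strongly nonlinear convex losses. I would close this gap by invoking the local-stability hypothesis underlying the definition of $\epsilon$-stability: since $\theta_a$ and $\theta_a'$ differ by exactly one training example, the displacement $p_a-p_a'$ is small, and $L$ is replaced by its first-order expansion around $\tilde v_j$, making the contraction exact on the scale relevant to Theorem~\ref{thm:2}. Under this linearization -- or alternatively, under the explicit assumption that $\ell(\cdot,y)$ is taken as an affine surrogate on the convex hull of $\{p_a,p_a'\}_{a=1}^{|\mathcal{A}|}$, consistent with the convex-surrogate setup inherited from Section~\ref{sec:preliminaries} -- the per-slot contraction follows immediately and the telescoping sum delivers the claimed bound.
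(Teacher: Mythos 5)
Your argument is, step for step, the same as the paper's: the paper defines the hybrids $T_k=\sum_{j\le k}\gamma_j u_j+\sum_{j>k}\gamma_j v_j$, telescopes, writes $T_k$ and $T_{k-1}$ as $(1-\gamma_k)R_k+\gamma_k u_k$ and $(1-\gamma_k)R_k+\gamma_k v_k$ with a common remainder $R_k$, and then asserts the per-slot bound $|\phi(T_k)-\phi(T_{k-1})|\le\gamma_k|\phi(u_k)-\phi(v_k)|$ ``by convexity of $\phi$.'' The difference is that you correctly refuse to accept that last step from convexity alone, and your suspicion is well founded: convexity gives only the one-sided estimate $\phi\big((1-\gamma)R+\gamma u\big)\le(1-\gamma)\phi(R)+\gamma\phi(u)$, and without a matching lower bound on $\phi\big((1-\gamma)R+\gamma v\big)$ one cannot extract the claimed two-sided contraction. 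In fact the lemma as stated is false for general convex $[0,1]$-valued losses. Take $\phi(t)=t^2$ on $[-1,1]$, $|\mathcal{A}|=2$, $\gamma_1=\gamma_2=\tfrac12$, $u_1=1$, $v_1=-1$, $u_2=v_2=1$; then
\begin{equation}
\Big|\phi\Big(\tfrac{u_1+u_2}{2}\Big)-\phi\Big(\tfrac{v_1+v_2}{2}\Big)\Big|=|\phi(1)-\phi(0)|=1,
\qquad
\tfrac12|\phi(u_1)-\phi(v_1)|+\tfrac12|\phi(u_2)-\phi(v_2)|=0 ,
\end{equation}
so the right-hand side cannot dominate the left. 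The inequality does hold with equality when $\phi$ is affine, and a Lipschitz assumption would give a bound in terms of $\gamma_k\|u_k-v_k\|$ rather than $\gamma_k|\phi(u_k)-\phi(v_k)|$ --- a genuinely different right-hand side.

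So the ``hard part'' you isolate is exactly the gap in the paper's own proof, and your two proposed patches (taking $\ell(\cdot,y)$ affine on the relevant convex hull, or linearizing $\ell$ around the shared background $\tilde v_j$ and arguing the displacement $p_a-p_a'$ is first-order small in the single-example-swap regime) are reasonable ways to salvage the statement; the paper would need one of these added hypotheses as well. Be aware, though, that the linearization route changes the nature of the claim: it makes the per-slot contraction hold only up to a second-order error in $\|p_j-p_j'\|$, so the lemma would then hold approximately rather than exactly, and that error would have to be propagated into the constant $C$ of Theorem~\ref{thm:2} alongside the curvature term already present there. If you go the affine-surrogate route instead, state it as an explicit hypothesis of the lemma, since ``convex surrogate'' as inherited from Section~\ref{sec:preliminaries} is not enough.
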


\begin{proof}[Proof of Lemma~\ref{lem:mix}]
Fix $z=(x,y)$ and let $\phi(u):=\ell(u,y)$. Define
\begin{equation}
u_a := f_{\theta_a}(x),\; 
v_a := f_{\theta_a'}(x),\; 
U := \sum_{a=1}^{|\mathcal{A}|}\gamma_a\,u_a,\; 
V := \sum_{a=1}^{|\mathcal{A}|}\gamma_a\,v_a .
\end{equation}

We use the classical one-by-one swap technique to apply convexity once at each step to form the entire summation, starting from:

\begin{equation}
T_0:=V=\sum_{a=1}^{|\mathcal{A}|}\gamma_a v_a,\;
\end{equation}
and then scaling to:
\begin{equation}
T_k:=\sum_{j\le k}\gamma_j u_j+\sum_{j>k}\gamma_j v_j\quad (k=1,\dots,|\mathcal{A}|),
\end{equation}
so that $T_{{|\mathcal{A}|}}=U$ and

\begin{equation}
\begin{aligned}
&\phi(T_{{|\mathcal{A}|}})-\phi(T_0)
 = \sum_{a=1}^{|\mathcal{A}|}\bigl(\phi(T_k)-\phi(T_{k-1})\bigr)\\
&\Rightarrow\ |\phi(U)-\phi(V)|
 \le \sum_{a=1}^{|\mathcal{A}|}\bigl|\phi(T_k)-\phi(T_{k-1})\bigr|.
\end{aligned}
\end{equation}

Fix $k\in\{1,\dots,{|\mathcal{A}|}\}$ and write the common remainder as
\[
R_k:=\frac{1}{1-\gamma_k}\sum_{j\ne k}\gamma_j w_j,\;
w_j=\begin{cases}
u_j,& j<k,\\
v_j,& j>k.
\end{cases}
\]
Then
\[
T_k=(1-\gamma_k)R_k+\gamma_k u_k,\;
T_{k-1}=(1-\gamma_k)R_k+\gamma_k v_k.
\]
By convexity of $\phi$,
\[
\big|\phi(T_k)-\phi(T_{k-1})\big|\le \gamma_k\,\big|\phi(u_k)-\phi(v_k)\big|.
\]
Summing over $k$ and using the triangle inequality gives
\[
\big|\phi(U)-\phi(V)\big|
\le \sum_{a=1}^{|\mathcal{A}|}\gamma_k\,\big|\phi(u_k)-\phi(v_k)\big|.
\]
Unfolding $\phi$, $U$, and $V$ completes the proof:
\begin{equation}
\begin{aligned}
&\bigl|\ell(\textstyle\sum_{a=1}^{|\mathcal A|}\gamma_a f_{\theta_a}(x),y)
      -\ell(\sum_{a=1}^{|\mathcal A|}\gamma_a f_{\theta_a'}(x),y)\bigr| \\&
\le
\sum_{a=1}^{|\mathcal A|}\gamma_a\,
\bigl|\ell(f_{\theta_a}(x),y)-\ell(f_{\theta_a'}(x),y)\bigr|.
\end{aligned}
\end{equation}
.
\end{proof}

\begin{proof}[Proof of Theorem~\ref{thm:2}]
Now let's start to prove Theorem~\ref{thm:2}. The entire proof can be divided into the following four steps: \\
\noindent\textbf{Step 1: Three-term decomposition.}
For \(z=(x,y)\in\mathcal{T}\), define
\(\widetilde F_{\mathcal{D}}(x)=\sum_{a=1}^{|\mathcal{A}|}\gamma_a f_{\theta_a}(x)\) and
\(\widetilde F_{\mathcal{D}'}(x)=\sum_{a=1}^{|\mathcal{A}|}\gamma_a f_{\theta_a'}(x)\).
By the triangle inequality,
\begin{equation}
\begin{aligned}
\big|\ell(f_{\theta_g},z)-\ell(f_{\theta_g'},z)\big|
&\le
\underbrace{\big|\ell(f_{\theta_g},z)-\ell(\widetilde F_{\mathcal{D}},z)\big|}_{\text{(I)}} \\&
+
\underbrace{\big|\ell(\widetilde F_{\mathcal{D}},z)-\ell(\widetilde F_{\mathcal{D}'},z)\big|}_{\text{(II)}} \\&
+
\underbrace{\big|\ell(\widetilde F_{\mathcal{D}'},z)-\ell(f_{\theta_g'},z)\big|}_{\text{(III)}}.
\label{eq:three-terms}
\end{aligned}   
\end{equation}

\noindent\textbf{Step 2: Middle term via per-task \(\epsilon_a\)-stability.}
By Lemma~\ref{lem:mix} and Definition~2 applied within task \(a\),
\[
\text{(II)}\ \le\
\sum_{a=1}^{|\mathcal{A}|}\gamma_a\,\epsilon_a
\ :=\ \varepsilon_{\oplus}.
\]

\noindent\textbf{Step 3: End terms via a second-order mixing gap.}
We only use a \emph{local} regularity near the current iterates: once training has reached a certain level, the parameter
trajectory stays in a small neighborhood where (i) the loss has bounded prediction-gradient $L$, for predictions attained by the models; and (ii) along the short line segments that connect
\(\bar{\theta}\) to \(\theta_a\) and to \(\theta_g\), the network output admits a bounded parametric curvature with some constant \(M\).
Consequently,
\[
\text{(I)}\ \le\ L\,\big\|f_{\theta_g}(x)-\widetilde F_{\mathcal D}(x)\big\|,\qquad
\text{(III)}\ \le\ L\,\big\|\widetilde F_{\mathcal D'}(x)-f_{\theta_g'}(x)\big\|.
\]

\noindent\emph{Explicit Taylor expansions.}
For any $x$, expand $f_{\theta_a}(x)$ and $f_{\theta_g}(x)$ at $\bar{\theta}$ with the integral remainder:
\begin{equation}
\begin{aligned}
f_{\theta_a}(x)
&= f_{\bar{\theta}}(x)+J_{\bar{\theta}}(x)(\theta_a-\bar{\theta})
\\& + \underbrace{\int_0^1 (1-t)\,(\theta_a-\bar{\theta})^\top
   H_x\!\big(\bar{\theta}+t(\theta_a-\bar{\theta})\big)\,(\theta_a-\bar{\theta})\,dt}_{=:~r_a(x)},\\[2pt]
f_{\theta_g}(x)
&= f_{\bar{\theta}}(x)+J_{\bar{\theta}}(x)(\theta_g-\bar{\theta})
\\& + \underbrace{\int_0^1 (1-t)\,(\theta_g-\bar{\theta})^\top
   H_x\!\big(\bar{\theta}+t(\theta_g-\bar{\theta})\big)\,(\theta_g-\bar{\theta})\,dt}_{=:~r_g(x)},
\end{aligned}
\end{equation}
where $J_{\bar{\theta}}(x)$ is the Jacobian $\nabla_\theta f_\theta(x)|_{\theta=\bar{\theta}}$ and
$H_x(\cdot)$ is the parametric Hessian $\nabla^2_{\theta\theta} f_\theta(x)$.
By (ii), $\|r_a(x)\|\le \tfrac{M}{2}\|\theta_a-\bar{\theta}\|^2$ and
$\|r_g(x)\|\le \tfrac{M}{2}\|\theta_g-\bar{\theta}\|^2$.
Since $\theta_g=\sum_{a=1}^{|\mathcal A|}\gamma_a\theta_a$, the linear terms cancel, and thus
\begin{equation}
\begin{aligned}
&\big\|f_{\theta_g}(x)-\widetilde F_{\mathcal D}(x)\big\|
= \big\|r_g(x)-\textstyle\sum_{a=1}^{|\mathcal A|}\gamma_a r_a(x)\big\| \\&
\le \tfrac{M}{2}\!\left(\|\theta_g-\bar{\theta}\|^2+\sum_{a=1}^{|\mathcal A|}\gamma_a\|\theta_a-\bar{\theta}\|^2\right).
\end{aligned}
\end{equation}
Using $\|\theta_g-\bar{\theta}\|^2
= \big\|\sum_{a=1}^{|\mathcal A|}\gamma_a(\theta_a-\bar{\theta})\big\|^2
\le \sum_{a=1}^{|\mathcal A|}\gamma_a\|\theta_a-\bar{\theta}\|^2$ gives the compact bound
\[
\big\|f_{\theta_g}(x)-\widetilde F_{\mathcal D}(x)\big\|
\le M\sum_{a=1}^{|\mathcal A|}\gamma_a\|\theta_a-\bar{\theta}\|^2,
\]
and the same bound holds with $\mathcal D$ replaced by $\mathcal D'$.
Hence,
\begin{equation}
\begin{aligned}
\text{(I)}+\text{(III)}
\ \le\ 2LM\sum_{a=1}^{|\mathcal A|}\gamma_a\|\theta_a-\bar{\theta}\|^2.
\label{eq:mix-gap}
\end{aligned}
\end{equation}

\noindent\textbf{Step 4: Taking suprema to obtain uniform stability.}
Combining \eqref{eq:three-terms}, \eqref{eq:mix-gap} and taking the supremum over \(z\in\mathcal{T}\) and over all neighboring
\(\mathcal{D},\mathcal{D}'\) (differing in one example), we obtain the global uniform stability constant
\[
\varepsilon_g
\ \le\
\varepsilon_{\oplus}\;+\;C
\sum_{a=1}^{|\mathcal{A}|}\gamma_a\|\theta_a-\bar{\theta}\|^2, \quad C:=2\,L\,M,
\]
which matches the statement in Theorem~2. 
\end{proof}

\begin{figure*}[!t]
    \centering
  \begin{minipage}{0.48\linewidth}
    \makebox[0.48\linewidth]{\includegraphics[width=0.48\linewidth]{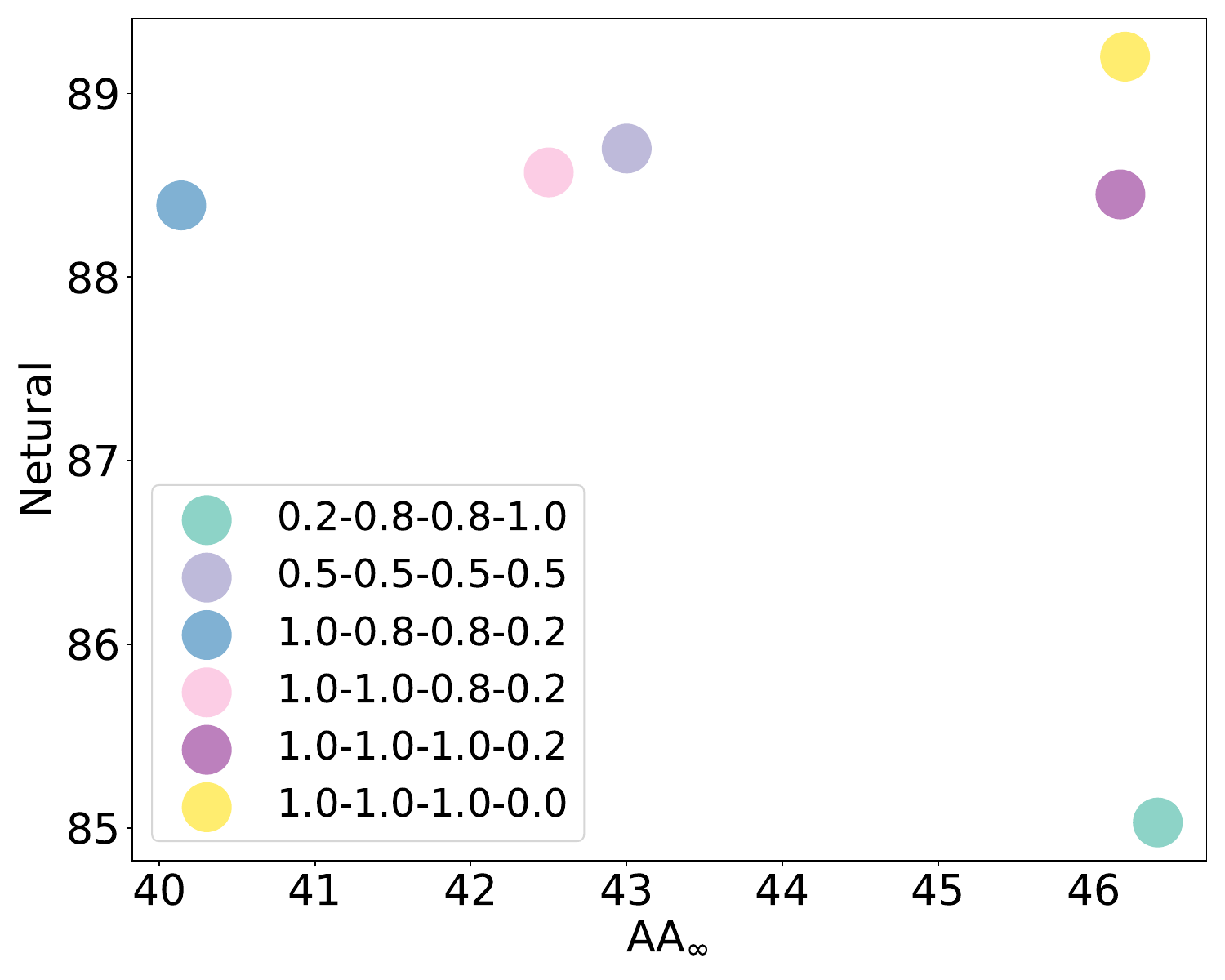}}
    \makebox[0.49\linewidth]{\includegraphics[width=0.49\linewidth]
{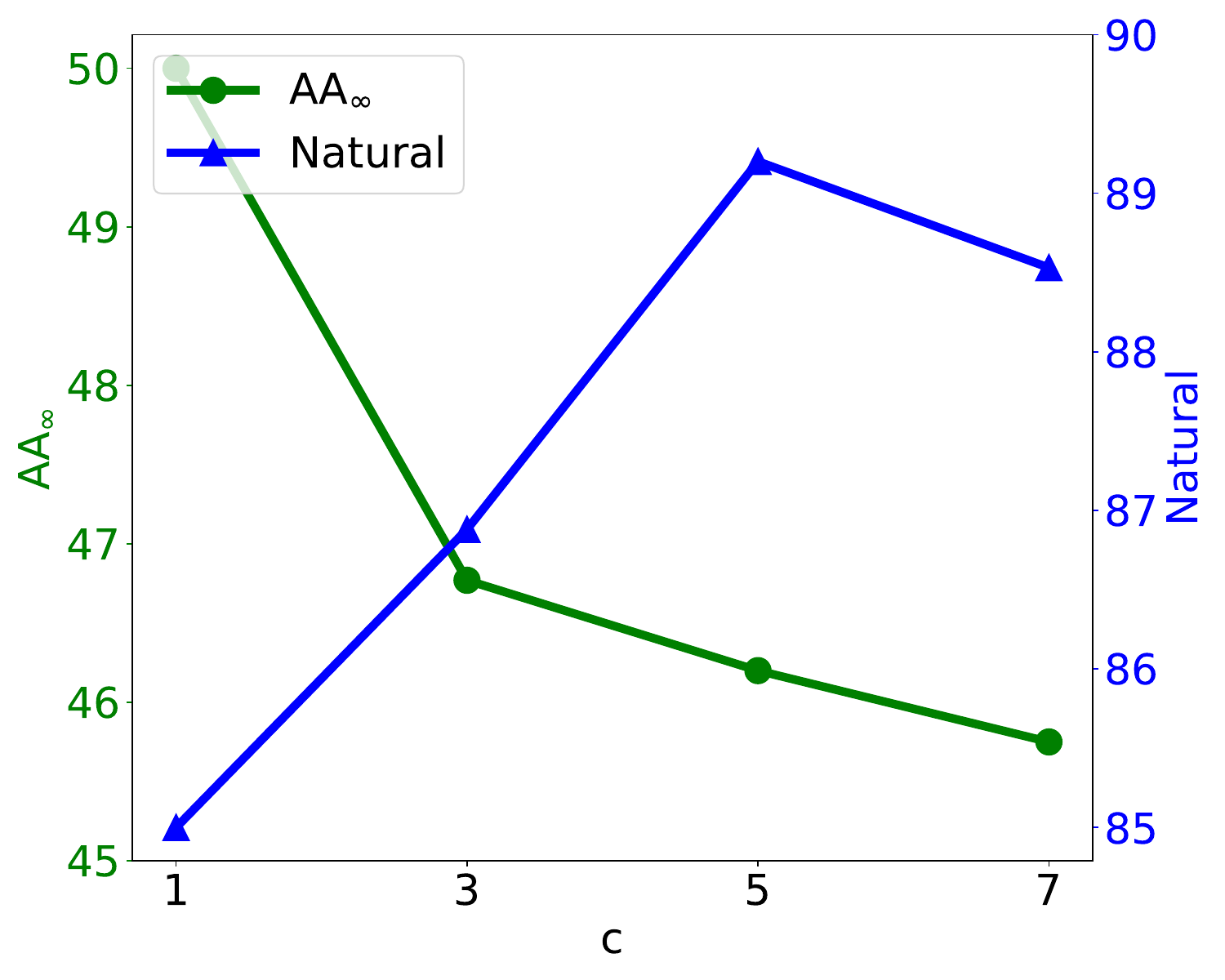}}%
    
    \makebox[1.0\linewidth]{\small (a) Generalist-D ($NT+\ell_\infty$)}%
  \end{minipage}%
    \hspace{15pt}
    \begin{minipage}{0.48\linewidth}
    \makebox[0.48\linewidth]{\includegraphics[width=0.48\linewidth]{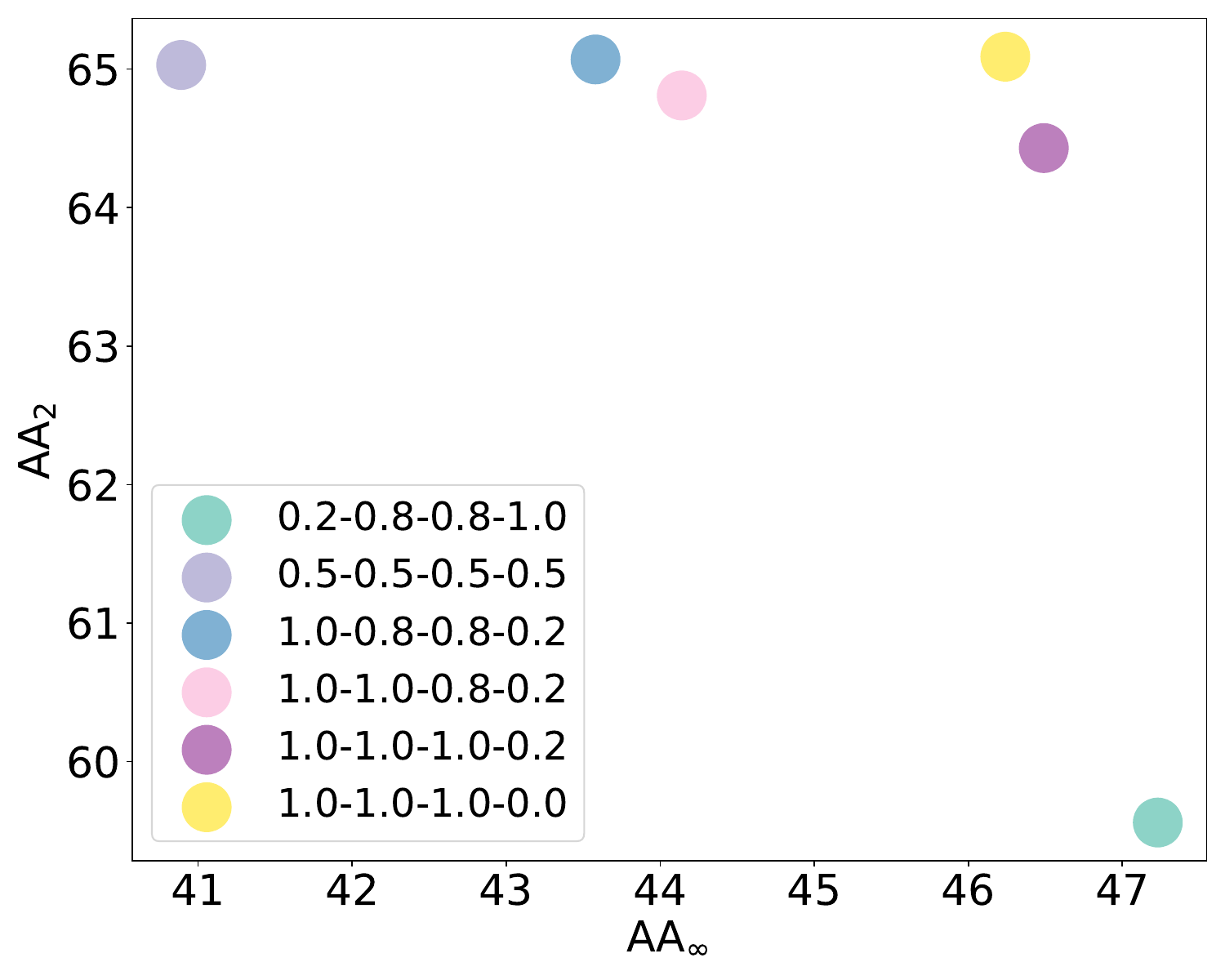}}%
    \hspace{2pt}
    \makebox[0.49\linewidth]{\includegraphics[width=0.49\linewidth]{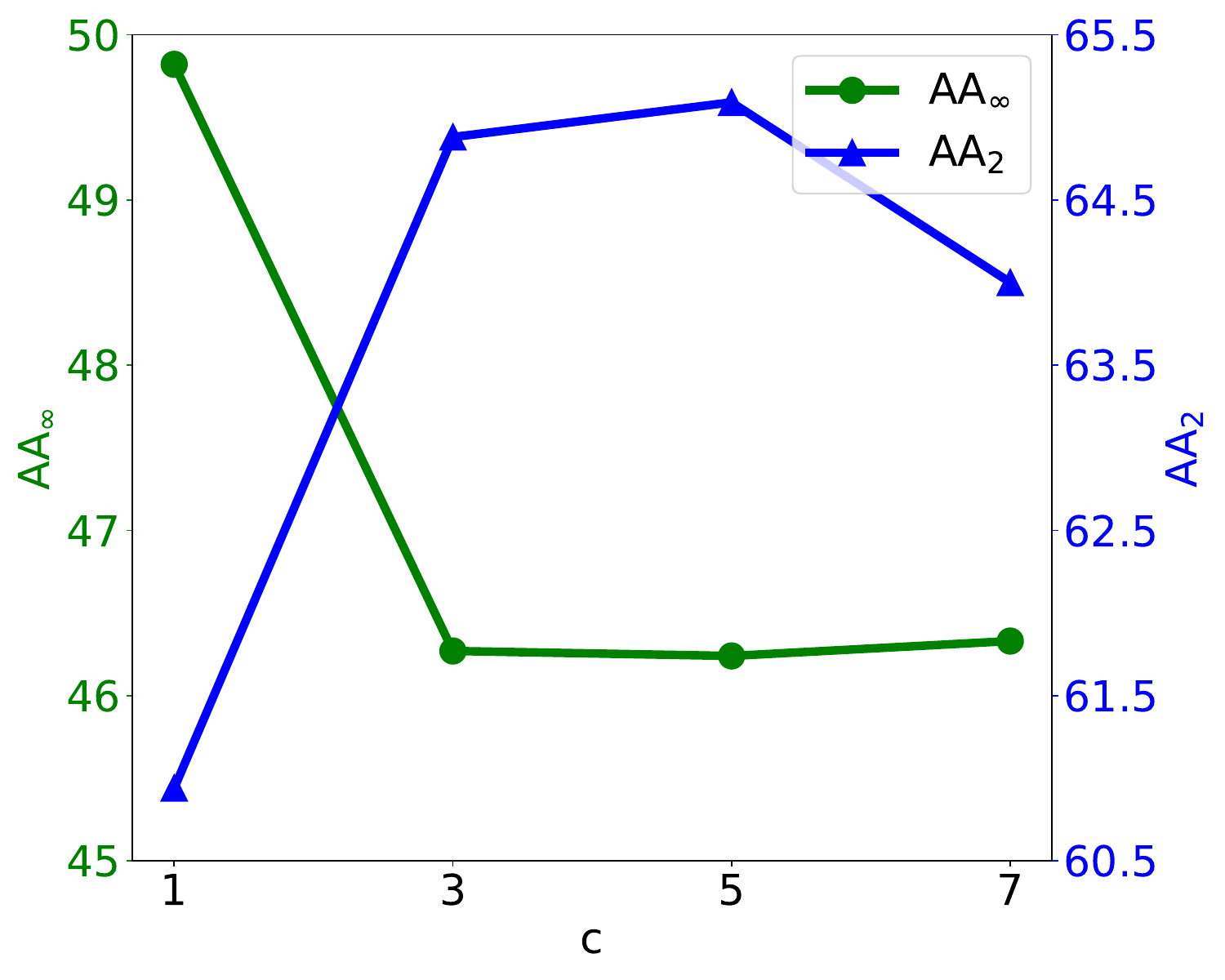}}
    \makebox[1.0\linewidth]{\small (b) Generalist-D ($\ell_\infty+\ell_2$)}%
  \end{minipage}%
    \caption{{The performances of Generalist-D with different mixing ratio strategies, \textit{i.e.} $\gamma_1$, and various values of communication frequency, \textit{i.e.} $c$, on the CIFAR-10 dataset. We evaluate Generalist-D ($NT+\ell_\infty$) with AA$_\infty$ and natural accuracy since it is designed to alleviate the natural-robustness tradeoff. For Generalist-D ($\ell_\infty+\ell_2$), AA$_\infty$ and AA$_2$ are chosen as metrics to investigate the influence of hyperparameter configurations on the robustness against multi-norm constraints.}}\label{fig:c_and_m}
    \vspace{-10pt}
\end{figure*}

\begin{figure}[t]
  \begin{minipage}{1.0\linewidth}
    \makebox[.5\linewidth]{\includegraphics[width=.5\linewidth]{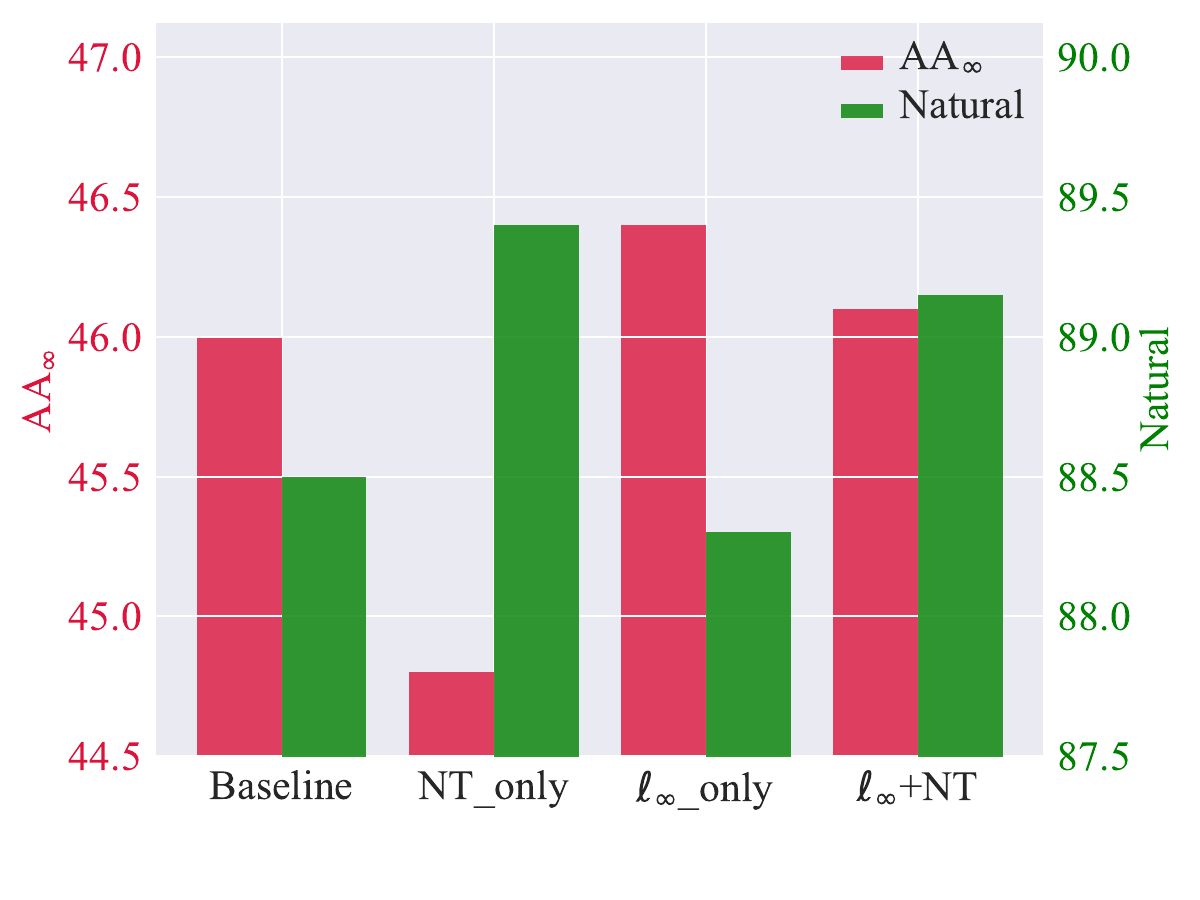}}%
    \makebox[.5\linewidth]{\includegraphics[width=.5\linewidth]{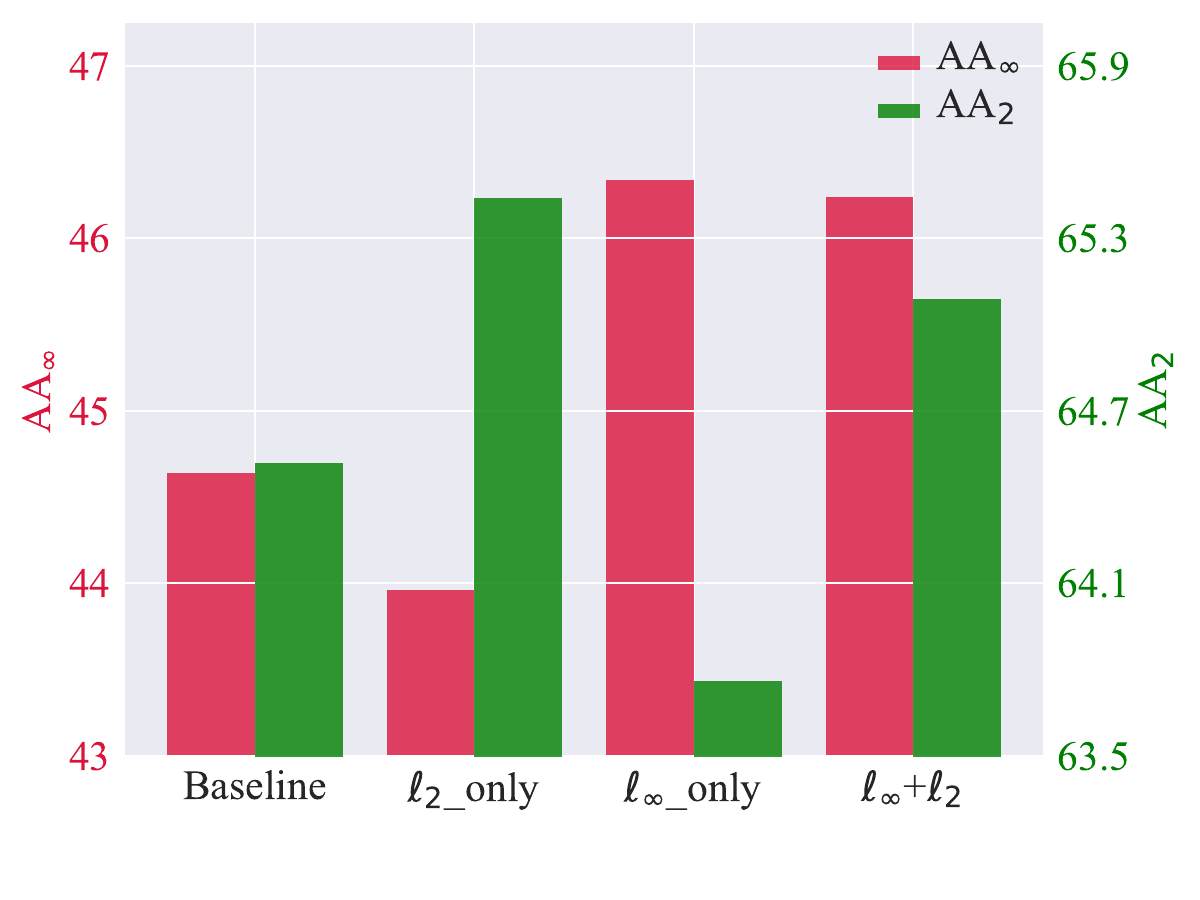}}%
    
    {\tiny\makebox[.5\linewidth]{\small{(a) Generalist-D ($NT+\ell_\infty$)}}}%
    {\tiny\makebox[.5\linewidth]{\small (b) Generalist-D ($\ell_\infty+\ell_2$)}}%
  \end{minipage}%

\caption{Base learners of Generalist-D applied with weight averaging on one or both of them. Using weight averaging through training can bring a performance boost in its corresponding sub-task, and thus has an effect on predictions of the global learner.}
\vspace{-10pt}
\label{fig:wa_chose}
\end{figure}

\section{Ablation Study for Generalist-D}
\label{app:abla_d}
Similar to Generalist-T, the mixing ratios and the communication frequency also control the trade-off of Generalist-D between the natural accuracy and robustness across norms. However, the difference is that the mixing ratio of Generalist-D is composed of only one scalar, $\gamma_1$, which is much easier for analysis. In the left images of both Figure \ref{fig:c_and_m} (a) and (b), we tune $\gamma_1$ with the same settings in Generalist-T. We have the exact same findings with those on the Generalist-T. Firstly, tuning  $\gamma_1$ in a descending order is the best choice if we aim at achieving satisfying performances in both perspectives. In addition, decaying the $\gamma_1$ early will also bring negative effects to the overall performances since noisy information will be brought to the overall framework if base learners are less specialized in their domains. The same phenomenon could also be extended from Generalist-T to Generalist-D regarding the communication frequency, $c$.

\vspace{-5pt}
\begin{figure}[t]
  \begin{minipage}{1.0\linewidth}
    \makebox[.5\linewidth]{\includegraphics[width=.5\linewidth]{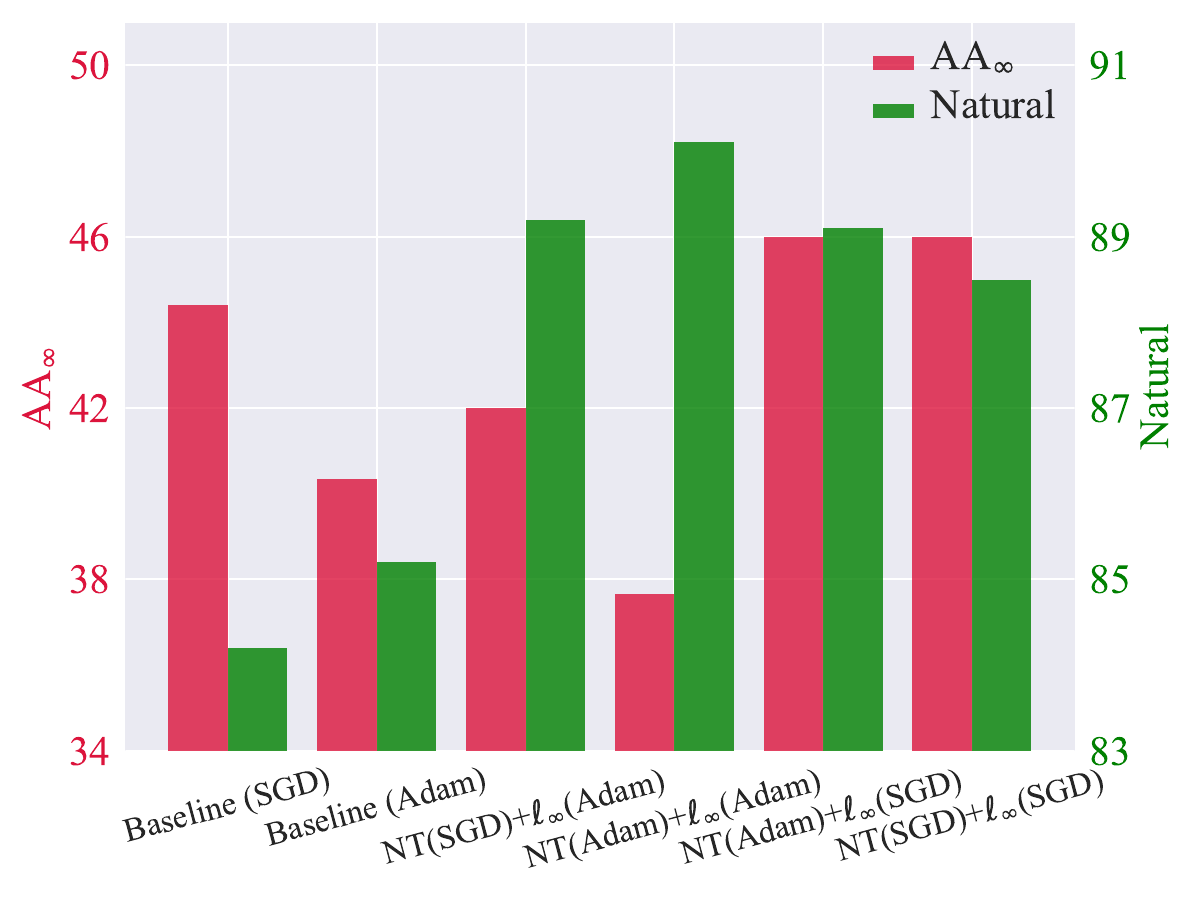}}%
    \hfill
    \makebox[.5\linewidth]{\includegraphics[width=.5\linewidth]{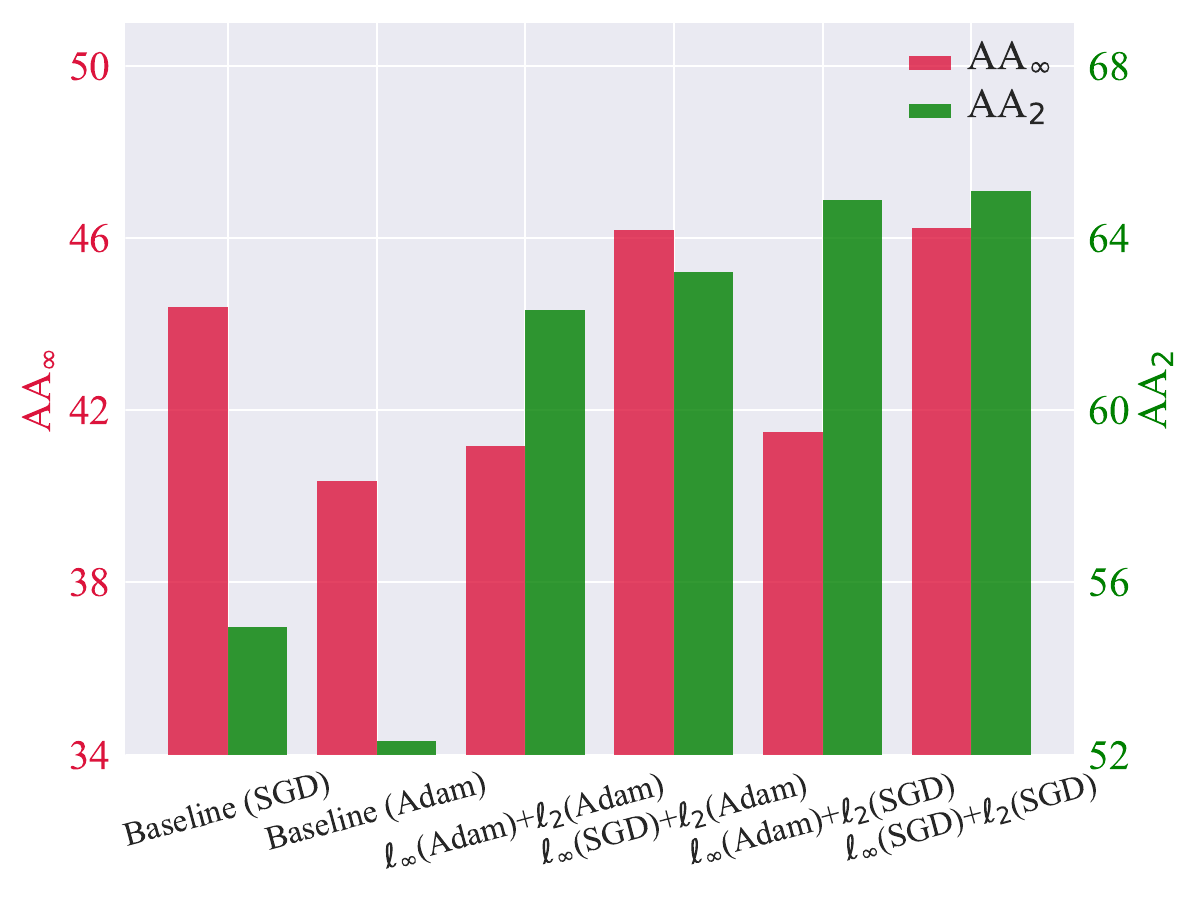}}%
    
    {\tiny\makebox[.5\linewidth]{\small{(a) Generalist-D ($NT+\ell_\infty$)}}}%
    {\tiny\makebox[.5\linewidth]{\small (b) Generalist-D ($\ell_\infty+\ell_2$)}}
  \end{minipage}%

\caption{ Base learners of Generalist-D optimized by different optimizers. The optimal selection is using Adam for the natural classification task but maintaining SGD for the adversarial training under the $\ell_\infty$ or $\ell_2$ norm.}\label{fig:optim_chose}
  \vspace{-10pt}
\end{figure}

\section{Customized Policies for Individual in Generalist-D} 
\label{app:double_policy}

In this section, we investigate customized policy for each base learners whether also work well for Generalist-D. Similar to Generalist-T, we study it from the perspective of weight averaging and different optimizer configurations.

\textbf{Weight Averaging.} As shown in Figure \ref{fig:wa_chose}, we evaluate the performance of the global learner with applying weight averaging on one base learner or all of them. The results manifest that when weight averaging is applied simultaneously to all base learners, we see an improvement in all aspects. Nevertheless, due to the influence of mismatched learning speeds, applying the weight averaging on a single learner will achieve unsatisfying performances in other aspects.

\textbf{Different Optimizers.} In Figure \ref{fig:optim_chose}, we also compare the performances of Generalist-D across diverse settings of optimizers. Comparing to AT with the SGD optimizer, AT with the Adam optimizer will compromise the robustness. In contrast, Adam is a better choice for the natural training. However, due to the decoupling property of Generalist-D, we can choose the customized optimizer for each base learner: it addresses the trade-off issue well by achieving outstanding performances in all dimensions. 

\end{document}